\documentclass[11pt]{article}

\usepackage{thm-restate}
\usepackage{vitercik}
\usepackage{hyperref}
\usepackage{subcaption}
\usepackage{tikz}

\newcommand{\score}{\texttt{score}}
\newcommand{\cost}{\texttt{cost}}

\newcommand{\domain}{\mathcal{Z}}
\newcommand{\fclass}{\mathcal{F}}
\newcommand{\tree}{\mathcal{T}}

\newcommand{\node}{Q}

\newcommand{\reals}{\mathbb{R}}
\newcommand{\sgn}{\textnormal{sign}}
\newcommand{\ddeg}{\textnormal{ddeg}}
\newcommand*\circled[1]{\tikz[baseline=(char.base)]{
            \node[shape=circle,draw,inner sep=1pt] (char) {#1};}}
\newcommand{\pdim}{\textnormal{Pdim}}
\newcommand{\loc}{\texttt{localFathom}}
\newcommand{\glo}{\texttt{globalFathom}}
\newcommand{\fathom}{\textnormal{fathom}}
\newcommand{\explore}{\textnormal{explore}}

\makeatletter
\newcommand{\algmargin}{\the\ALG@thistlm}
\makeatother
\newlength{\whilewidth}
\settowidth{\whilewidth}{\algorithmicwhile\ }
\newlength{\ifwidth}
\settowidth{\ifwidth}{\algorithmicif\ }
\algdef{SE}[parWHILE]{parWhile}{EndparWhile}[1]
  {\parbox[t]{\dimexpr\linewidth-\algmargin}{%
     \hangindent\whilewidth\strut\algorithmicwhile\ #1\ \algorithmicdo\strut}}{\algorithmicend\ \algorithmicwhile}%
\algdef{SE}[parIF]{parIf}{EndparIf}[1]
  {\parbox[t]{\dimexpr\linewidth-\algmargin}{%
     \hangindent\ifwidth\strut\algorithmicif\ #1\ \algorithmicdo\strut}}{\algorithmicend\ \algorithmicif}%
\algnewcommand{\parState}[1]{\State%
  \parbox[t]{\dimexpr\linewidth-\algmargin}{\strut #1\strut}}
\algtext*{EndparIf}

\begin{document}

\title{Learning to Branch\footnote{Authors' addresses: \texttt{\{ninamf, tdick, sandholm, vitercik\}@cs.cmu.edu.}}}
\author{Maria-Florina Balcan \and Travis Dick \and Tuomas Sandholm \and Ellen Vitercik}
\maketitle

\begin{abstract}
Tree search algorithms, such as branch-and-bound, are the most widely used tools for solving combinatorial and nonconvex problems. For example, they are the foremost method for solving (mixed) integer programs and constraint satisfaction problems. Tree search algorithms recursively partition the search space to find an optimal solution. In order to keep the tree size small, it is crucial to carefully decide, when expanding a tree node, which question (typically variable) to branch on at that node in order to partition the remaining space. Numerous partitioning techniques (e.g., variable selection) have been proposed, but there is no theory describing which technique is optimal. We show how to use machine learning to determine an optimal weighting of any set of partitioning procedures for the instance distribution at hand using samples from the distribution. We provide the first sample complexity guarantees for tree search algorithm configuration. These guarantees bound the number of samples sufficient to ensure that the empirical performance of an algorithm over the samples nearly matches its expected performance on the unknown instance distribution. This thorough theoretical investigation naturally gives rise to our learning algorithm. Via experiments, we show that learning an optimal weighting of partitioning procedures can dramatically reduce tree size, and we prove that this reduction can even be exponential. Through theory and experiments, we show that learning to branch is both practical and hugely beneficial.
\end{abstract}

\section{Introduction}

Many widely-used algorithms are customizable: they have tunable parameters that have an enormous effect on runtime, solution quality, or both. Tuning parameters by hand is notoriously tedious and time-consuming. In this work, we study algorithm configuration via machine learning, where the goal is to design algorithms that learn the optimal parameter setting for the problem instance distribution at hand.

We study configuration of tree search algorithms. These algorithms are the most widely used tools for solving  combinatorial and nonconvex problems throughout artificial intelligence, operations research, and beyond (e.g., \citep{Russell10:Artificial, Williams13:Model}). For example, branch-and-bound (B\&B) algorithms \citep{Land60:Automatic} solve mixed integer linear programs (MILPs),  and thus have diverse applications, including ones in
machine learning such as MAP estimation~\citep{Kappes13:Towards}, object recognition~\citep{Kokkinos11:Rapid}, clustering~\citep{Komodakis09:Clustering}, and semi-supervised SVMs~\citep{Chapelle07:Branch}.

A tree search algorithm systematically partitions the search space to find an optimal solution. The algorithm organizes this partition via a tree: the original problem is at the root and the children of a given node represent the subproblems formed by partitioning the feasible set of the parent node. A branch is pruned if it is infeasible or it cannot produce a better solution than the best one found so far by the algorithm.
Typically the search space is partitioned by adding an additional constraint on some variable.  For example, suppose the feasible set is defined by the constraint $A\vec{x} \leq \vec{b}$, with $\vec{x} \in \{0,1\}^n$. A tree search algorithm might partition this feasible set into two sets, one where $A \vec{x} \leq \vec{b}$, $x_1 = 0$, and $x_2, \dots, x_n \in \{0,1\}$, and another where $A \vec{x} \leq \vec{b}$, $x_1 = 1$, and $x_2, \dots, x_n \in \{0,1\}$, in which case the algorithm has \emph{branched on} $x_1$.
A crucial question in tree search algorithm design is determining which variable to branch on at each step. An effective variable selection policy can have a tremendous effect on the size of the tree. Currently, there is no known optimal strategy and the vast majority of existing techniques are backed only by empirical comparisons. In the worst-case, finding an approximately optimal branching variable, even at the root of the tree alone, is NP-hard. This is true even in the case of satisfiability, which is a special case of constraint satisfaction and of
MILP \citep{Liberatore00:Complexity}.

In this work, rather than attempt to characterize a branching strategy that is universally optimal, we show empirically and theoretically that it is possible to learn high-performing branching strategies for a given application domain. We model an application domain as a distribution over problem instances, such as a distribution over scheduling problems that an airline solves on a day-to-day basis. This model is standard throughout the algorithm configuration literature (e.g.,~\citep{Hutter09:Paramils, Hutter11:Sequential, Dai17:Learning, Kleinberg17:Efficiency}). The approach has also been used on large-scale problems in industry to configure and select winner determination algorithms for clearing tens of billions of dollars of combinatorial auctions~\citep{Sandholm13:Very-Large-Scale}. The algorithm designer does not know the underlying distribution over problem instances, but has sample access to the distribution. We show how to use samples from the distribution to learn a variable selection policy that will result in as small a search tree as possible in expectation over the underlying distribution.

Our learning algorithm adaptively partitions the parameter space of the variable selection policy into regions where for any parameter in a given region, the resulting tree sizes across the training set are invariant. The learning algorithm returns the empirically optimal parameter over the training set, and thus performs empirical risk minimization (ERM). We prove that the adaptive nature of our algorithm is necessary: performing ERM over a data-independent discretization of the parameter space can be disastrous. In particular, for any discretization of the parameter space, we provide an infinite family of distributions over MILP instances such that every point in the discretization results in a B\&B tree with exponential size in expectation, but there exist infinitely-many parameters outside of the discretized points that result in a tree with constant size with probability 1. A small change in parameters can thus cause a drastic change in the algorithm's behavior. This fact contradicts conventional wisdom. For example, SCIP, the best  open-source MILP solver, sets one of the parameters we investigate to 5/6, regardless of the input MILP's structure. \citet{Achterberg09:SCIP} wrote that 5/6 was empirically optimal when compared against four other data-independent values. In contrast, our analysis shows that a data-driven approach to parameter tuning can have an enormous benefit.

The sensitivity of tree search algorithms to small changes in their parameters is a key challenge that differentiates our sample complexity analysis from those typically found in machine learning. For many well-understood function classes in machine learning, there is a close connection between the distance in parameter space between two parameter vectors and the distance in function space between the two corresponding functions. Understanding this connection is a necessary prerequisite to analyzing how many significantly different functions there are in the class, and thereby quantifying the class's intrinsic complexity. Intrinsic complexity typically translates to VC dimension, Rademacher complexity, or some other metric which allows us to derive learnability guarantees. Since the tree size of a search algorithm as a function of its parameters does not exhibit this predictable behavior, we must carefully analyze the way in which the parameters influence each step of the procedure in order to derive learning algorithms with strong guarantees.
In doing so, we present the first sample complexity guarantees for automated configuration of tree search algorithms. We provide worst-case bounds proving that a surprisingly small number of samples are sufficient for strong learnability guarantees: the sample complexity bound grows quadratically in the size of the problem instance, despite the complexity of the algorithms we study.

In our experiments section, we show that on many datasets based on real-world NP-hard problems, different parameters can result in B\&B trees of vastly different sizes.
Using an optimal parameter for one distribution on problems from a different distribution can lead to a dramatic tree size blowup.
We also provide data-dependent generalization guarantees that allow the algorithm designer to use far fewer samples than in the worst case if the data is well-structured.

\subsection{Related work}

Several works have studied the use of machine learning techniques in the context of B\&B; for an overview, see the summary by \citet{Lodi17:Learning}.

As in this work, \citet{Khalil16:Learning} study variable selection policies. Their goal is to find a variable selection strategy that mimics the behavior of the classic branching strategy known as \emph{strong branching} while running faster than strong branching. \citet{Alvarez17:Machine} study a similar problem, although in their work, the feature vectors in the training set describe nodes from multiple MILP instances. Neither or these works come with any theoretical guarantees, unlike our work.

Several other works study data-driven variable selection from a purely experimental perspective. \citet{DiLiberto16:Dash}
devise an algorithm that learns how to dynamically switch
between different branching heuristics along the branching tree. \citet{Karzan09:Information} propose techniques for choosing problem-specific branching rules based on a partial B\&B tree. Ideally, these branching rules will choose variables that will lead to fast fathoming. They do not rely on any techniques from machine learning. In the context of CSP tree search, \citet{Xia18:Learning} apply existing multi-armed bandit algorithms
to learning variable selection policies during tree search and \citet{Balafrej15:Multi} use a bandit approach to select different levels of propagation during search.

Other works have explored the use of machine learning techniques in the context of other aspects of B\&B beyond variable selection. For example, \citet{He14:Learning} use machine learning to speed up branch-and-bound, focusing on speeding up the node selection policy. Their work does not provide any learning-theoretic guarantees.
Other works that have studied machine learning techniques for branch-and-bound problems other than variable selection include \citet{Sabharwal12:Guiding}, who also study how to devise node selection policies, \citet{Hutter09:Paramils}, who study how to set CPLEX parameters, \citet{Kruber17:Learning}, who study how to detect decomposable model structure, and \citet{Khalil17:Learning}, who study how to determine when to run heuristics.

From a theoretical perspective, \citet{LeBodic17:Abstract} present a theoretical model for the selection of branching variables. It is based upon an abstraction of MIPs to a simpler setting in which it is possible to analytically evaluate the dual bound improvement of choosing a given variable. Based on this model, they present a new variable selection policy which has strong performance on many MIPLIB instances. Unlike our work, this paper is unrelated to machine learning.

The learning-theoretic model of algorithm configuration that we study in this paper was introduced to the theoretical computer science community by \citet{Gupta17:PAC}. Under this model, an application domain is modeled as a distribution over problem instances and the goal is to PAC-learn an algorithm that is nearly optimal over the distribution. This model was later studied by \citet{Balcan17:Learning} as well.
These papers were purely theoretical. In contrast, we show that the techniques proposed in this paper are practical as well, and provide significant benefit.

We provide a more detailed description of several of these papers in Appendix~\ref{app:related}.

\section{Tree search}

Tree search is a broad family of algorithms with diverse applications. 
To exemplify the specifics of tree search, we present a vast family of NP-hard problems --- (mixed) integer linear programs --- and describe how tree search finds optimal solutions to problems from this family. Later on in Section~\ref{sec:CSP}, we provide another example of tree search for constraint satisfaction problems. In Appendix~\ref{app:TS}, we provide a formal, more abstract definition of tree search and generalize our results to this more general algorithm.
\subsection{Mixed integer linear programs}

We study \emph{mixed integer linear programs} (MILPs)
where the objective is to maximize $\vec{c}^\top \vec{x}$ subject to $A\vec{x} \leq \vec{b}$ and where some of the entries of $\vec{x}$ are constrained to be in $\{0,1\}$.
Given a MILP $Q$, we denote an optimal solution to the LP relaxation of $Q$ as $\breve{\vec{x}}_Q = \left(\breve{x}_{Q}[1], \dots \breve{x}_{Q}[n]\right)$. Throughout this work, given a vector $\vec{a}$, we use the notation $a[i]$ to denote the $i^{th}$ component of $\vec{a}$. We also use the notation $\breve{c}_Q$ to denote the optimal objective value of the LP relaxation of $Q$. In other words, $\breve{c}_Q = \vec{c}^{\top} \breve{\vec{x}}_Q$.

\begin{example}[Winner determination]\label{ex:WD}
Suppose there is a set $\{1, \dots, m\}$ of items for sale and a set $\{1, \dots, n\}$ of buyers. In a combinatorial auction, each buyer $i$ submits bids $v_i(b)$ for any number of bundles $b \subseteq \{1, \dots, m\}$. The goal of the winner determination problem is to allocate the goods among the bidders so as to maximize \emph{social welfare}, which is the sum of the buyers' values for the bundles they are allocated. We can model this problem as a MILP by assigning a binary variable $x_{i,b}$ for every buyer $i$ and every bundle $b$ they submit a bid $v_i(b)$ on. The variable $x_{i,b}$ is equal to 1 if and only if buyer $i$ receives the bundle $b$. Let $B_i$ be the set of all bundles $b$ that buyer $i$ submits a bid on. An allocation is feasible if it allocates no item more than once ($\sum_{i = 1}^n \sum_{b \in B_i, j \ni b} x_{i,b} \leq 1$ for all $j \in \{1, \dots, m\}$) and if each bidder receives at most one bundle ($\sum_{b \in B_i} x_{i,b} \leq 1$ for all $i \in \{1, \dots, n\}$).
Therefore, the MILP is:
\[
\begin{array}{lll}
\textnormal{maximize} & \sum_{i = 1}^n \sum_{b \in B_i} v_i(b) x_{i,b} &\\
\textnormal{s.t.} & \sum_{i = 1}^n \sum_{b \in B_i, j \ni b} x_{i,b} \leq 1 &\forall j \in [m]\\
&\sum_{b \in B_i} x_{i,b} \leq 1 &\forall i \in [n]\\
& x_{i,b} \in \{0,1\} &\forall i \in [n], b \in B_i.
\end{array}
\]
\end{example}
\subsubsection{MILP tree search}
MILPs are typically solved using a tree search algorithm called branch-and-bound (B\&B).
Given a MILP problem instance, B\&B relies on two subroutines that efficiently compute upper and lower bounds on the optimal value within a given region of the search space. The lower bound can be found by choosing any feasible point in the region. An upper bound can be found via a linear programming relaxation. The basic idea of B\&B is to partition the search space into convex sets and find upper and lower bounds on the optimal solution within each. The algorithm uses these bounds to form global upper and lower bounds, and if these are equal, the algorithm terminates, since the feasible solution corresponding to the global lower bound must be optimal. If the global upper and lower bounds are not equal, the algorithm refines the partition and repeats.

In more detail, suppose we want to use B\&B to solve a MILP $Q'$. B\&B iteratively builds a search tree $\tree$ with the original MILP $Q'$ at the root. In the first iteration, $\tree$ consists of a single node containing the MILP $Q'$. At each iteration, B\&B uses a \emph{node selection policy} (which we expand on later) to select a leaf node of the tree $\tree$, which corresponds to a MILP $Q$. B\&B then uses a \emph{variable selection policy} (which we expand on in Section~\ref{sec:VSP}) to choose a variable $x_i$ of the MILP $Q$ to branch on. Specifically, let $Q^+_i$ (resp., $Q^-_i$) be the MILP $Q$ except with the additional constraint that $x_i = 1$ (resp., $x_i = 0$). B\&B sets the right (resp., left) child of $\node$ in $\tree$ to be a node containing the MILP $Q^+_i$ (resp., $Q^-_i$). B\&B then tries to ``fathom'' these leafs: the leaf containing $Q^+_i$ (resp., $Q^-_i$) is \emph{fathomed} if:
\begin{enumerate}
\item The optimal solution to the LP relaxation of $Q^+_i$ (resp., $Q^-_i$) satisfies the constraints of the original MILP $Q'$.
\item The relaxation of $Q^+_i$ (resp., $Q^-_i$) is infeasible, so $Q^+_i$ (resp., $Q^-_i$) must be infeasible as well.
\item The objective value of the LP relaxation of $Q^+_i$ (resp., $Q^-_i$) is smaller than the objective value of the best known feasible solution, so the optimal solution to $Q^+_i$ (resp., $Q^-_i$) is no better than the best known feasible solution.
\end{enumerate}
B\&B terminates when every leaf has been fathomed. It returns the best known feasible solution, which is optimal. See Algorithm~\ref{alg:BB} for the pseudocode.
\begin{algorithm}[t]
\caption{Branch and bound}\label{alg:BB}
\begin{algorithmic}[1]
   \Require A MILP instance $Q'$.
   \State Let $\tree$ be a tree that consists of a single node containing the MILP $Q'$.
   \State Let $c^* = -\infty$ be the objective value of the best-known feasible solution.
   \While {there remains an unfathomed leaf in $\tree$}
   \State Use a \emph{node selection policy} to select a leaf of the tree $\tree$, which corresponds to a MILP $Q$.\label{step:while_begin}
\State Use a \emph{variable selection policy} to choose a variable $x_i$ of the MILP $Q$ to branch on.\label{step:VSP}
\State Let $Q^+_i$ (resp., $Q^-_i$) be the MILP $Q$ except with the constraint that $x_i = 1$ (resp., $x_i = 0$).
\State Set the right (resp., left) child of $\node$ in $\tree$ to be a node containing the MILP $Q^+_i$ (resp., $Q^-_i$).
\For {$\tilde{Q} \in \left\{Q_i^+, Q_i^-\right\}$}
\If {the LP relaxation of $\tilde{Q}$ is feasible}
	\State Let $\breve{\vec{x}}_{\tilde{Q}}$ be an optimal solution to the LP and let $\breve{c}_{\tilde{Q}}$ be its objective value.
	\If {the vector $\breve{\vec{x}}_{\tilde{Q}}$ satisfies the constraints of the original MILP $Q'$}
	\State Fathom the leaf containing $\tilde{Q}$.
	\If {$c^* < \breve{c}_{\tilde{Q}}$}
		\State Set $c^* = \breve{c}_{\tilde{Q}}$.
	\EndIf
\ElsIf{$\breve{\vec{x}}_{\tilde{Q}}$ is no better than the best known feasible solution, i.e., $c^* \geq \breve{c}_{\tilde{Q}}$}
	\State Fathom the leaf containing $\tilde{Q}$.
	\EndIf
\Else
	\State Fathom the leaf containing $\tilde{Q}$.\label{step:while_end}
\EndIf
\EndFor
  \EndWhile
\end{algorithmic}
\end{algorithm}

The most common node selection policy is the \emph{best bound policy}. Given a B\&B tree, it selects the unfathomed leaf containing the MILP $Q$ with the maximum LP relaxation objective value. Another common policy is the \emph{depth-first policy}, which selects the next unfathomed leaf in the tree in depth-first order.

\begin{example}\label{ex:BB}
\begin{figure}[t]
\centering
\includegraphics[scale=.8]{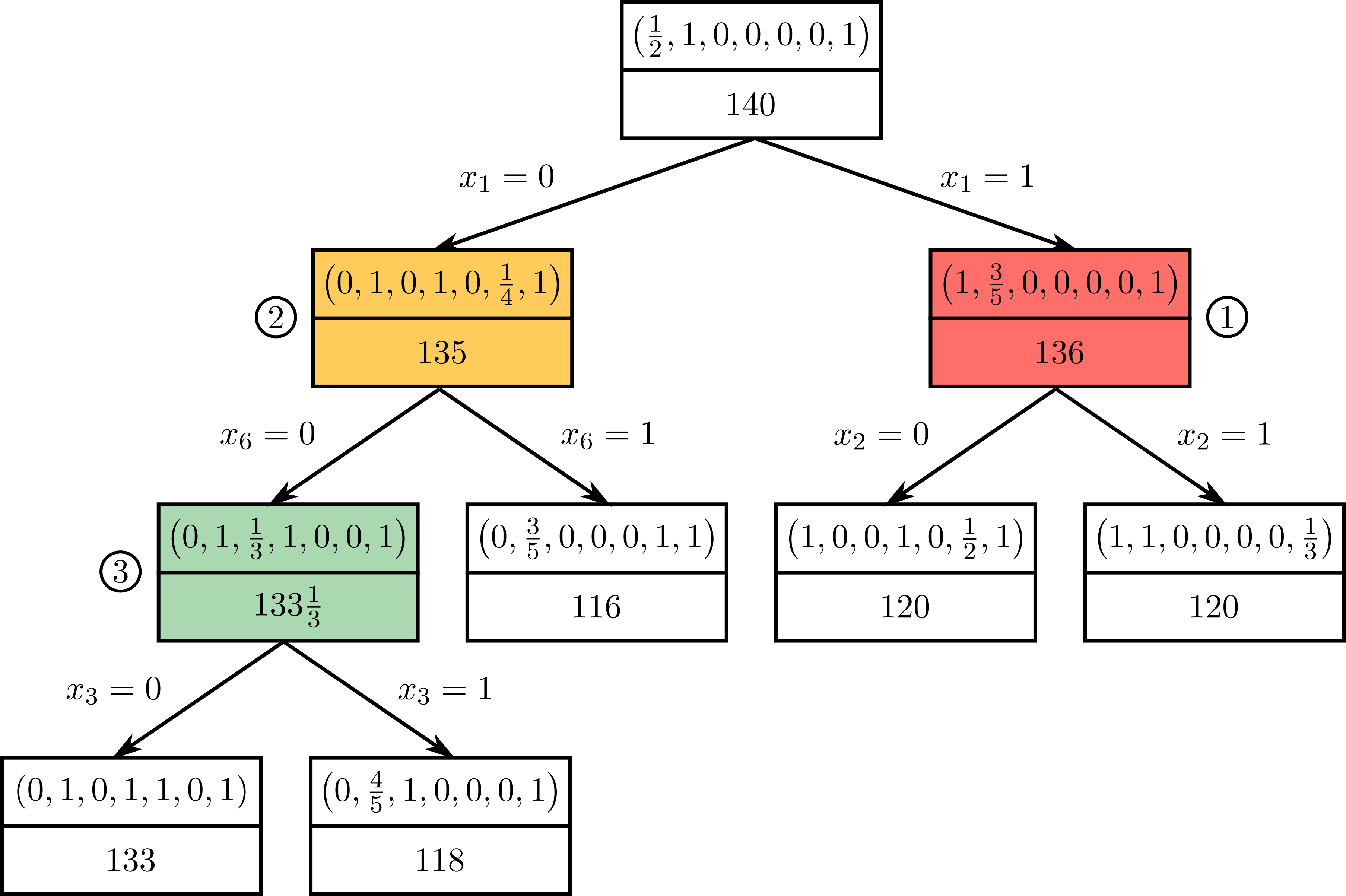}
\caption{Illustration of Example~\ref{ex:BB}.}\label{fig:BB}
\end{figure}
In Figure~\ref{fig:BB}, we show the search tree built by B\&B given as input the following MILP~\citep{Kolesar67:Branch}: \begin{equation}\begin{array}{ll}
\text{maximize} &40x_1 + 60x_2 + 10x_3 + 10x_4 + 3x_5 + 20x_6 + 60x_7\\
\text{subject to} &40x_1 + 50x_2 + 30x_3 + 10x_4 + 10x_5 + 40 x_6 + 30 x_7 \leq 100\\
& x_1, \dots, x_7 \in \{0,1\}.
\end{array}\label{eq:knapsack}\end{equation} Each rectangle denotes a node in the B\&B tree. Given a node $\node$, the top portion of its rectangle displays the optimal solution $\breve{\vec{x}}_Q$ to the LP relaxation of $Q$, which is the MILP \eqref{eq:knapsack} with the additional constraints labeling the edges from the root to $\node$. The bottom portion of the rectangle corresponding to $Q$ displays the objective value $\breve{c}_Q$ of the optimal solution to this LP relaxation, i.e., $\breve{c}_Q = (40, 60, 10, 10, 3, 20, 60) \cdot \breve{\vec{x}}_Q$. In this example, the node selection policy is the best bound policy and the variable selection policy selects the ``most fractional'' variable: the variable $x_i$ such that $\breve{x}_Q[i]$ is closest to $\frac{1}{2}$, i.e., $i = \argmax\left\{\min\left\{1 - \breve{x}_{\node}[i], \breve{x}_{\node}[i]\right\}\right\}$. 

In Figure~\ref{fig:BB}, the algorithm first explores the root. At this point, it has the option of exploring either the left or the right child. Since the optimal objective value of the right child (136) is greater than the optimal objective value of the left child (135), B\&B will next explore the pink node (marked \circled{1}). Next, B\&B can either explore either of the pink node's children or the orange node (marked \circled{2}). Since the optimal objective value of the orange node (135) is greater than the optimal objective values of the pink node's children (120), B\&B will next explore the orange node. After that B\&B can explore either of the orange node's children or either of the pink node's children. The optimal objective value of the green node (marked \circled{3}) is higher than the optimal objective values of the orange node's right child (116) and the pink node's children (120), so B\&B will next explore the green node. At this point, it finds an integral solution, which satisfies all of the constraints of the original MILP \eqref{eq:knapsack}. This integral solution has an objective value of 133. Since all of the other leafs have smaller objective values, the algorithm cannot find a better solution by exploring those leafs. Therefore, the algorithm fathoms all of the leafs and terminates.
\end{example}

\subsubsection{Variable selection in MILP tree search}\label{sec:VSP}

Variable selection policies typically depend on a real-valued \emph{score} per variable $x_i$.
\begin{definition}[Score-based variable selection policy]
Let $\score$ be a deterministic function that takes as input a partial search tree $\tree$, a leaf $Q$ of that tree, and an index $i$ and returns a real value ($\score(\tree, Q, i) \in \R$). For a leaf $Q$ of a tree $\tree$, let $N_{\tree, Q}$ be the set of variables that have not yet been branched on along the path from the root of $\tree$ to $Q$. A score-based variable selection policy selects the variable $\argmax_{x_j \in N_{\tree, Q}} \{\score(\tree, Q, j)\}$ to branch on at the node $Q$.
\end{definition}

We list several common definitions of the function $\score$ below. Recall that for a MILP $Q$ with objective function $\vec{c} \cdot \vec{x}$, we denote an optimal solution to the LP relaxation of $Q$ as $\breve{\vec{x}}_Q = \left(\breve{x}_{Q}[1], \dots \breve{x}_{Q}[n]\right)$. We also use the notation $\breve{c}_Q$ to denote the objective value of the optimal solution to the LP relaxation of $Q$, i.e., $\breve{c}_Q = \vec{c}^{\top} \breve{\vec{x}}_Q$. Finally, we use the notation $Q_i^+$ (resp., $Q_i^-$) to denote the MILP $Q$ with the additional constraint that $x_i = 1$ (resp., $x_i = 0$). If $Q_i^+$ (resp., $Q_i^-$) is infeasible, then we set $\breve{c}_Q - \breve{c}_{Q_i^+}$ (resp., $\breve{c}_Q - \breve{c}_{Q_i^-}$) to be some large number greater than $||\vec{c}||_1$.

\paragraph{Most fractional.} In this case, $\score(\tree, \node,i) = \min\left\{1 - \breve{x}_{\node}[i], \breve{x}_{\node}[i]\right\}.$ The variable that maximizes $\score(\tree, \node,i)$ is the ``most fractional'' variable, since it is the variable such that $\breve{x}_Q[i]$ is closest to $\frac{1}{2}$.

\paragraph{Linear scoring rule \citep{Linderoth99:Computational}.} In this case, $\score(\tree, \node,i) = (1 - \mu)\cdot\max\left\{\breve{c}_Q - \breve{c}_{Q_i^+}, \breve{c}_Q - \breve{c}_{Q_i^-}\right\} + \mu \cdot \min\left\{\breve{c}_Q - \breve{c}_{Q_i^+}, \breve{c}_Q - \breve{c}_{Q_i^-}\right\}$ where $\mu \in [0,1]$ is a user-specified parameter. This parameter balances an ``optimistic'' and a ``pessimistic'' approach to branching: An optimistic approach would choose the variable that maximizes $\max\left\{\breve{c}_Q - \breve{c}_{Q_i^+}, \breve{c}_Q - \breve{c}_{Q_i^-}\right\}$, which corresponds to $\mu = 0$, and a pessimistic approach would choose the variable that maximizes $\min\left\{\breve{c}_Q - \breve{c}_{Q_i^+}, \breve{c}_Q - \breve{c}_{Q_i^-}\right\}$, which corresponds to $\mu = 1$.

\paragraph{Product scoring rule \citep{Achterberg09:SCIP}.} In this case, $\score(\tree, \node,i) = \max \left\{\breve{c}_Q - \breve{c}_{Q_i^-}, \gamma\right\} \cdot \max\left\{\breve{c}_Q - \breve{c}_{Q_i^+}, \gamma\right\}$ where $\gamma = 10^{-6}$. Comparing $\breve{c}_Q - \breve{c}_{Q_i^-}$ and $\breve{c}_Q - \breve{c}_{Q_i^+}$ to $\gamma$ allows the algorithm to compare two variables even if $\breve{c}_Q - \breve{c}_{Q_i^-} = 0$ or $\breve{c}_Q - \breve{c}_{Q_i^+} = 0$. After all, suppose the scoring rule simply calculated the product $\left(\breve{c}_Q - \breve{c}_{Q_i^-}\right)\cdot \left(\breve{c}_Q - \breve{c}_{Q_i^+}\right)$ without comparing to $\gamma$. If $\breve{c}_Q - \breve{c}_{Q_i^-} = 0$, then the score equals 0, canceling out the value of $\breve{c}_Q - \breve{c}_{Q_i^+}$ and thus losing the information encoded by this difference.

\paragraph{Entropic lookahead scoring rule \citep{Gilpin11:Information}.} Let \[e(x) = \begin{cases}-x \log_2(x) - (1-x)\log_2(1-x) &\text{if } x \in (0,1)\\
0 &\text{if } x \in \{0,1\}.\end{cases}\] Set $\score(\tree, \node,i)=-\sum_{j = 1}^n \left(1 - \breve{x}_{\node}[i]\right) \cdot e \left(\breve{x}_{ \node_i^-}[j]\right) + \breve{x}_{\node}[i] \cdot e \left(\breve{x}_{\node_i^+}[j]\right).$

\paragraph{Alternative definitions of the linear and product scoring rules.} In practice, it is often too slow to compute the differences $\breve{c}_Q - \breve{c}_{Q_i^-}$ and $\breve{c}_Q - \breve{c}_{Q_i^+}$ for every variable, since it requires solving as many as $2n$ LPs. A faster option is to partially solve the LP relaxations of $Q_i^-$ and $Q_i^+$, starting at $\breve{\vec{x}}_Q$  and running
a small number of simplex iterations. Denoting the new objective values as $\tilde{c}_{Q_i^-}$ and $\tilde{c}_{Q_i^+}$, we can revise the linear scoring rule to be $\score(\tree, \node,i) = (1 - \mu)\cdot\max\left\{\breve{c}_Q - \tilde{c}_{Q_i^+}, \breve{c}_Q - \tilde{c}_{Q_i^-}\right\} + \mu \cdot \min\left\{\breve{c}_Q - \tilde{c}_{Q_i^+}, \breve{c}_Q - \tilde{c}_{Q_i^-}\right\}$ and we can revise the product scoring rule to be $\score(\tree, \node,i) = \max \left\{\breve{c}_Q - \tilde{c}_{Q_i^-}, \gamma\right\} \cdot \max\left\{\breve{c}_Q - \tilde{c}_{Q_i^+}, \gamma\right\}$. Other popular alternatives to computing $\breve{c}_{Q_i^-}$ and $\breve{c}_{Q_i^+}$ that fit within our framework are \emph{pseudo-cost branching} \citep{Benichou71:Experiments,Gauthier77:Experiments,Linderoth99:Computational} and \emph{reliability branching} \citep{Achterberg05:Branching}.
\section{Guarantees for data-driven learning to branch}\label{sec:theory}

In this section, we begin with our formal problem statement. We then present worst-case distributions over MILP instances demonstrating that learning over any data-independent discretization of the parameter space can be inadequate. Finally, we present sample complexity guarantees and a learning algorithm. Throughout the remainder of this paper, we assume that all aspects of the tree search algorithm except the variable selection policy, such as the node selection policy, are fixed.

\subsection{Problem statement}\label{sec:statement}
Let $\dist$ be a distribution over MILPs $Q$. For example, $\dist$ could be a distribution over clustering problems a biology lab solves day to day, formulated as MILPs. Let $\score_1, \dots, \score_d$ be a set of variable selection scoring rules, such as those in Section~\ref{sec:VSP}. Our goal is to learn a convex combination $\mu_1\score_1+ \cdots + \mu_d\score_d$ of the scoring rules that is nearly optimal in expectation over $\dist$. More formally, let $\cost$ be an abstract cost function that takes as input a problem instance $Q$ and a scoring rule $\score$ and returns some measure of the quality of B\&B using $\score$ on input $Q$. For example, $\cost(Q, \score)$ might be the number of nodes produced by running B\&B using $\score$ on input $Q$.
We say that an algorithm $(\epsilon, \delta)$-learns a convex combination of the
$d$ scoring rules $\score_1, \dots, \score_d$ if for any distribution $\dist$,
with probability at least $1-\delta$ over the draw of a sample
$\left\{Q_1, \dots, Q_m\right\} \sim \dist^m$, the algorithm returns
a convex combination $\score = \hat{\mu}_1\score_1+ \cdots +
\hat{\mu}_d\score_d$ such that $\E_{Q \sim \dist}\bigl[\cost(Q,
\score)\bigr] - \E_{Q \sim \dist}\bigl[\cost(Q, \score^*)\bigr] \leq
\epsilon$, where $\score^*$ is the convex combination of $\score_1, \dots,
\score_d$ with minimal expected cost. In this work, we prove that only a small
number of samples is sufficient to ensure $(\epsilon, \delta)$-learnability.

Following prior work (e.g.,~\citep{Hutter09:Paramils, Kleinberg17:Efficiency}), we assume that there is some cap $\kappa$ on the range of the cost function $\cost$. For example, if $\cost$ is the size of the search tree, we may choose to terminate the algorithm when the tree size grows beyond some bound $\kappa$.  We also assume that the problem instances in the support of $\dist$ are over $n$ binary variables, for some $n \in \N$.

Our results hold for cost functions that are \emph{tree-constant}, which means that for any problem instance $Q$, so long as the scoring rules $\score_1$ and $\score_2$ result in the same search tree, $\cost(Q, \score_1) = \cost(Q, \score_2)$.
For example, the size of the search tree is tree-constant.

\subsection{Impossibility results for data-independent approaches}\label{sec:WC_dist}

In this section, we focus on MILP tree search and prove that it is \emph{impossible} to find a nearly optimal B\&B configuration using a data-independent discretization of the parameters.
Specifically, suppose $\score_1(\tree, \node, i) = \min\left\{\breve{c}_Q - \breve{c}_{Q_i^+}, \breve{c}_Q - \breve{c}_{Q_i^-}\right\}$ and $\score_2(\tree, \node, i) = \max\left\{\breve{c}_Q - \breve{c}_{Q_i^+}, \breve{c}_Q - \breve{c}_{Q_i^-}\right\}$ and suppose the cost function $\cost\left(Q, \mu\score_1 + (1-\mu)\score_2\right)$ measures the size of the tree produced by B\&B when using a fixed but arbitrary node selection policy.
We would like to learn a nearly optimal convex combination $\mu\score_1 + (1-\mu)\score_2$ of these two rules with respect to $\cost$.
\citet{Gauthier77:Experiments} proposed setting $\mu = 1/2$, \citet{Benichou71:Experiments} and \citet{Beale79:Branch} suggested setting $\mu = 1$, and \citet{Linderoth99:Computational} found that $\mu = 2/3$ performs well. \citet{Achterberg09:SCIP} found that experimentally, $\mu = 5/6$ performed best when comparing among $\mu \in \left\{0, 1/2, 2/3, 5/6, 1\right\}$.

We show that for {\em any} discretization of the parameter space
[0, 1], there exists an infinite family of distributions over
MILP problem instances such that for any parameter in the
discretization, the expected tree size is exponential in $n$.
Yet, there exists an infinite number of parameters such
that the tree size is just a constant (with probability 1).
 The proof is in Appendix~\ref{app:theory}.

 \begin{restatable}{theorem}{wcDist}\label{thm:WCdist}
 Let \[\score_1(\tree, \node, i) = \min\left\{\breve{c}_Q - \breve{c}_{Q_i^+}, \breve{c}_Q - \breve{c}_{Q_i^-}\right\} \textnormal{, } \score_2(\tree, \node, i) = \max\left\{\breve{c}_Q - \breve{c}_{Q_i^+}, \breve{c}_Q - \breve{c}_{Q_i^-}\right\},\] and $\cost(Q, \mu\score_1 + (1-\mu)\score_2)$ be the size of the tree produced by B\&B. For every $a,b$ such that $\frac{1}{3} < a < b < \frac{1}{2}$ and for all even $n \geq 6$, there exists an infinite family of distributions $\dist$ over MILP instances with $n$ variables such that if $\mu \in [0,1]\setminus(a,b)$, then \[\E_{Q \sim \dist}\left[\cost\left(Q, \mu\score_1 + (1-\mu)\score_2\right)\right] = \Omega\left(2^{(n-9)/4}\right)\] and if $\mu \in (a,b)$, then with probability 1, $\cost\left(Q, \mu\score_1 + (1-\mu)\score_2\right) = O(1).$ This holds no matter which node selection policy B\&B uses.
 \end{restatable}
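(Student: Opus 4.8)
The plan is to exhibit, for each even $n \ge 6$, a single MILP instance $Q^\star = Q^\star(a,b,n)$ on $n$ binary variables and take $\dist$ to be the point mass on $Q^\star$; an infinite family then arises by rescaling the objective (or perturbing any one ``slack'' coefficient), and for a point-mass distribution the expectation and ``with probability $1$'' statements reduce to facts about the single deterministic run of $\bnb$ on $Q^\star$. I would build $Q^\star$ from two interacting parts: (i) a \emph{decoy core} on $\Theta(n)$ of the variables, modeled on the classical infeasible parity instances of Jeroslow type (e.g.\ $2\sum_{i \in C} x_i = |C|$ with $|C|$ odd), whose key property is that a node is fathomed only once enough core variables are fixed to make its LP relaxation \emph{infeasible} --- so that as soon as $\bnb$ branches on a core variable it is forced to explore $\Omega(2^{(n-9)/4})$ nodes --- and (ii) a constant-size \emph{trigger gadget} containing a distinguished variable $x_0$ and two ``bad'' variables $x_1,x_2$, whose only purpose is to control the root-level variable ranking as a function of $\mu$.

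The structural fact I would exploit about the combined rule is that, writing $\ell_i = \score_1(\tree,\node,i)$ and $h_i = \score_2(\tree,\node,i)$, the combined score of variable $i$ is $h_i - \mu(h_i - \ell_i)$, an affine function of $\mu$; hence the preference between any two variables flips at a single value of $\mu$ determined by the LP-bound changes $\breve{c}_{\node} - \breve{c}_{\node_i^\pm}$. I would choose the objective and knapsack-type constraint coefficients so that at the root: $x_0$ outscores $x_1$ iff $\mu > a$; $x_0$ outscores $x_2$ iff $\mu < b$; $x_0$ outscores every core variable for all $\mu \in [0,1]$; and $x_1$ (resp.\ $x_2$) is the root $\argmax$ when $\mu \le a$ (resp.\ $\mu \ge b$). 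Since $a$ and $b$ are the crossover points of pairs of affine functions whose values are constrained (bound changes are nonnegative and $\ell_i \le h_i$, and the same coefficients must also drive the collapse/expansion behavior below), the set of realizable $(a,b)$ is exactly the window $\tfrac13 < a < b < \tfrac12$; I would check this elementary-but-finicky system of affine inequalities in the appendix.

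Given the gadget, the two cases follow at the level of the root. If $\mu \in (a,b)$, $\bnb$ branches on $x_0$, and the gadget is arranged so that one child of the root has an infeasible LP relaxation and the other has an integral LP optimum (hence is fathomed), so $\bnb$ halts after $O(1)$ nodes --- for \emph{every} node selection policy, since those two children are the only nodes ever created. If $\mu \in [0,1]\setminus(a,b)$, $\bnb$ branches on $x_1$ or $x_2$, and the gadget is arranged so that in \emph{both} children $x_0$ has become indistinguishable from a core variable (fixing $x_1$ or $x_2$ saturates the constraint that made $x_0$ special), so from that point on $\bnb$ runs purely inside the symmetric decoy core, whose objective is constant so that incumbent-based fathoming never triggers and the resulting tree is therefore independent of the node selection policy. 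A lattice-path / binomial-coefficient counting argument (a leaf appears only after a strict majority of a $\Theta(n)$-sized set of core variables is fixed to the same value) then gives the $\Omega(2^{(n-9)/4})$ lower bound on $\E_{Q\sim\dist}\left[\cost(Q,\mu\score_1+(1-\mu)\score_2)\right]$.

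The main obstacle is the simultaneous design in step (ii): $x_0$ must be a genuine ``live wire'' at the root --- its bound changes must place the crossover thresholds against $x_1,x_2$ at exactly $a$ and $b$ and dominate all core variables --- yet it must become inert the instant any trigger variable is fixed instead, so that a wrong first branch cannot be salvaged by branching on $x_0$ at depth $1$; and all of this must coexist with the decoy core retaining its exponential-tree-forcing, node-selection-policy-robust structure. Pinning down one explicit MILP (with its handful of extra glue variables and constraints) that meets all of these constraints at once, and verifying that the feasibility region of the associated affine-inequality system is precisely $\tfrac13 < a < b < \tfrac12$, is the delicate part; everything else is bookkeeping.
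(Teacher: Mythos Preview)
Your plan is genuinely different from the paper's, and if it worked it would yield a stronger statement (a point-mass distribution rather than a nondegenerate one). The paper does \emph{not} build a single instance whose tree size has the valley shape in $\mu$. Instead, for each threshold $\mu^*\in(\tfrac13,\tfrac12)$ it constructs a family $\mathcal{F}_{n,\mu^*}$ in which the tree is $O(1)$ for $\mu<\mu^*$ and exponential for $\mu>\mu^*$, and a second family $\mathcal{G}_{n,\mu^*}$ with the roles reversed. Each instance glues a $3$-variable Jeroslow parity constraint to an $(n-3)$-variable one, with the objective engineered so that the single fractional variable of the small piece outscores the single fractional variable of the big piece on exactly one side of $\mu^*$. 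The distribution $\dist$ is then the uniform mixture of one instance from $\mathcal{G}_{n,a}$ and one from $\mathcal{F}_{n,b}$; the valley appears only in expectation. Node-selection-policy independence follows because every instance in the support is \emph{infeasible}, so the only fathoming rule that can fire is LP-infeasibility, and a separate lemma shows that for infeasible MILPs with path-wise scoring rules the B\&B tree does not depend on the node selection policy.

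There is a real gap in your single-instance construction, at the sentence ``the other [child] has an integral LP optimum (hence is fathomed).'' As long as the Jeroslow-type core is present and its LP relaxation is feasible---which it must be, or it could not force an exponential tree---the LP optimum at \emph{every} node carries a fractional core variable until roughly half the core is fixed. Fixing a single gadget variable $x_0$ cannot make the core integral, so that child is not fathomed and the $O(1)$ claim fails. If instead you try to make \emph{both} children of $x_0$ LP-infeasible, then both bound changes equal the large penalty $B$, so $\score_1$ and $\score_2$ agree on $x_0$ and its combined score is constant in $\mu$; you then lose the ability to place crossovers at $a$ and $b$. The same tension undercuts your NSP argument: if some child were truly integral the MILP would be feasible, an incumbent would exist, and incumbent-based fathoming could interact with the node order in the bad-$\mu$ regime. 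The paper sidesteps all of this by (i) letting each instance carry only one crossover, so only one pairwise score comparison has to be tuned, and (ii) making the ``good'' branch terminate via LP infeasibility of a $3$-variable Jeroslow block rather than via integrality, so every instance stays infeasible and NSP-invariance is automatic.
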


 \begin{proof}[Proof sketch]
We populate the support of the distribution $\dist$ by relying on two helpful theorems: Theorem~\ref{thm:families} and \ref{thm:families2}. In Theorem~\ref{thm:families}, we prove that for all $\mu^* \in (0,1)$, there exists an infinite family $\mathcal{F}_{n, \mu^*}$ of MILP instances such that for any $Q \in \mathcal{F}_{n, \mu^*}$, if $\mu \in \left[0,\mu^*\right)$, then the scoring rule $\mu \score_1 + (1-\mu)\score_2$ results in a B\&B tree with $O(1)$ nodes and if $\mu \in \left(\mu^*,1\right]$, the scoring rule results a tree with $2^{(n-4)/2}$ nodes. Conversely, in Theorem~\ref{thm:families2}, we prove that there exists an infinite family $\mathcal{G}_{n, \mu^*}$ of MILP instances such that for any $Q \in \mathcal{G}_{n, \mu^*}$, if $\mu \in \left[0,\mu^*\right)$, then the scoring rule $\mu \score_1 + (1-\mu)\score_2$ results in a B\&B tree with $2^{(n-5)/4}$ nodes and if $\mu \in \left(\mu^*,1\right]$, the scoring rule results a tree with $O(1)$ nodes.
Now, let $Q_a$ be an arbitrary instance in $\mathcal{G}_{n, a}$ and let $Q_b$ be an arbitrary instance in $\mathcal{F}_{n, b}$. The theorem follows by letting $\dist$ be a distribution such that $\Pr_{Q \sim \dist}\left[Q = Q_a\right] = \Pr_{Q \sim \dist}\left[Q = Q_b\right] = 1/2.$ See Figure~\ref{fig:WCdist} for an illustration.
\begin{figure}
\centering
\begin{subfigure}{0.3\textwidth}
\includegraphics{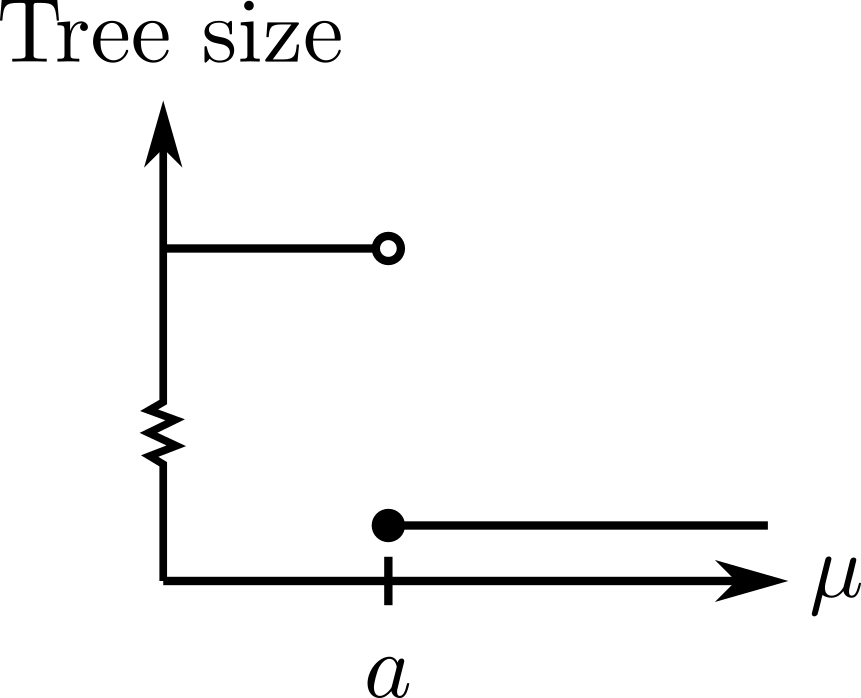} \centering
\caption{The tree size plot for the instance $Q_a$ as a function of $\mu$.\newline}
\end{subfigure}\qquad
\begin{subfigure}{0.3\textwidth}
\includegraphics{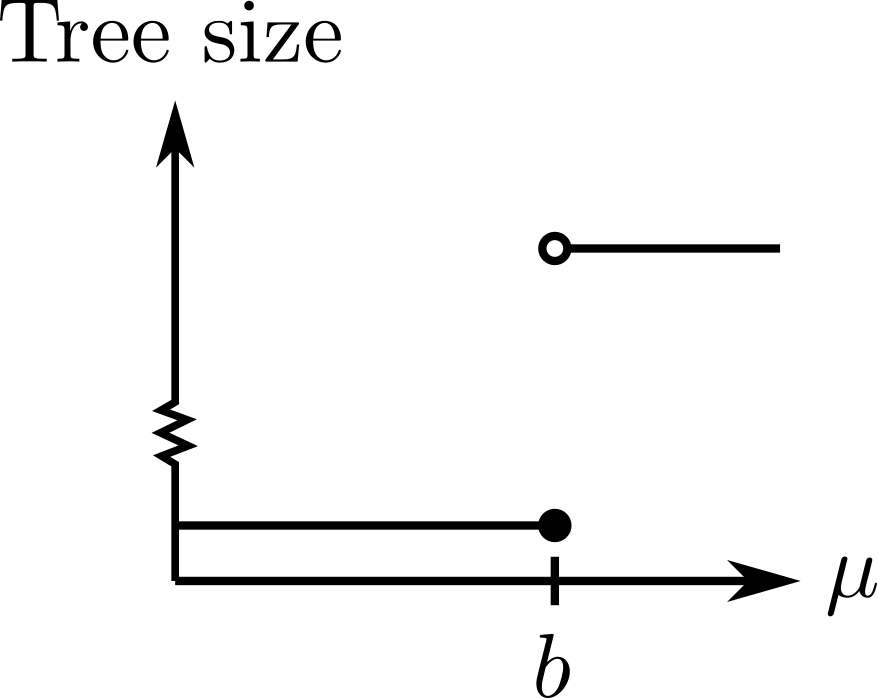}\centering
\caption{The tree size plot for the instance $Q_b$ as a function of $\mu$.\newline}
\end{subfigure}\qquad
\begin{subfigure}{0.3\textwidth}
\includegraphics{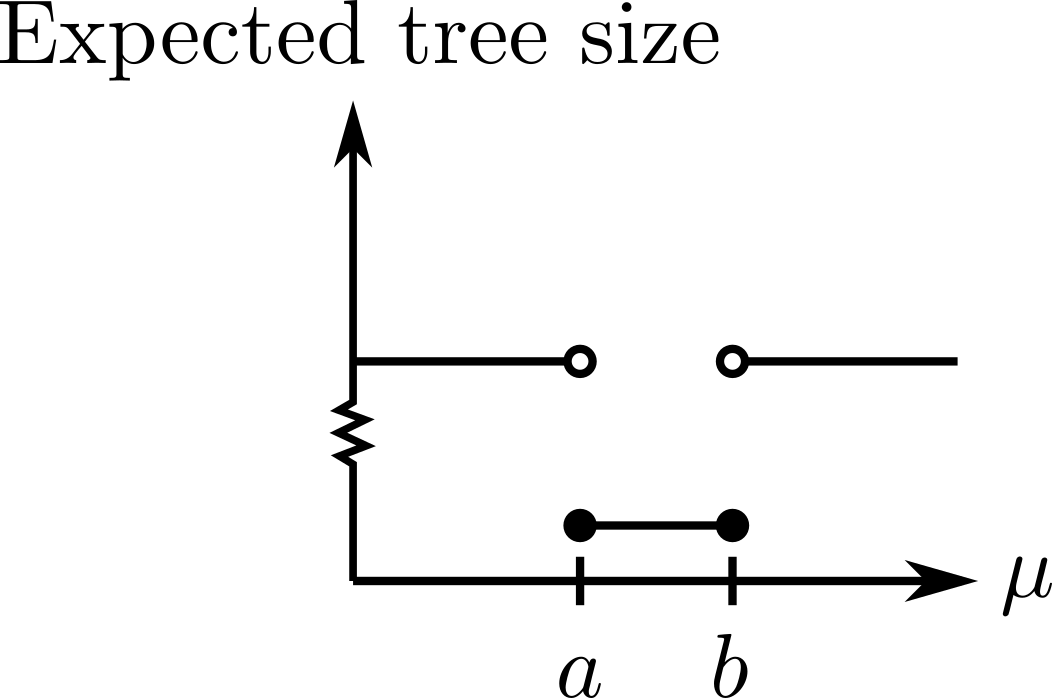}\centering
\caption{The expected tree size plot under the distribution $\dist$ as a function of $\mu$.}
\end{subfigure}
\caption{Illustrations of the proof of Theorem~\ref{thm:WCdist}.}\label{fig:WCdist}
\end{figure}

Throughout the proof of this theorem, we assume the node selection policy is depth-first search. We then prove that for any infeasible MILP, if NSP and NSP' are two node selection policies and $\score = \mu\score_1 + (1-\mu)\score_2$ for any $\mu \in [0,1]$, then tree $\tree$ B\&B builds using NSP and $\score$ equals the tree $\tree'$ it builds using NSP' and $\score$ (see Theorem~\ref{thm:NSP}). Thus, the theorem holds for any node selection policy.
\end{proof}

We now provide a proof sketch of Theorem~\ref{thm:families}, which helps us populate the support of the worst-case distributions in Theorem~\ref{thm:WCdist}. The full proof is in Appendix~\ref{app:theory}.

\begin{restatable}{theorem}{families}\label{thm:families}
 Let \[\score_1(\tree, \node, i) = \min\left\{\breve{c}_Q - \breve{c}_{Q_i^+}, \breve{c}_Q - \breve{c}_{Q_i^-}\right\} \textnormal{, } \score_2(\tree, \node, i) = \max\left\{\breve{c}_Q - \breve{c}_{Q_i^+}, \breve{c}_Q - \breve{c}_{Q_i^-}\right\},\] and $\cost(Q, \mu\score_1 + (1-\mu)\score_2)$ be the size of the tree produced by B\&B. For all even $n \geq 6$ and all $\mu^* \in \left(\frac{1}{3}, \frac{1}{2}\right)$, there exists an infinite family $\mathcal{F}_{n, \mu^*}$ of MILP instances such that for any $Q \in \mathcal{F}_{n, \mu^*}$, if $\mu \in \left[0,\mu^*\right)$, then the scoring rule $\mu \score_1 + (1-\mu)\score_2$ results in a B\&B tree with $O(1)$ nodes and if $\mu \in \left(\mu^*,1\right]$, the scoring rule results a tree with $2^{(n-4)/2}$ nodes.
\end{restatable}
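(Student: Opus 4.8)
The plan is to build, for each even $n \ge 6$ and each target threshold $\mu^* \in \left(\frac13,\frac12\right)$, one infeasible ``seed'' MILP on $n$ binary variables whose LP relaxation is feasible and fractional, and to let $\mathcal{F}_{n,\mu^*}$ be the set of all positive rescalings of its objective vector (together with any other perturbation that leaves the LP optimum of every reachable subproblem fixed and only uniformly rescales the objective gaps $\breve c_Q - \breve c_{Q_i^\pm}$). Scaling the objective by $\lambda>0$ leaves every LP optimum and every feasibility status unchanged while multiplying all of $\score_1,\score_2$ by $\lambda$, so it changes neither the feasibility of any subproblem nor which variable maximizes any convex combination $\mu\score_1+(1-\mu)\score_2$; hence every member of $\mathcal{F}_{n,\mu^*}$ induces exactly the same B\&B execution, and the family is infinite. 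Moreover, since each such instance is infeasible, B\&B never holds an incumbent, so a leaf is fathomed only once its LP relaxation is infeasible; one checks (as in Theorem~\ref{thm:NSP}) that the tree B\&B then builds is independent of the node selection policy, so it suffices to analyze depth-first B\&B.

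The heart of the proof is a self-similar gadget: for each even $k$ with $4\le k\le n$, the $k$-variable version $G_k$ is an infeasible MILP with feasible, fractional LP relaxation such that at its root only two branching candidates have non-negligible score --- an \emph{escape} variable $e$ and a \emph{trap} variable $t$ --- and: (i) branching on $e$ collapses the instance, the constraints forcing almost all remaining variables to integral values in the LP optima of both children, so that B\&B closes $G_k$ using only $O(1)$ further nodes; (ii) branching on $t$ yields two LP-feasible children each of which, once the LP pins down a companion variable, is a copy of $G_{k-2}$, so branching on the trap variable at every level gives the recurrence $T(k) = 2T(k-2) + O(1)$ and hence a tree with $2^{(n-4)/2}$ nodes. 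Writing $m_v = \min\{\breve c_Q - \breve c_{Q_v^+},\, \breve c_Q - \breve c_{Q_v^-}\}$ and $M_v = \max\{\breve c_Q - \breve c_{Q_v^+},\, \breve c_Q - \breve c_{Q_v^-}\}$, the scores are the affine maps $f_v(\mu) = \mu m_v + (1-\mu)M_v$; we choose the objective coefficients of $G_k$ so that $M_e > M_t$ and $m_e < m_t$ (the escape line starts above and ends below the trap line, so they cross exactly once) and so that the unique crossing point $(M_e-M_t)/\bigl((M_e-M_t)+(m_t-m_e)\bigr)$ equals $\mu^*$. This is attainable for every $\mu^*\in\left(\frac13,\frac12\right)$ precisely because that interval is the image, under $\mu\mapsto\mu/(1-\mu)$, of the range $\left(\frac12,1\right)$ of the achievable ratio $(M_e-M_t)/(m_t-m_e)$. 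By self-similarity the same four quantities, hence the same crossover, govern the branching choice at \emph{every} node of the gadget.

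Given this, the theorem is immediate. If $\mu\in[0,\mu^*)$ then $f_e(\mu) > f_t(\mu)$, so B\&B branches on the escape variable at the root (it never descends into a deeper gadget) and by (i) the whole tree has $O(1)$ nodes. If $\mu\in(\mu^*,1]$ then $f_t(\mu) > f_e(\mu)$, so B\&B branches on the trap variable at the root and, by induction on $k$ using (ii) --- at each descendant gadget the trap variable again has the strictly larger score --- it builds a binary tree of depth $(n-4)/2$ with $2^{(n-4)/2}$ nodes. Both conclusions hold for every member of $\mathcal{F}_{n,\mu^*}$ and, by the reduction above, for every node selection policy.

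I expect the main obstacle to be the explicit construction of $G_k$ so that properties (i)--(iii) hold \emph{simultaneously}: the constraint matrix must make the LP optimum fractional in exactly the intended coordinates at every reachable subproblem, make the escape branch structurally trivial on both sides while keeping the trap branch fully fractional on both sides, keep every other variable's score strictly below those of $e$ and $t$ everywhere, be homogeneous enough that fixing $x_t$ peels off two variables and leaves a scaled copy of $G_{k-2}$, and --- most delicately --- produce objective gaps whose arithmetic places the crossover at an arbitrary prescribed $\mu^*\in\left(\frac13,\frac12\right)$ while never permitting fathoming except at the intended leaves. Once the gadget and the closed forms of its LP values are pinned down, the inductive node count and the elimination of the node selection policy are routine.
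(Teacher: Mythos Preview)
Your high-level strategy matches the paper's: build an infeasible instance so fathoming happens only via LP infeasibility, present an ``escape'' variable and a ``trap'' variable whose score lines cross at $\mu^*$, and obtain an infinite family by rescaling the objective. But the structural plan diverges, and the crucial step is missing.

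The paper does \emph{not} use a self-similar gadget. Instead it glues together two copies of Jeroslow's parity instance: a ``small'' one on the three variables $x_{n-2},x_{n-1},x_n$ with constraint $2(x_{n-2}+x_{n-1}+x_n)=3$, and a ``big'' one on $x_1,\dots,x_{n-3}$ with constraint $2\sum_{i\le n-3}x_i=n-3$. The objective is $\gamma(1,2,\dots,n-3,\,0,\,\tfrac32,\,3-\tfrac{1}{2\mu^*})$. At the root the only fractional variables are the median $x_i$ of the big block and $x_{n-1}$ of the small block; a direct computation gives scores $\gamma/2$ and $\tfrac{3\gamma}{4}-\tfrac{\mu\gamma}{4\mu^*}$, which cross exactly at $\mu=\mu^*$. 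For $\mu<\mu^*$ the paper simply walks through the $O(1)$ nodes needed to exhaust the 3-variable block (Lemma~\ref{lem:mu_big}); for $\mu>\mu^*$ it shows by induction (Claim~\ref{claim:path}) that every branch stays inside the big block, and then invokes Jeroslow's lower bound to get $2^{(n-4)/2}$ nodes. No recursion $T(k)=2T(k-2)$ is used, and the scores at deeper nodes are \emph{not} the same four numbers---the paper only maintains the inequality, not equality.

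Your proposal, by contrast, postulates a gadget $G_k$ whose trap branch produces two exact copies of $G_{k-2}$ and whose four score values are literally identical at every level. You correctly flag that constructing such a $G_k$ is the hard part, and indeed you have not constructed it: everything in your write-up is conditional on its existence. That is the gap. Self-similarity of this strength is a much tighter requirement than what the theorem needs, and it is not clear a priori that any binary MILP realizes it while also (a) keeping every other variable's score strictly below both lines at every node and (b) hitting an arbitrary $\mu^*\in(\tfrac13,\tfrac12)$. The paper sidesteps all of this by reusing a classical hard instance and doing the score calculations by hand; I would recommend you take that route rather than trying to engineer exact self-similarity.
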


\begin{proof}[Proof sketch]
The MILP instances in $\mathcal{F}_{n, \mu^*}$ are inspired by a worst-case B\&B instance introduced by \citet{Jeroslow74:Trivial}. He proved that for any odd $n'$, every B\&B algorithm will build a tree with $2^{(n'-1)/2}$ nodes before it determines that for any $\vec{c} \in \R^{n'}$, the following MILP is infeasible:
\[\begin{array}{ll}\text{maximize}&\vec{c} \cdot \vec{x}\\
\text{subject to} & 2\sum_{i = 1}^{n'} x_i = n'\\
& \vec{x} \in \{0,1\}^{n'}.
\end{array}\]

We build off of this MILP to create the infinite family $\mathcal{F}_{n, \mu^*}$. Each MILP in $\mathcal{F}_{n, \mu^*}$ combines a hard version of Jeroslow's instance  on $n-3$ variables $\{x_1, \dots, x_{n-3}\}$ and an easy version on 3 variables $\{x_{n-2}, x_{n-1}, x_n\}$. Branch-and-bound only needs to determine that one of these problems is infeasible in order to terminate. The key idea of this proof is that if B\&B branches on all variables in $\{x_{n-2}, x_{n-1}, x_n\}$ first, it will terminate upon making a small tree. However, if B\&B branches on all variables in $\{x_1, \dots, x_{n-3}\}$ first, it will create a tree with exponential size before it terminates. The challenge is to design an objective function that enforces the first behavior when $\mu < \mu^*$ and the second behavior when $\mu > \mu^*$. Proving this is the bulk of the work.

In a bit more detail, every instance in $\mathcal{F}_{n, \mu^*}$ is defined as follows. For any constant $\gamma \geq 1$, let $\vec{c}_1 = \gamma(1, 2, \dots, n-3)$ and let $\vec{c}_2 = \gamma\left(0, \frac{3}{2}, 3-\frac{1}{2\mu^*}\right)$. Let $\vec{c} = \left(\vec{c}_1, \vec{c}_2\right) \in \R^n$ be the concatenation of $\vec{c}_1$ and $\vec{c}_2$. Let $Q_{\gamma,n}$ be the MILP
\[\begin{array}{ll}
\text{maximize}&\vec{c} \cdot \vec{x}\\
\text{subject to} & 2\sum_{i = 1}^{n-3} x_i = n-3\\
&2\left(x_{n-2} + x_{n-1} + x_n\right) = 3\\
& \vec{x} \in \{0,1\}^n.
\end{array}\]
We define $\mathcal{F}_{n, \mu^*} = \left\{Q_{n, \gamma} : \gamma \geq 1\right\}.$

For example, if $\gamma = 1$ and $n = 8$, then $Q_{\gamma, n}$ is \[\begin{array}{ll}
\text{maximize}&\left(1, 2, 3, 4, 5, 0, \frac{3}{2}, 3-\frac{1}{2\mu^*}\right) \cdot \vec{x}\\
\text{subject to} & \begin{pmatrix}
2 & 2 & 2 & 2 & 2 & 0 & 0 & 0\\
0 & 0 & 0 & 0 & 0 & 2 & 2 & 2
\end{pmatrix}\vec{x} = \begin{pmatrix}
5\\ 3
\end{pmatrix}\\
& \vec{x} \in \{0,1\}^8.
\end{array}\]
\begin{figure}
\centering
\begin{subfigure}{0.45\textwidth}
\includegraphics[scale=0.8]{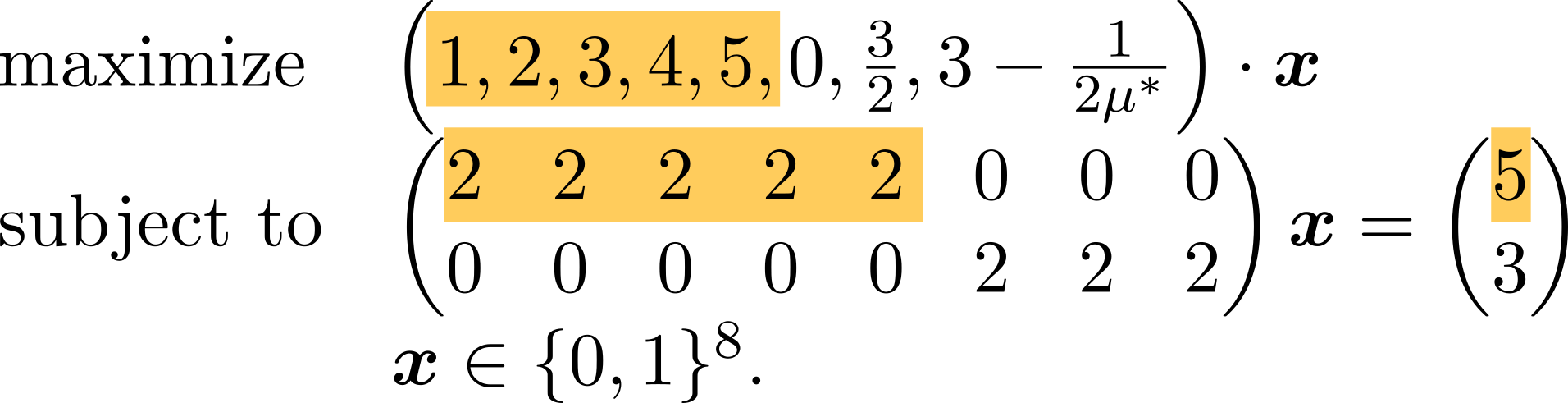} \centering
\caption{A big version of Jeroslow's instance on five variables.}
\label{fig:bigJ}
\end{subfigure}\qquad
\begin{subfigure}{0.45\textwidth}
\includegraphics[scale=0.8]{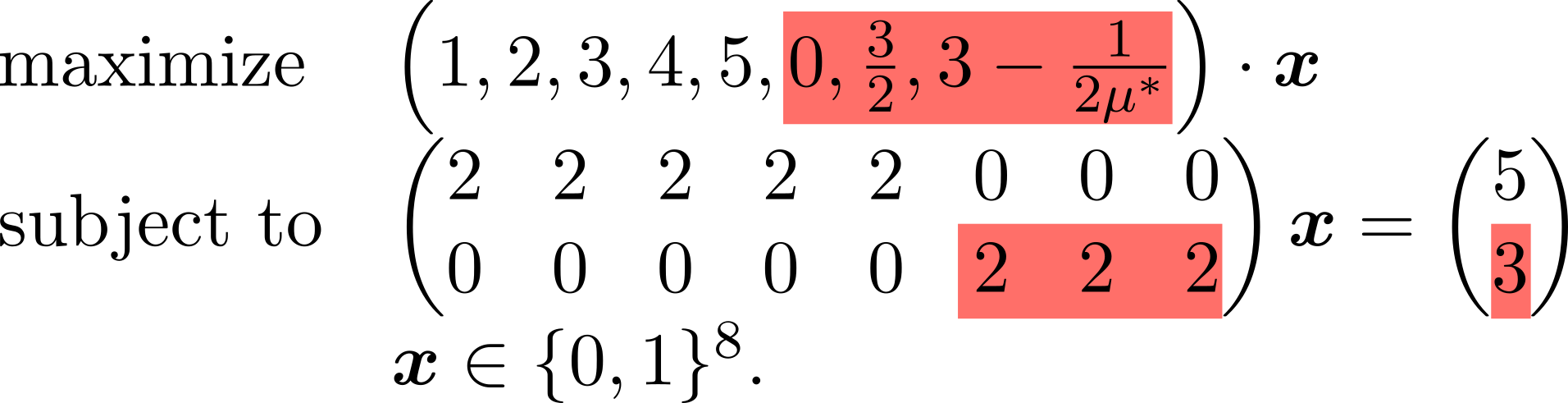}\centering
\caption{A small version of Jeroslow's instance on three variables.}
\label{fig:smallJ}
\end{subfigure}
\caption{Illustrations of the construction from Theorem~\ref{thm:families}.}\label{fig:J}
\end{figure}
As is illustrated in Figure~\ref{fig:J}, we have essentially ``glued together'' two disjoint versions of Jeroslow's instance: the first five variables of $Q_{1, 8}$ correspond to a ``big'' version of Jeroslow's instance and the last three variables correspond to a small version. Since the goal is maximization, the solution to the LP relaxation of $Q_{1,8}$ will try to obtain as much value from the first five variables $\{x_1, \dots, x_5\}$ as it can, but it is constrained to ensure that $2(x_1 + \cdots + x_5) = 5$. Therefore, the first five variables will be set to $\left(0, 0, \frac{1}{2}, 1, 1\right)$. Similarly, the solution to the LP relaxation of $Q_{1, 8}$ will set $(x_6, x_7, x_8) = \left(0, \frac{1}{2}, 1\right)$ because $\frac{3}{2} < 3 - \frac{1}{2\mu^*}$ under our assumption that $\mu^* > \frac{1}{3}$. Thus, the solution to the LP relaxation of $Q_{1, 8}$ is $\left(0, 0, \frac{1}{2}, 1, 1, 0, \frac{1}{2}, 1\right)$. There are only two fractional variables that B\&B might branch on: $x_3$ and $x_7$. Straightforward calculations show that if $\tree$ is the B\&B tree so far, which just consists of the root node, $\mu\score_1\left(\tree, Q_{\gamma, n},3\right) + (1-\mu)\score_2\left(\tree, Q_{\gamma, n},3\right) = \frac{\gamma}{2}$ and $\mu\score_1\left(\tree, Q_{\gamma, n},7\right) + (1-\mu)\score_2\left(\tree, Q_{\gamma, n},7\right) = \frac{3\gamma}{4} -  \frac{\mu\gamma}{4\mu^*}.$ This means that B\&B will branch first on variable $x_7$, which corresponds to the small version of Jeroslow's instance (see Figure~\ref{fig:smallJ}) if and only if $\frac{\gamma}{2} < \frac{3\gamma}{4} -  \frac{\mu\gamma}{4\mu^*}$, which occurs if and only if $\mu < \mu^*$. We show that this first branch sets off a cascade: if B\&B branches first on variable $x_7$, then it will proceed to branch on all variables in $\{x_6, x_7, x_8\}$, thus terminating upon making a small tree. Meanwhile, if it branches on variable $x_3$ first, it will then only branch on variables in $\{x_1, \dots, x_5\}$, creating a larger tree.

In the full proof, we generalize beyond eight variables to $n$, and expand the large version of Jeroslow's instance (as depicted in Figure~\ref{fig:bigJ}) from five variables to $n-3$. When $\mu < \mu^*$, we simply track B\&B's progress to make sure it only branches on
variables from the small version of Jeroslow's instance ($x_{n-2}, x_{n-1}, x_n$) before figuring out the MILP is infeasible. Therefore, the tree will have constant size. When $\mu > \mu^*$, we prove by induction that if B\&B has only branched on variables from the
big version of Jeroslow's instance ($x_1, \dots, x_{n-3}$), it will continue to only branch on those variables.
We also prove it will branch on about half of these variables along each path of the B\&B tree. The tree will thus have exponential size.
 \end{proof}

\subsection{Sample complexity guarantees}\label{sec:sample}

We now provide worst-case guarantees on the number of samples sufficient to $(\epsilon, \delta)$-learn a convex combination of scoring rules.
These results bound the number of samples sufficient to ensure that for any convex combination $\score = \mu_1\score_1 + \cdots + \mu_d\score_d$ of scoring rules, the empirical cost of tree search using $\score$ is close to its expected cost. Therefore, the algorithm designer can optimize $\left(\mu_1, \dots, \mu_d\right)$ over the samples without any further knowledge of the distribution. Moreover these sample complexity guarantees apply for any procedure the algorithm designer uses to tune $\left(\mu_1, \dots, \mu_d\right)$, be it an approximation, heuristic, or optimal algorithm. She can use our guarantees to bound the number of samples she needs to ensure that performance on the sample generalizes to the distribution.

 In Section~\ref{sec:path_scoring} we provide generalization guarantees for a family of  scoring rules we call \emph{path-wise}, which includes many well-known scoring rules as special cases. In this case, the number of samples is surprisingly small given the complexity of these problems: it grows only quadratically with the number of variables. In Section~\ref{sec:general_scoring}, we provide guarantees that apply to any scoring rule, path-wise or otherwise.

\subsubsection{Path-wise scoring rules}\label{sec:path_scoring}
The guarantees in this section apply broadly to a class of scoring rules we call \emph{path-wise} scoring rules. Given a node $Q$ in a search tree $\tree$, we denote the path from the root of $\tree$ to the node $Q$ as $\tree_Q$. The path $\tree_Q$ includes all nodes and edge labels from the root of $\tree$ to $Q$. For example, Figure~\ref{fig:path} illustrates the path $\tree_Q$ from the root of the tree $\tree$ in Figure~\ref{fig:tree1} to the node labeled $Q$. We now state the definition of path-wise scoring rules.
\begin{definition}[Path-wise scoring rule]\label{def:PW}
The function $\score$ is a path-wise scoring rule if for all search trees $\tree$, all nodes $Q$ in $\tree$, and all variables $x_i$, \begin{equation}\score(\tree, \node, i) = \score(\tree_Q, \node, i)\label{eq:PW}\end{equation} where $\tree_Q$ is the path from the root of $\tree$ to $\node$.\footnote{Under this definition, the scoring rule can simulate B\&B for any number of steps starting at any point in the tree and use that information to calculate the score, so long as Equality~\eqref{eq:PW} always holds.} See Figure~\ref{fig:PW} for an illustration.
\end{definition}
\begin{figure}
\centering
\begin{subfigure}{0.3\textwidth}
\includegraphics[scale=.7]{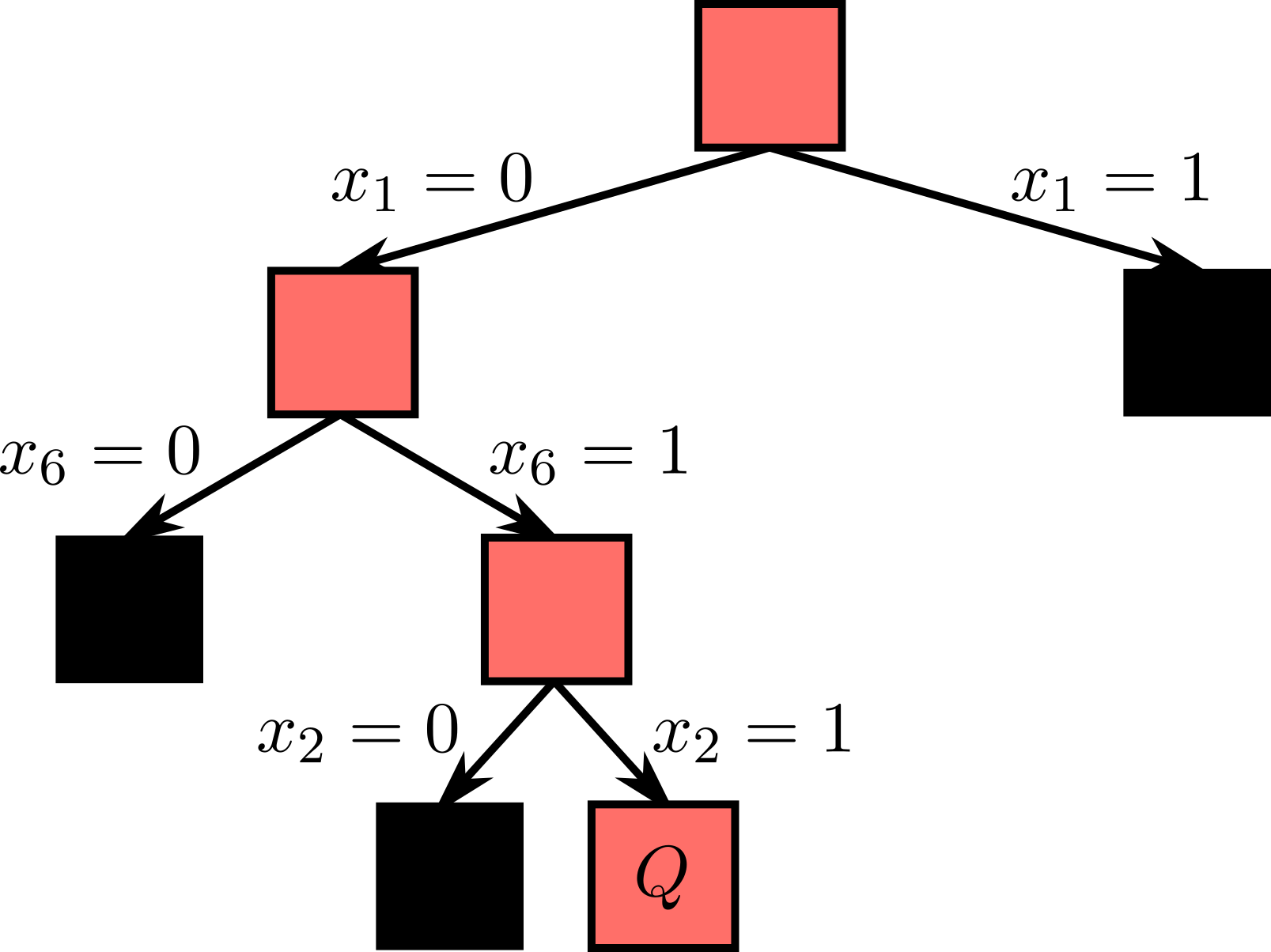} \centering
\caption{A B\&B search tree $\tree$.\newline\newline\newline}\label{fig:tree1}
\end{subfigure}\qquad
\begin{subfigure}{0.2\textwidth}
\includegraphics[scale=.7]{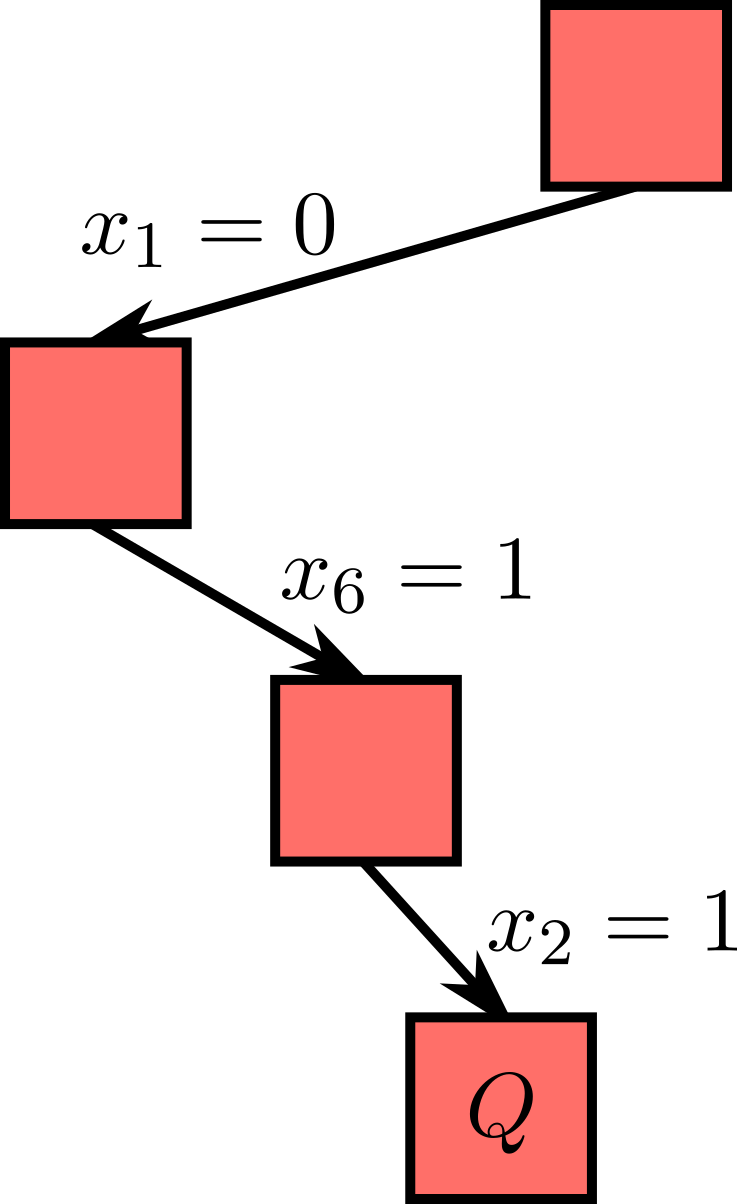}\centering
\caption{The path $\tree_Q$ from the root of the tree $\tree$ in Figure~\ref{fig:tree1} to the node labeled $Q$.}\label{fig:path}
\end{subfigure}\qquad
\begin{subfigure}{0.4\textwidth}
\includegraphics[scale=.7]{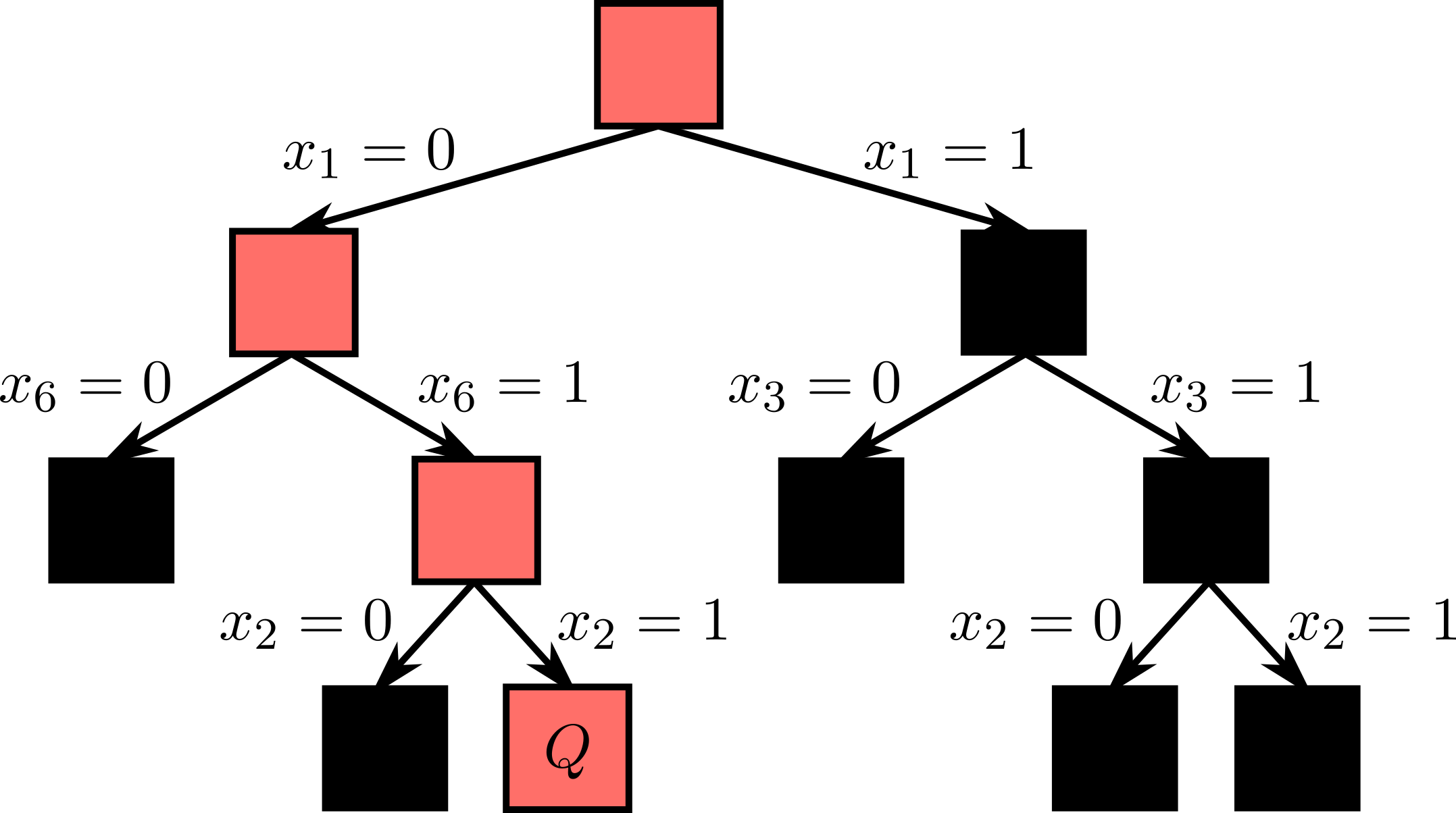}\centering
\caption{Another tree $\tree'$ that has the path $\tree_Q$ as a rooted subtree.\newline\newline}
\end{subfigure}
\caption{Illustrations to accompany the definition of a path-wise scoring rule (Definition~\ref{def:PW}). If the scoring rule $\score$ is path-wise, then for any variable $x_i$, $\score(\tree, Q, i) = \score(\tree', Q, i) = \score(\tree_Q, Q, i)$.}\label{fig:PW}
\end{figure}

Definition~\ref{def:PW} requires that if the node $Q$ appears at the end of the same path in two different B\&B trees, then any path-wise scoring rule must assign every variable the same score with respect to $Q$ in both trees.

Path-wise scoring rules include many well-studied rules as special cases, such as the \emph{most fractional}, \emph{product}, and \emph{linear} scoring rules, as defined in Section~\ref{sec:VSP}. The same is true when B\&B only partially solves the LP relaxations of $Q_i^-$ and $Q_i^+$ for every variable $x_i$ by running a small number of simplex iterations, as we describe in Section~\ref{sec:VSP} and as is our approach in our experiments. In fact, these scoring rules depend only on the node in question, rather than the path from the root to the node. We present our sample complexity bound for the more general class of path-wise scoring rules because this class captures the level of generality the proof holds for. On the other hand, \emph{pseudo-cost branching} \citep{Benichou71:Experiments,Gauthier77:Experiments,Linderoth99:Computational} and \emph{reliability branching} \citep{Achterberg05:Branching}, two widely-used branching strategies, are not path-wise, but our more general results from Section~\ref{sec:general_scoring} do apply to those strategies.

In order to prove our generalization guarantees, we make use of the following key structure which bounds the number of search trees branch-and-bound will build on a given instance over the entire range of parameters. In essence, this is a bound on the intrinsic complexity of the algorithm class defined by the range of parameters, and this bound on algorithm class's intrinsic complexity implies strong generalization guarantees.

\begin{lemma}\label{lem:induction}
Let $\cost$ be a tree-constant cost function, let $\score_1$ and $\score_2$ be two path-wise scoring rules, and let $Q$ be an arbitrary problem instance over $n$ binary variables. There are $T \leq 2^{n(n-1)/2}n^n$ intervals $I_1, \dots, I_T$ partitioning $[0,1]$ where for any interval $I_j$, across all $\mu \in I_j$, the scoring rule $\mu\score_1 + (1-\mu)\score_2$ results in the same search tree.
\end{lemma}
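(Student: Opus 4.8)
The plan is to induct on the depth of \bnb's search tree, using path-wiseness to reduce each branching decision to the upper envelope of a handful of affine functions of $\mu$. \textbf{First}, I would isolate the consequence of Definition~\ref{def:PW} that drives everything: at any node $\node$ that \bnb\ reaches, the branched variable is $\argmax_{x_j \in N_{\tree, \node}}\bigl\{\mu\,\score_1(\tree_{\node},\node,j) + (1-\mu)\,\score_2(\tree_{\node},\node,j)\bigr\}$, which depends only on $\mu$ and on the root-to-$\node$ path $\tree_{\node}$, and in particular not on the rest of the tree \bnb\ has built so far. This is exactly where path-wiseness is used: for a non-path-wise rule the scores could depend on sibling subtrees or on parts of the tree already built below $\node$, both of which vary with $\mu$. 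Fixing $\tree_{\node}$ and a deterministic tie-breaking rule, this $\argmax$ is a piecewise-constant function of $\mu \in [0,1]$ with at most $n-1$ breakpoints, since it is the index attaining the maximum among at most $n$ affine functions of $\mu$.

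\textbf{Next}, the induction. Let $\Pi_0 = \{[0,1]\}$; the depth-$0$ portion of the tree (the root alone, possibly already fathomed) is $\mu$-independent. Assume $\Pi_j$ partitions $[0,1]$ into at most $\prod_{k=0}^{j-1}\bigl(2^k n\bigr)$ intervals such that on each one the portion of the final \bnb\ tree of depth at most $j$ is constant. Fix such an interval $I$. Since the tree is binary it has at most $2^j$ nodes at depth $j$; each of these that gets branched on has its root-path fixed throughout $I$, so by the observation above its branching variable is piecewise constant on $I$ with at most $n-1$ breakpoints. Cutting $I$ at all of these (at most $2^j(n-1)$) points yields at most $2^j(n-1) + 1 \leq 2^j n$ subintervals, on each of which every depth-$j$ branching decision --- hence the depth-$\leq(j+1)$ portion of the tree --- is constant. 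Thus $|\Pi_{j+1}| \leq \prod_{k=0}^{j}\bigl(2^k n\bigr)$. Because each branch fixes a previously free variable, the tree has depth at most $n$, so $\Pi_n$ consists of at most $\prod_{k=0}^{n-1}\bigl(2^k n\bigr) = n^n\,2^{n(n-1)/2}$ intervals, and on each the whole tree --- and, by tree-constancy, $\cost(Q,\cdot)$ --- is constant, which is the claim.

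\textbf{The step I expect to be the main obstacle} is the hidden claim inside the induction that, once the branching variables through depth $j$ are fixed, the depth-$\leq(j+1)$ portion of the final tree really is pinned down --- concretely, which of those nodes are fathomed. Fathoming a node because its LP relaxation is infeasible, or integral and feasible for $Q$, is intrinsic to that node's MILP and hence fixed by its (fixed) path. The delicate case is fathoming because the LP bound is dominated by the incumbent: the incumbent rises over the course of \bnb, can be raised by an integral solution discovered far below depth $j+1$, and its value at any given domination test is dictated by the fixed-but-arbitrary node selection policy acting on the partially built tree. The fix is a simulation argument: on a subinterval where all branching decisions at all reached paths agree, \bnb\ with the fixed node selection policy visits the same nodes in the same order, performs the same incumbent updates, and therefore makes the same domination decisions for every $\mu$ in the subinterval, so these fathoming outcomes are $\mu$-independent as well. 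Making this rigorous, and phrasing the induction so that the deeper execution (which can determine the relevant incumbent) is pinned down together with the shallow tree, is the technical core of the argument.
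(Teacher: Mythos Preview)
Your proposal is correct and follows the same approach as the paper: the depth-wise induction and the count $\prod_{k=0}^{n-1}2^k n = 2^{n(n-1)/2}n^n$ match Claim~\ref{claim:simple_BnB_intervals} exactly, and your simulation argument is the paper's concluding iteration-by-iteration comparison.

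Where the paper is more explicit is precisely the obstacle you flag. Rather than rephrasing the depth induction so that the deeper execution is pinned down simultaneously with the shallow tree, the paper decouples the two concerns by introducing an auxiliary algorithm $A'$ that runs like B\&B but fathoms only on integrality or infeasibility, never on incumbent domination. For $A'$ your depth induction goes through verbatim (all fathoming is local to the path), yielding the interval bound. The paper then proves that the B\&B tree is always a rooted subtree of the $A'$ tree (Claim~\ref{claim:simple_BnB_subtree}, again using path-wiseness), and finally runs your simulation argument on each $A'$-interval, iteration by iteration, to conclude that B\&B's tree is constant there too. This $A'$ detour is exactly the ``making it rigorous'' step you left open; without it the inductive hypothesis as you state it --- that the depth-$\leq j$ portion of the \emph{final} B\&B tree is fixed on $I$ --- cannot be verified at step $j$, since which depth-$j$ nodes survive incumbent-based fathoming can depend on incumbents discovered at depths greater than $j$.
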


\begin{proof}
We prove this lemma first by considering the actions of an alternative algorithm $A'$ which runs exactly like B\&B, except it only fathoms nodes if they are integral or infeasible. We then relate the behavior of $A'$ to the behavior of B\&B to prove the lemma.

First, we prove the following bound on the number of search trees $A'$ will build on a given instance over the entire range of parameters. This bound matches that in the lemma statement.

\begin{restatable}{claim}{claimIntervals}\label{claim:simple_BnB_intervals}
There are $T \leq 2^{n(n-1)/2}n^n$ intervals $I_1, \dots, I_T$ partitioning $[0,1]$ where for any interval $I_j$, the search tree $A'$ builds using the scoring rule $\mu\score_1 + (1-\mu)\score_2$ is invariant across all $\mu \in I_j$.\footnote{This claim holds even when $\score_1$ and $\score_2$ are members of the more general class of \emph{depth-wise} scoring rules, which we define as follows. For any search tree $\tree$ of depth $\text{depth}(\tree)$ and any $j \in [n]$, let $\tree[j]$ be the subtree of $\tree$ consisting of all nodes in $\tree$ of depth at most $j$. We say that $\score$ is a \emph{depth-wise} scoring rule if for all search trees $\tree$, all $j \in [\text{depth}(\tree)]$, all nodes $Q$ of depth $j$, and all variables $x_i$, $\score(\tree, \node, i) = \score(\tree[j], \node, i)$.}
\end{restatable}

\begin{proof}[Proof sketch of Claim~\ref{claim:simple_BnB_intervals}]
We prove this claim by induction. For a tree $\tree$, let $\tree[i]$ be the nodes of depth at most $i$. We prove that for $i \in \{1,\dots, n\}$, there are $T \leq 2^{i(i-1)/2}n^i$ intervals $I_1, \dots, I_T$ partitioning $[0,1]$ where for any interval $I_j$ and any two parameters $\mu, \mu' \in I_j$, if $\tree$ and $\tree'$ are the trees $A'$ builds using the scoring rules $\mu\score_1 + (1-\mu)\score_2$ and $\mu'\score_1 + (1-\mu')\score_2$, respectively, then $\tree[i] = \tree'[i]$. Suppose that this is indeed the case for some $i \in \{1,\dots, n\}$ and consider an arbitrary interval $I_j$ and any two parameters $\mu, \mu' \in I_j$. Consider an arbitrary node $\node$ in $\tree[i-1]$ (or equivalently, $\tree'[i-1]$) at depth $i-1$. If $Q$ is integral or infeasible, then it will be fathomed no matter which parameter $\mu\in I_j$ the algorithm $A'$ uses. Otherwise, for all $\mu \in I_j$, let $\tree_{\mu}$ be the state of the search tree $A'$ builds using the scoring rule $\mu\score_1 + (1-\mu)\score_2$ at the point when it branches on $Q$. By the inductive hypothesis, we know that across all $\mu \in I_j$, the path from the root to $Q$ in $\tree_{\mu}$ is invariant, and we refer to this path as $\tree_Q$. Given a parameter $\mu \in I_j$, the variable $x_k$ will be branched on at node $\node$ so long as $k= \argmax_{\ell}\left\{\mu\score_1(\tree_{\mu},\node,\ell) + (1-\mu)\score_2(\tree_{\mu},\node,\ell)\right\},$ or equivalently, so long as $k= \argmax_{\ell}\left\{\mu\score_1(\tree_Q,\node,\ell) + (1-\mu)\score_2(\tree_Q,\node,\ell)\right\}$. In other words, the decision of which variable to branch on is determined by a convex combination of the constant values $\score_1(\tree_Q,\node,\ell)$ and $\score_2(\tree_Q,\node,\ell)$ no matter which parameter $\mu \in I_j$ the algorithm $A'$ uses. Here, we critically use the fact that the scoring rule is path-wise.

Since $\mu\score_1(\tree_Q,\node,\ell) + (1-\mu)\score_2(\tree_Q,\node,\ell)$ is a linear function of $\mu$ for all $\ell$, there are at most $n$ intervals subdividing the interval $I_j$ such that the variable branched on at node $\node$ is fixed.
\begin{figure}
\centering
\includegraphics{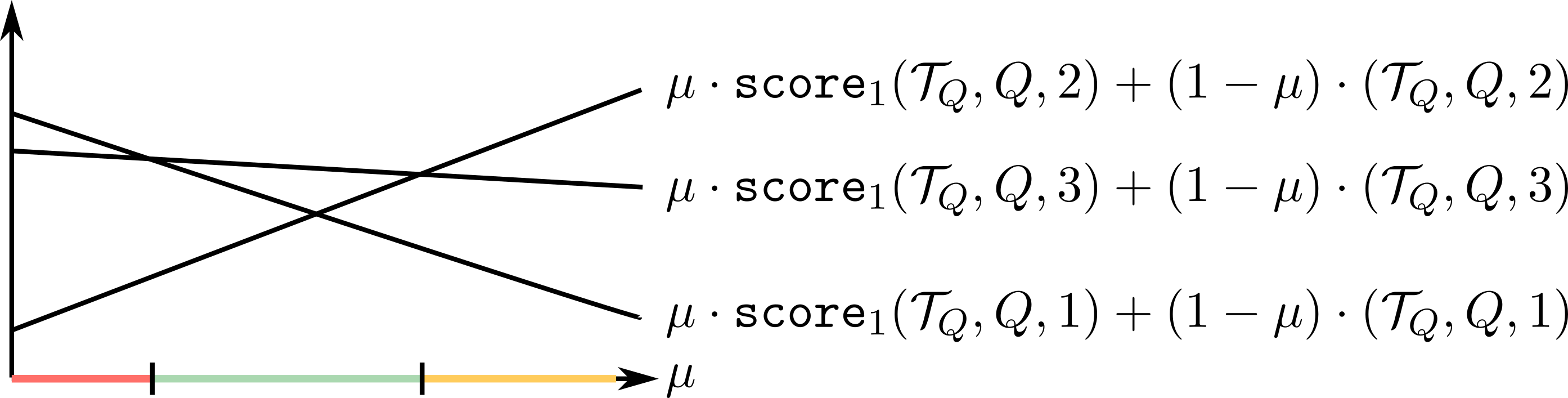}
\caption{Illustrations of the proof of Claim~\ref{claim:simple_BnB_intervals} for a hypothetical MIP $Q$ where the algorithm can either branch on $x_1$, $x_2$, or $x_3$ next. In the left-most interval (colored pink), $x_1$ will be branched on next, in the central interval (colored green), $x_3$ will be branched on next, and in the right-most interval (colored orange), $x_2$ next will be branched on next.}\label{fig:linear}
\end{figure}
Moreover, there are at most $2^{i-1}$ nodes at depth $i-1$, and each node similarly contributes a subpartition of $I_j$ of size $n$. If we merge all $2^{i-1}$ partitions, we have $T' \leq 2^{i-1}(n-1)+1$ intervals $I_1', \dots, I_{T'}'$ partitioning $I_j$ where for any interval $I_p'$ and any two parameters $\mu, \mu' \in I_p'$, if $\tree$ and $\tree'$ are the trees $A'$ builds using the scoring rules $\mu\score_1 + (1-\mu)\score_2$ and $\mu'\score_1 + (1-\mu')\score_2$, respectively, then $\tree[i] = \tree'[i]$. We can similarly subdivide each interval $I_1, \dots, I_T$. The claim then follows from counting the number of subdivisions.
\end{proof}

Next, we explicitly relate the behavior of B\&B to $A'$, proving that the search tree B\&B builds is a rooted subtree of the search tree $A'$ builds.

\begin{claim}\label{claim:simple_BnB_subtree}
Given a parameter $\mu \in [0,1]$, let $\tree$ and $\tree'$ be the trees B\&B and $A'$ build respectively using the scoring rule $\mu\score_1 + (1-\mu)\score_2$. For any node $Q$ of $\tree$, let $\tree_Q$ be the path from the root of $\tree$ to $Q$. Then $\tree_Q$ is a rooted subtree of $\tree'$.
\end{claim}

\begin{proof}[Proof of Claim~\ref{claim:simple_BnB_subtree}]
The path $\tree_Q$ can be labeled by a sequence of indices from $\{0,1\}$ and a sequence of variables from $\{x_1, \dots, x_n\}$ describing which variable is branched on and which value it takes on along the path $\tree_Q$. Let $((j_1, x_{i_1}), \dots, (j_t, x_{i_t}))$ be this sequence of labels, where $t$ is the number of edges in $\tree_Q$. We can similarly label every edge in $\tree'$. We claim that there exists a path beginning at the root of $\tree'$ with the labels $((j_1, x_{i_1}), \dots, (j_t, x_{i_t}))$.

For a contradiction, suppose no such path exists. Let $(j_{\tau}, x_{i_{\tau}})$ be the earliest label in the sequence $((j_1, x_{i_1}), \dots, (j_t, x_{i_t}))$ where there is a path beginning at the root of $\tree'$ with the labels $((j_1, x_{i_1}), \dots, (j_{\tau-1}, x_{i_{\tau - 1}}))$, but there is no way to continue the path using an edge labeled $(j_{\tau}, x_{i_{\tau}})$. There are exactly two reasons why this could be the case:
\begin{enumerate}
\item The node at the end of the path with labels $((j_1, x_{i_1}), \dots, (j_{\tau-1}, x_{i_{\tau - 1}}))$ was fathomed by $A'$.
\item The algorithm $A'$ branched on a variable other than $x_{i_{\tau}}$ at the end of the path labeled $((j_1, x_{i_1}), \dots, (j_{\tau-1}, x_{i_{\tau - 1}}))$.
\end{enumerate}

In the first case, since $A'$ only fathoms a node if it is integral or infeasible, we know that B\&B will also fathom the node at the end of the path with labels $((j_1, x_{i_1}), \dots, (j_{\tau-1}, x_{i_{\tau - 1}}))$. However, this is not the case since B\&B next branches on the variable $x_{i_\tau}$.

The second case is also not possible since the scoring rules are both path-wise. In a bit more detail, let $Q'$ be the node at the end of the path with labels $((j_1, x_{i_1}), \dots, (j_{\tau-1}, x_{i_{\tau - 1}}))$. We refer to this path as $\tree_{Q'}$. Let $\bar{\tree}$ (respectively, $\bar{\tree}'$) be the state of the search tree B\&B (respectively, $A$') has built at the point it branches on $Q'$. We know that $\tree_{Q'}$ is the path from the root to $Q'$ in both of the trees $\bar{\tree}$ and $\bar{\tree}'$. Therefore, for all variables $x_k$, $\mu \score_1(\bar{\tree}, Q', k) + (1-\mu)\score_2(\bar{\tree}, Q', k) = \mu \score_1(\tree_{Q'}, Q', k) + (1-\mu)\score_2(\tree_{Q'}, Q', k) = \mu \score_1(\bar{\tree}', Q', k) + (1-\mu)\score_2(\bar{\tree}', Q', k)$. This means that B\&B and $A'$ will choose the same variable to branch on at the node $Q'$.

 Therefore, we have reached a contradiction, so the claim holds.
\end{proof}

Next, we use Claims~\ref{claim:simple_BnB_intervals} and \ref{claim:simple_BnB_subtree} to prove Lemma~\ref{lem:induction}. Let $I_1, \dots, I_T$ be the intervals guaranteed to exist by Claim~\ref{claim:simple_BnB_intervals} and let $I_t$ be an arbitrary one of the intervals. Let $\mu'$ and $\mu''$ be two arbitrary parameters from $I_t$. We will prove that the scoring rules $\mu'\score_1 + (1-\mu')\score_2$ and $\mu''\score_1 + (1-\mu'')\score_2$ result in the same B\&B search tree. For a contradiction, suppose that this is not the case.
Consider the first iteration where B\&B using the scoring rule $\mu'\score_1 + (1-\mu')\score_2$ differs from B\&B using the scoring rule $\mu''\score_1 + (1-\mu'')\score_2$. By iteration, we mean lines~\ref{step:while_begin} through \ref{step:while_end} of Algorithm~\ref{alg:BB}. Up until this iteration, B\&B has built the same partial search tree $\tree$. Since the node selection policy does not depend on $\mu'$ or $\mu''$, B\&B will choose the same leaf $Q$ of the B\&B search tree to branch on no matter which scoring rule it uses.

Suppose B\&B chooses different variables to branch on in Step~\ref{step:VSP} of Algorithm~\ref{alg:BB} depending on whether it uses the scoring rule $\mu'\score_1 + (1-\mu')\score_2$ or $\mu''\score_1 + (1-\mu'')\score_2$. Let $\tree_Q$ be the path from the root of $\tree$ to $Q$. By Claim~\ref{claim:simple_BnB_intervals}, we know that the algorithm $A'$ builds the same search tree using the two scoring rules. Let $\bar{\tree}'$ (respectively, $\bar{\tree}''$) be the state of the search tree $A'$ has built using the scoring rule $\mu'\score_1 + (1-\mu')\score_2$ (respectively, $\mu''\score_1 + (1-\mu'')\score_2$) by the time it branches on the node $Q$. By Claims~\ref{claim:simple_BnB_intervals} and \ref{claim:simple_BnB_subtree}, we know that $\tree_Q$ is the path of from the root to $Q$ of both $\bar{\tree}'$ and $\bar{\tree}''$. By Claim~\ref{claim:simple_BnB_intervals},
we know that $A'$ will branch on the same variable $x_i$ at the node $Q$ in both the trees $\bar{\tree}'$ and $\bar{\tree}''$, so $i = \argmax_{j} \left\{\mu'\score_1(\bar{\tree}', Q, j)  + (1-\mu')\score_2(\bar{\tree}', Q, j)\right\}$, or equivalently, \begin{equation}i = \argmax_{j} \left\{\mu'\score_1(\tree_Q, Q, j)  + (1-\mu')\score_2(\tree_Q, Q, j)\right\},\label{eq:one_prime}\end{equation} and $i = \argmax_{j} \left\{\mu''\score_1(\bar{\tree}'', Q, j)  + (1-\mu'')\score_2(\bar{\tree}'', Q, j)\right\}$, or equivalently, \begin{equation}i = \argmax_{j} \left\{\mu''\score_1(\tree_Q, Q, j)  + (1-\mu'')\score_2(\tree_Q, Q, j)\right\}.\label{eq:two_primes}\end{equation} Returning to the search tree $\tree$ that B\&B is building, Equation~\eqref{eq:one_prime} implies that \[i = \argmax_{j} \left\{\mu'\score_1(\tree, Q, j)  + (1-\mu')\score_2(\tree, Q, j)\right\}\] and Equation~\eqref{eq:two_primes} implies that $i = \argmax_{j} \left\{\mu''\score_1(\tree, Q, j)  + (1-\mu'')\score_2(\tree, Q, j)\right\}$.
Therefore, B\&B will branch on $x_i$ at the node $Q$ no matter which scoring rule it uses. 

Finally, since the trees B\&B has built so far are identical, the choice of whether or not to fathom the children $Q_i^+$ and $Q_i^-$ does not depend on the scoring rule, so B\&B will fathom the same nodes no matter whether it uses the scoring rule $\mu'\score_1 + (1-\mu')\score_2$ or $\mu''\score_1 + (1-\mu'')\score_2$. Therefore, we have reached a contradiction: the iterations were identical. We conclude that the lemma holds.
\end{proof}

We now show how this structure implies a generalization guarantee. We formulate our guarantees in terms of \emph{pseudo-dimension}. By classic results from learning theory, pseudo-dimension immediately implies generalization guarantees. Pseudo-dimension is defined as follows.
\begin{definition}[Pseudo-dimension \citep{Pollard84:Convergence}]
Let $\mathcal{F}$ be a class of functions mapping an abstract domain $\domain$ to the set $[-\kappa,\kappa]$. Let $\sample= \left\{z_1, \dots, z_m\right\}$ be a subset of $\domain$ and let $r_1, \dots, r_m \in \R$ be a set of \emph{targets}. We say that $r_1, \dots, r_m$ \emph{witness} the shattering of $\sample$ by $\mathcal{F}$ if for all $\sample' \subseteq \sample$, there exists some function $f_{\sample'} \in \mathcal{F}$ such that for all $z_i \in {\sample'}$, $f_{\sample'}\left(z_i\right) \leq r_i$ and for all $z_i \not\in {\sample'}$, $f_{\sample'}\left(z_i\right) > r_i$. If there exists some $\vec{r} \in \R^m$ that witnesses the shattering of $\sample$ by $\mathcal{F}$, then we say that $\sample$ is \emph{shatterable} by $\mathcal{F}$. Finally, the pseudo-dimension of $\mathcal{F}$, denoted $\pdim\left(\fclass\right)$, is the size of the largest set that is shatterable by $\mathcal{F}$.
\end{definition}
Theorem~\ref{thm:gen_guarantee} provides generalization bounds in terms of pseudo-dimension.
\begin{theorem}[\cite{Pollard84:Convergence}]\label{thm:gen_guarantee}
For any distribution $\dist$ over $\domain$, with probability at least $1-\delta$ over the draw of $\sample = \left\{z_1, \dots, z_m\right\} \sim \dist^m$, for all $f \in \mathcal{F}$, \[\left|\E_{z \sim \dist}[f(z)] - \frac{1}{m} \sum_{i = 1}^m f\left(z_i\right)\right| = O\left(\kappa\sqrt{\frac{\pdim\left(\fclass\right)}{m}} + \kappa \sqrt{\frac{\ln\left(1/\delta\right)}{m}}\right).\]
\end{theorem}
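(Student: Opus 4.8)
The plan is to establish this uniform-convergence bound by the classical route of symmetrization, Dudley's chaining, and the real-valued analogue of the Sauer--Shelah lemma; in the paper one simply cites \citep{Pollard84:Convergence}, but the argument proceeds as follows. First I would symmetrize: draw an independent ghost sample $\sample' = \{z_1', \dots, z_m'\} \sim \dist^m$ and show, provided $m$ exceeds a universal constant times $\kappa^2/\epsilon^2$, that
\[
\Pr\!\left[\sup_{f \in \mathcal{F}} \left|\E_{z \sim \dist}[f(z)] - \tfrac{1}{m}\textstyle\sum_{i} f(z_i)\right| > \epsilon\right] \le 2\,\Pr\!\left[\sup_{f \in \mathcal{F}} \left|\tfrac{1}{m}\textstyle\sum_{i}\bigl(f(z_i) - f(z_i')\bigr)\right| > \tfrac{\epsilon}{2}\right].
\]
Since $f(z_i)-f(z_i')$ is symmetric, multiplying each summand by an independent Rademacher sign $\sigma_i \in \{\pm 1\}$ does not change its law, so the right-hand side is governed by the empirical Rademacher complexity $\widehat{\mathcal{R}}_m(\mathcal{F}) = \E_{\sigma}\sup_{f \in \mathcal{F}} \tfrac{1}{m}\sum_i \sigma_i f(z_i)$ of $\mathcal{F}$ restricted to the sample $\sample$.

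Second, I would bound $\widehat{\mathcal{R}}_m(\mathcal{F})$ by Dudley's entropy integral,
\[
\widehat{\mathcal{R}}_m(\mathcal{F}) = O\!\left(\frac{1}{\sqrt{m}}\int_0^{\kappa}\sqrt{\ln N\bigl(\mathcal{F}, \alpha, L_2(\sample)\bigr)}\,d\alpha\right),
\]
where $N(\mathcal{F}, \alpha, L_2(\sample))$ is the $\alpha$-covering number of $\mathcal{F}$ in the empirical $L_2$ metric on $\sample$. Using chaining here, rather than a single-scale discretization of $\mathcal{F}$, is precisely what avoids an extra $\sqrt{\ln(m/\epsilon)}$ factor and yields the clean $\kappa\sqrt{\pdim(\mathcal{F})/m}$ dependence in the statement.

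The main obstacle is the third ingredient, which converts the covering number into a bound in terms of the pseudo-dimension. Here I would invoke Haussler's real-valued Sauer--Shelah bound: if $\pdim(\mathcal{F}) = d$, then for every sample $\sample$ of size $m$ and every $\alpha \in (0,\kappa]$,
\[
N\bigl(\mathcal{F}, \alpha, L_2(\sample)\bigr) \le \left(\frac{C\kappa}{\alpha}\right)^{O(d)}
\]
for a universal constant $C$. This is established by quantizing the range $[-\kappa,\kappa]$ into $O(\kappa/\alpha)$ levels, passing to the induced binary-valued class whose VC dimension is $O(d)$ by the definition of pseudo-dimension, applying the ordinary Sauer--Shelah lemma, and shaving a spurious factor by a random subsampling argument. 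Substituting this estimate makes the entropy integral $\int_0^\kappa\sqrt{d\,\ln(C\kappa/\alpha)}\,d\alpha = O(\kappa\sqrt{d})$, so $\widehat{\mathcal{R}}_m(\mathcal{F}) = O(\kappa\sqrt{d/m})$ and, after symmetrization, $\E\bigl[\sup_{f \in \mathcal{F}} |\E_{z}[f(z)] - \tfrac{1}{m}\sum_i f(z_i)|\bigr] = O(\kappa\sqrt{d/m})$.

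Finally, to reach the high-probability form, I would apply McDiarmid's bounded-differences inequality to $\Phi(\sample) = \sup_{f \in \mathcal{F}}\bigl|\E_{z}[f(z)] - \tfrac{1}{m}\sum_i f(z_i)\bigr|$: replacing a single coordinate $z_i$ changes $\Phi$ by at most $2\kappa/m$, so with probability at least $1-\delta$, $\Phi \le \E[\Phi] + O\bigl(\kappa\sqrt{\ln(1/\delta)/m}\bigr)$. Combining this with the bound on $\E[\Phi]$ from the previous step gives the claimed inequality.
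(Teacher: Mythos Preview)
Your outline is a correct and standard route to this classical uniform-convergence bound, and indeed you already note the key point: the paper does not prove Theorem~\ref{thm:gen_guarantee} at all but simply cites it from \citet{Pollard84:Convergence} as a known result. There is therefore nothing in the paper to compare your argument against; your symmetrization--chaining--Haussler--McDiarmid pipeline is one of the textbook derivations of this bound and would serve perfectly well if a proof were required.
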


\begin{theorem}\label{thm:MILP_WCpdim}
Let $\cost$ be a tree-constant cost function, let $\score_1$ and $\score_2$ be two path-wise scoring rules, and let $\mathcal{C}$ be the set of functions $\left\{\cost\left(\cdot, \mu\score_1 + (1-\mu)\score_2\right) : \mu \in [0,1]\right\}$. Then $\pdim(\mathcal{C}) = O\left(n^2\right).$
\end{theorem}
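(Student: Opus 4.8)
The plan is to reduce the pseudo-dimension bound to the structural interval count already established in Lemma~\ref{lem:induction}, via a routine piecewise-structure counting argument. First I would fix an arbitrary set $S = \{Q_1, \dots, Q_m\}$ of problem instances over $n$ binary variables that is shattered by $\mathcal{C}$, witnessed by targets $r_1, \dots, r_m \in \R$, and associate to each $Q_i$ the single-variable function $g_i : [0,1] \to \R$ given by $g_i(\mu) = \cost\left(Q_i, \mu\score_1 + (1-\mu)\score_2\right)$. By Lemma~\ref{lem:induction}, since $\cost$ is tree-constant and $\score_1, \score_2$ are path-wise, there is a partition of $[0,1]$ into at most $T \leq 2^{n(n-1)/2} n^n$ intervals such that on each interval the scoring rule $\mu\score_1 + (1-\mu)\score_2$ yields the same B\&B tree on $Q_i$; tree-constancy then upgrades this to: $g_i$ is constant on each of those intervals.

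Next I would overlay these $m$ partitions into a common refinement. Merging $m$ partitions of $[0,1]$, each into at most $T$ pieces, introduces at most $m(T-1)$ breakpoints and hence yields at most $m(T-1)+1 \leq mT$ intervals. On any single interval $I$ of this common refinement, every $g_i$ is constant, so the subset $\{i : g_i(\mu) \leq r_i\} \subseteq \{1, \dots, m\}$ does not change as $\mu$ ranges over $I$. Therefore the collection of subsets of $S$ realizable as the ``below-target'' set of some classifier $\cost\left(\cdot, \mu\score_1 + (1-\mu)\score_2\right)$, $\mu \in [0,1]$, has size at most $mT$.

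Finally, shattering requires all $2^m$ subsets of $S$ to be realizable, forcing $2^m \leq mT$, i.e. $m \leq \log_2 m + \log_2 T \leq \log_2 m + \tfrac{n(n-1)}{2} + n \log_2 n$. Since the $\log_2 m$ term is dominated by $m$ (concretely, if $m$ exceeded twice the right-hand side's remaining terms it would also exceed $2\log_2 m$, giving $m \leq m/2 + m/2 = m$, a contradiction for all but a constant number of small $m$), this yields $m = O\!\left(n^2 + n\log n\right) = O(n^2)$. Since $S$ was an arbitrary shattered set, $\pdim(\mathcal{C}) = O(n^2)$, as desired.

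I do not expect a genuine obstacle here: the heavy lifting is entirely in Lemma~\ref{lem:induction}, and what remains is the standard translation from ``few pieces in the dual (per-instance) cost function'' to ``small pseudo-dimension.'' The only places that need care are (i) explicitly invoking tree-constancy of $\cost$ so that ``same search tree'' genuinely implies ``same cost value,'' making each $g_i$ piecewise constant with the interval count from the lemma; and (ii) the elementary step of solving $2^m \leq mT$ for $m$ so as to absorb the $\log_2 m$ term and obtain a clean $O(n^2)$ bound rather than an $O(n^2 \log m)$-type statement. I would also remark that the bound is independent of the cost cap $\kappa$, and that combining Theorem~\ref{thm:gen_guarantee} with this pseudo-dimension bound immediately gives the sample complexity guarantee advertised in Section~\ref{sec:sample}.
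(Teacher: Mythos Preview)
Your proposal is correct and follows essentially the same approach as the paper: invoke Lemma~\ref{lem:induction} to get at most $T \leq 2^{n(n-1)/2}n^n$ intervals per instance, merge across the $m$ shattered instances to get at most $mT$ regions on which every $\cost(Q_i,\cdot)$ is constant, and conclude $2^m \leq mT$, hence $m = O(\log T) = O(n^2)$. The only point where your write-up is a bit loose is the sentence disposing of the $\log_2 m$ term (``giving $m \leq m/2 + m/2 = m$, a contradiction'' is not actually a contradiction); the paper handles this by citing the standard inequality (Lemma~\ref{lem:log_ineq}) that $x \leq a\log x + b$ implies $x = O(a\log a + b)$, and you should do likewise.
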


\begin{proof} Suppose that $\pdim(\mathcal{C}) = m$ and let $\sample = \left\{Q_1, \dots, Q_m\right\}$ be a shatterable set of problem instances. We know there exists a set of targets $r_1, \dots, r_m \in \R$ that witness the shattering of $\sample$ by $\mathcal{C}$. This means that for every $\sample' \subseteq \sample$, there exists a parameter $\mu_{\sample'}$ such that if $Q_i \in \sample$, then $\cost\left(Q_i, \mu_{\sample'}\score_1 + \left(1-\mu_{\sample'}\right)\score_2\right) \leq r_i$. Otherwise $\cost\left(Q_i, \mu_{\sample'}\score_1 + \left(1-\mu_{\sample'}\right)\score_2\right) > r_i$. Let $M = \left\{\mu_{\sample'} : \sample' \subseteq \sample\right\}$. We will prove that  $|M| \leq m2^{n(n-1)/2}n^n + 1$, and since $2^m = |M|$, this means that $\pdim(\mathcal{C}) = m = O\left(\log\left(2^{n(n-1)/2}n^n\right)\right) = O\left(n^2\right)$ (see Lemma~\ref{lem:log_ineq} in Appendix~\ref{app:theory}).

To prove that $|M| \leq m2^{n(n-1)/2}n^n+1$, we rely on Lemma~\ref{lem:induction}, which tells us that for any problem instance $Q$, there are $T \leq 2^{n(n-1)/2}n^n$ intervals $I_1, \dots, I_T$ partitioning $[0,1]$ where for any interval $I_j$, across all $\mu \in I_j$, the scoring rule $\mu\score_1 + (1-\mu)\score_2$ results in the same search tree. If we merge all $T$ intervals for all samples in $\sample$, we are left with $T' \leq m2^{n(n-1)/2}n^n+1$ intervals $I_1', \dots, I_{T'}'$ where for any interval $I_j'$ and any $Q_i \in \sample$, $\cost\left(Q_i, \mu\score_1 + \left(1-\mu\right)\score_2\right)$ is constant for all $\mu \in I_j'$. Therefore, at most one element of $M$ can come from each interval, meaning that $|M| \leq T' \leq m2^{n(n-1)/2}n^n+1$, as claimed.
\end{proof}

\subsubsection{General scoring rules}\label{sec:general_scoring}

In this section, we provide generalization guarantees that apply to learning convex combinations of any set of scoring rules. Unlike the guarantees in Section~\ref{sec:path_scoring}, they depend on the size of the search trees B\&B is allowed to build. For example, we may choose to terminate the algorithm when the tree size grows beyond some bound $\bar{\kappa}$. The following lemma corresponds to Lemma~\ref{lem:induction} for this setting. For the full proof, see Lemma~\ref{lem:induction_general_TS} in Appendix~\ref{app:general_TS} which proves the lemma for a more general tree search algorithm.

\begin{lemma}\label{lem:induction_general}
Let $\cost$ be a tree-constant cost function, let $\score_1, \dots, \score_d$ be $d$ arbitrary scoring rules, and let $Q$ be an arbitrary MILP over $n$ binary variables. Suppose we limit B\&B to producing search trees of
  size $\bar{\kappa}$. There is a set $\mathcal{H}$ of at most $n^{2(\bar{\kappa} + 1)}$ hyperplanes such that for any connected component $R$ of $[0,1]^d \setminus \mathcal{H}$, the search tree B\&B builds using the scoring rule $\mu_1\score_1 + \cdots + \mu_d\score_d$ is invariant across all $(\mu_1, \dots, \mu_d) \in R$.
\end{lemma}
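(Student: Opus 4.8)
The plan is to mimic the structure of the proof of Lemma~\ref{lem:induction}, but replace the depth-by-depth induction over scoring intervals in $[0,1]$ with a node-by-node accounting of hyperplanes in $[0,1]^d$. Just as before, I would first analyze the auxiliary algorithm $A'$ that only fathoms integral or infeasible nodes, and then transfer the conclusion to B\&B via the analogue of Claims~\ref{claim:simple_BnB_intervals} and \ref{claim:simple_BnB_subtree} (the subtree claim carries over essentially verbatim: $A'$ fathoms only at integral/infeasible nodes, so whenever B\&B branches, $A'$ branches, and since the scoring rules need only agree as functions of the shared path — or, more weakly here, of the shared tree built so far — both algorithms pick the same variable). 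The key point is that once the search tree built so far is fixed, the score values $\score_k(\tree, Q, \ell)$ are fixed constants, and the branching decision at $Q$ is governed by which index $\ell$ maximizes $\sum_{k=1}^d \mu_k \score_k(\tree, Q, \ell)$.

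The hyperplane counting is the heart of the argument. For a fixed tree $\tree$ built so far, a fixed node $Q$, and a fixed pair of candidate variables $x_\ell, x_{\ell'}$, the set of parameter vectors where $\sum_k \mu_k \score_k(\tree, Q, \ell) = \sum_k \mu_k \score_k(\tree, Q, \ell')$ is a single hyperplane in $[0,1]^d$ (or is all of $[0,1]^d$, in which case it contributes nothing). On any connected component of $[0,1]^d$ minus all such hyperplanes, the ordering of the quantities $\sum_k \mu_k \score_k(\tree, Q, \ell)$ over $\ell$ is invariant, so the variable branched on at $Q$ is the same throughout the component. I would then argue by induction on the construction of the tree: there are at most $\bar\kappa$ nodes ever branched on (a tree of size at most $\bar\kappa$ has at most $\bar\kappa$ internal nodes), and at each such node there are at most $\binom{n}{2} \le n^2/2$ pairs of variables to compare, so at most $n^2/2$ hyperplanes are ``activated'' at that node. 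Crucially, which hyperplanes get activated at a node depends only on the tree built so far, which is itself constant on each component of the arrangement built up from the already-processed nodes — so one can peel off nodes in the order B\&B processes them, adding at most $n^2/2$ hyperplanes per node, and the total is at most $\bar\kappa \cdot n^2/2 \le n^{2(\bar\kappa+1)}$ hyperplanes (the loose bound in the statement absorbs constants and lower-order terms comfortably). Fathoming decisions at the children, as in Lemma~\ref{lem:induction}, depend only on the tree built so far and not on $\mu$, so they introduce no new hyperplanes. Finally, since $\cost$ is tree-constant and the search tree is invariant on each component, $\cost(Q, \mu_1\score_1+\cdots+\mu_d\score_d)$ is invariant on each component as well.

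The main obstacle is making the ``peeling'' induction rigorous: a priori the hyperplanes activated at the $j$-th processed node depend on the tree built so far, which depends on $\mu$, so the set $\mathcal{H}$ is not defined in one shot but is accumulated as the arrangement refines. The clean way to handle this is to induct on $j$ from $1$ to $\bar\kappa$, maintaining the invariant: after adding the hyperplanes for the first $j$ branched nodes, on each connected component of $[0,1]^d$ minus the accumulated hyperplanes, the first $j$ branching steps of B\&B (choice of node, choice of variable, and fathoming of the resulting children) are identical across the whole component. The base case is trivial (the root instance does not depend on $\mu$), and in the inductive step one fixes a component $R$, notes the tree built after $j$ steps is constant on $R$, hence the node $Q$ selected in step $j+1$ and the constants $\score_k(\tree,Q,\ell)$ are constant on $R$, and refines $R$ by the at most $n^2/2$ comparison hyperplanes among variables at $Q$; on each refined piece the branching variable and the children's fathoming status are determined. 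Summing $\bar\kappa$ rounds of at most $n^2/2$ hyperplanes each gives $|\mathcal{H}| \le \bar\kappa n^2 / 2 \le n^{2(\bar\kappa+1)}$, and restricting to trees of size $\bar\kappa$ guarantees the process halts within $\bar\kappa$ rounds. This matches the statement (and, as noted, the fully general version appears as Lemma~\ref{lem:induction_general_TS} in Appendix~\ref{app:general_TS}).
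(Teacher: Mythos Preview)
Your counting is the gap. You correctly observe that, once the partial tree after $j$ iterations is fixed on a component $R$, the next branching decision is governed by at most $\binom{n}{2}$ comparison hyperplanes. But those hyperplanes depend on the partial tree, and the partial tree is \emph{different} on different components; so at round $j+1$ you are not adding a single batch of $n^2/2$ hyperplanes, you are adding a (generally distinct) batch for each distinct partial tree that has arisen so far. Since there can be up to $n^{j}$ distinct partial trees after $j$ rounds, your sum $\bar\kappa\cdot n^2/2$ is off by an exponential-in-$\bar\kappa$ factor: the correct total via this route is on the order of $\sum_{j<\bar\kappa} n^{j}\cdot n^2$, not $\bar\kappa\cdot n^2$. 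The paper avoids this trap by decoupling the two counts: Claim~\ref{claim:sequences_TS} bounds the number of distinct trees by $n^{\bar\kappa}$ (the tree is determined by the sequence $(v_1,\dots,v_{\bar\kappa})$ of branching variables, since node selection and fathoming are deterministic once those are fixed), and Claim~\ref{claim:hyperplanes_TS} shows that each individual tree's region is an intersection of at most $\bar\kappa n^2\le n^{\bar\kappa+2}$ halfspaces; the union over all trees then gives $n^{\bar\kappa}\cdot n^{\bar\kappa+2}=n^{2(\bar\kappa+1)}$.

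A second, smaller issue: your detour through the auxiliary algorithm $A'$ does not transfer to non-path-wise scoring rules. The subtree argument in Claim~\ref{claim:simple_BnB_subtree} relies on the fact that $\score(\bar\tree,Q,\ell)=\score(\tree_Q,Q,\ell)$, so that B\&B and $A'$ see the same scores at $Q$ despite having built different surrounding trees. For arbitrary $\score_k$, the partial tree B\&B has built when it reaches $Q$ can differ from the one $A'$ has built (because $A'$ never fathoms by bound), so their scores at $Q$ may disagree and they may branch on different variables. The paper's proof for the general case accordingly drops $A'$ entirely and works directly with B\&B under the size cap $\bar\kappa$.
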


\begin{proof}[Proof sketch]
  The proof has two steps. In Claim~\ref{claim:sequences_TS}, we show that there are at most $n^{\bar{\kappa}}$
  different search trees that B\&B might produce for the instance $Q$ as we
  vary the mixing parameter vector $(\mu_1, \dots, \mu_d)$. In Claim~\ref{claim:hyperplanes_TS}, for each of the possible search trees
  $\mathcal{T}$ that might be produced, we show that the set of parameter values
  $(\mu_1, \dots, \mu_d)$ which give rise to that tree lie in the intersection of $n^{\bar{\kappa} + 2}$ halfspaces. Of course, each of these halfspaces is defined by a hyperplane. Let $\mathcal{H}$ be the union of all $n^{\bar{\kappa} + 2}$ hyperplanes over all $n^{\bar{\kappa}}$ trees. We know that for any connected component $R$ of $[0,1]^d \setminus \mathcal{H}$, the search tree B\&B builds using the scoring rule $\mu_1\score_1 + \cdots + \mu_d\score_d$ is invariant across all $(\mu_1, \dots, \mu_d) \in R$, so the lemma statement holds.
\end{proof}

In the same way Lemma~\ref{lem:induction} implies the pseudo-dimension bound in Theorem~\ref{thm:MILP_WCpdim}, Lemma~\ref{lem:induction_general} also implies a pseudo-dimension bound. The proof is similar to that of Theorem~\ref{thm:MILP_WCpdim}. See Theorem~\ref{thm:MILP_WCpdim_arbitrary} in Appendix~\ref{app:general_TS} which proves the theorem for a more general tree search algorithm.

\begin{theorem}\label{thm:MILP_WCpdim_arbitrary_BB}
Let $\cost$ be a tree-constant cost function and let $\score_1, \dots, \score_d$ be $d$ arbitrary scoring rules. Suppose we limit B\&B to producing search trees of
  size $\bar{\kappa}$. Let $\mathcal{C}$ be the set of functions $\left\{\cost\left(\cdot, \mu\score_1 + \cdots + \mu_d\score_d\right) : (\mu_1, \dots, \mu_d) \in [0,1]^d\right\}$. Then $\pdim(\mathcal{C}) = O\left(d\bar{\kappa}\log n + d\log d\right).$
\end{theorem}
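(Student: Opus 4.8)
The plan is to mimic the argument of Theorem~\ref{thm:MILP_WCpdim}, replacing the one-dimensional interval count from Lemma~\ref{lem:induction} with the $d$-dimensional hyperplane count from Lemma~\ref{lem:induction_general}. Suppose $\pdim(\mathcal{C}) = m$ and let $\sample = \{Q_1, \dots, Q_m\}$ be shattered by $\mathcal{C}$, witnessed by targets $r_1, \dots, r_m \in \R$. For each subset $\sample' \subseteq \sample$ there is a parameter vector $\vec{\mu}_{\sample'} \in [0,1]^d$ realizing the labeling of $\sample'$, and I set $M = \{\vec{\mu}_{\sample'} : \sample' \subseteq \sample\}$, so $|M| = 2^m$. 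The goal is to upper bound $|M|$ by the number of regions into which $[0,1]^d$ is carved by the relevant hyperplane arrangement, since $\cost(Q_i, \mu_1\score_1 + \cdots + \mu_d\score_d)$ is constant on each such region (by tree-constancy and Lemma~\ref{lem:induction_general}), hence at most one element of $M$ can come from each region.

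The key steps, in order. First, apply Lemma~\ref{lem:induction_general} to each $Q_i$ separately: this gives a set $\mathcal{H}_i$ of at most $n^{2(\bar\kappa+1)}$ hyperplanes such that the tree B\&B builds on $Q_i$ (and hence $\cost(Q_i, \cdot)$, by tree-constancy) is invariant on each connected component of $[0,1]^d \setminus \mathcal{H}_i$. Second, take the union $\mathcal{H} = \bigcup_{i=1}^m \mathcal{H}_i$, which has at most $N := m\, n^{2(\bar\kappa+1)}$ hyperplanes in $\R^d$; on every connected component of $[0,1]^d \setminus \mathcal{H}$, all $m$ functions $\cost(Q_i, \cdot)$ are simultaneously constant. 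Third, invoke the standard bound on the number of regions of an arrangement of $N$ hyperplanes in $\R^d$, namely $\sum_{j=0}^{d}\binom{N}{j} = O\!\big((N/d)^d\big) = N^{O(d)}$ (equivalently, one can cite the fact that $N$ hyperplanes partition $\R^d$ into at most $N^d$ full-dimensional cells for $N \ge d$). Thus $|M| \le N^{O(d)}$. Fourth, combine: $2^m = |M| \le \big(m\, n^{2(\bar\kappa+1)}\big)^{O(d)}$, so $m \le O\big(d\log m + d\bar\kappa \log n\big)$; applying a self-bounding argument (Lemma~\ref{lem:log_ineq} in the appendix, exactly as in Theorem~\ref{thm:MILP_WCpdim}) to absorb the $\log m$ term yields $m = O(d\bar\kappa\log n + d\log d)$, which is the claimed bound.

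The only genuinely delicate point is step three — the passage from ``number of hyperplanes'' to ``number of cells'' — and making sure the resulting exponent is $O(d)$ rather than something worse, so that the final $\log$-absorption produces the stated $d\log d$ and $d\bar\kappa\log n$ terms and not, say, a $d^2$ factor. Concretely, I would use $\#\text{cells} \le \sum_{j=0}^d \binom{N}{j} \le (eN/d)^d$, take logs to get $m \le d\log\!\big(e m\, n^{2(\bar\kappa+1)}/d\big) = d\log m + 2d(\bar\kappa+1)\log n + d\log(e/d)$, and then note $d\log m \le \tfrac{m}{2} + O(d\log d)$ (or directly apply the paper's Lemma~\ref{lem:log_ineq}), so that $m$ moves to the left and one is left with $m = O(d\bar\kappa\log n + d\log d)$. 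Everything else is a routine transcription of the pseudo-dimension-from-piecewise-structure template already used for Theorem~\ref{thm:MILP_WCpdim}.
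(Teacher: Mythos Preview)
Your proposal is correct and follows essentially the same route as the paper's proof (given for the more general tree-search version as Theorem~\ref{thm:MILP_WCpdim_arbitrary}): shatter a set of size $m$, apply Lemma~\ref{lem:induction_general} per instance, union the hyperplanes to get $N = m\,n^{2(\bar\kappa+1)}$, bound the number of cells by $N^{O(d)}$, and finish with Lemma~\ref{lem:log_ineq}. The only cosmetic difference is that you use the sharper cell-count bound $(eN/d)^d$ whereas the paper quotes $O(dN^d)$; either suffices for the stated asymptotic.
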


Naturally, if the function $\cost$ measures the size of the search tree B\&B returns capped at some value $\kappa$ as described in Section~\ref{sec:statement}, we obtain the following corollary.

\begin{cor}
Let $\cost$ be a tree-constant cost function, let $\score_1, \dots, \score_d$ be $d$ arbitrary scoring rules, and let $\kappa \in \N$ be an arbitrary tree size bound. Suppose that for any problem instance $Q$, $\cost(Q, \mu\score_1 + \cdots + \mu_d\score_d)$ equals the minimum of the following two values: 1) The number of nodes in the search tree B\&B builds using the scoring rule $\mu\score_1 + \cdots + \mu_d\score_d$ on input $Q$; and 2) $\kappa$. For any distribution $\dist$ over problem instances $Q$ with at most $n$ variables, with probability at least $1-\delta$ over the draw $\{Q_1, \dots, Q_m\} \sim \dist^m,$ for any $\mu \in [0,1]$, \begin{align*}&\left|\E_{Q \sim \dist} [\cost(Q, \mu\score_1 + \cdots + \mu_d\score_d)] - \frac{1}{m} \sum_{i = 1}^m \cost(Q_i, \mu\score_1 + \cdots + \mu_d\score_d)\right|\\
= \text{ }&O\left(\sqrt{\frac{d\kappa^2(\kappa\log n + \log d)}{m}} + \kappa \sqrt{\frac{\ln(1/\delta)}{m}}\right).\end{align*}
\end{cor}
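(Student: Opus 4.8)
The plan is to combine Theorem~\ref{thm:MILP_WCpdim_arbitrary_BB} with the classical uniform-convergence bound in Theorem~\ref{thm:gen_guarantee}. First I would observe that the cost function described in the corollary --- the minimum of the B\&B tree size and $\kappa$ --- is tree-constant: its value depends on the instance $Q$ and on the search tree B\&B produces, and whenever two scoring rules yield the same tree they yield the same (capped) node count. Hence the hypothesis of Theorem~\ref{thm:MILP_WCpdim_arbitrary_BB} is met. Next I would set the tree-size cap in that theorem to $\bar\kappa = \kappa$: since we terminate B\&B once the tree exceeds $\kappa$ nodes (or, more precisely, the cost is clipped at $\kappa$ anyway), it suffices to track trees of size at most $\kappa$. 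Applying the theorem with $\bar\kappa = \kappa$ gives $\pdim(\mathcal{C}) = O(d\kappa\log n + d\log d)$.

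Then I would invoke Theorem~\ref{thm:gen_guarantee} with the function class $\mathcal{C}$, whose range is $[0,\kappa]$ (so the parameter ``$\kappa$'' in Theorem~\ref{thm:gen_guarantee}, the bound on $|f|$, is the same $\kappa$). Substituting $\pdim(\mathcal{C}) = O(d\kappa\log n + d\log d)$ into the bound $O\bigl(\kappa\sqrt{\pdim(\mathcal{C})/m} + \kappa\sqrt{\ln(1/\delta)/m}\bigr)$ yields
\[
O\left(\kappa\sqrt{\frac{d\kappa\log n + d\log d}{m}} + \kappa\sqrt{\frac{\ln(1/\delta)}{m}}\right)
= O\left(\sqrt{\frac{d\kappa^2(\kappa\log n + \log d)}{m}} + \kappa\sqrt{\frac{\ln(1/\delta)}{m}}\right),
\]
which is exactly the claimed bound after pulling $\kappa^2$ inside the square root. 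The only points requiring a sentence of justification are the tree-constancy of the capped-size cost and the identification of the range parameter, both of which are immediate; everything else is a direct substitution.

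There is essentially no hard part here --- the corollary is a routine instantiation, and I would expect the write-up to be only a few lines. If anything deserves care, it is making sure that the cap $\kappa$ used to terminate B\&B and the cap used to clip the cost are played against each other consistently so that Lemma~\ref{lem:induction_general} / Theorem~\ref{thm:MILP_WCpdim_arbitrary_BB} can legitimately be applied with $\bar\kappa = \kappa$; since trees larger than $\kappa$ nodes all map to the cost value $\kappa$, no additional behavior of B\&B beyond the first $\kappa$ nodes is ever observed by $\cost$, so setting $\bar\kappa=\kappa$ loses nothing.
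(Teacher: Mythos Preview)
Your proposal is correct and matches the paper's approach: the corollary is stated in the paper without proof as an immediate consequence of plugging the pseudo-dimension bound of Theorem~\ref{thm:MILP_WCpdim_arbitrary_BB} (with $\bar\kappa=\kappa$) into the Pollard-style generalization guarantee of Theorem~\ref{thm:gen_guarantee}, exactly as you describe. Your additional remarks about tree-constancy of the capped cost and the identification of the range parameter are the right justifications to fill in.
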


\subsubsection{Learning algorithm}

For the case where we wish to learn the optimal tradeoff between two scoring rules, we provide an algorithm in Appendix~\ref{app:theory} that finds the empirically optimal parameter $\hat{\mu}$ given a sample of $m$ problem instances. By Theorems~\ref{thm:MILP_WCpdim}, \ref{thm:MILP_WCpdim_arbitrary_BB}, and \ref{thm:gen_guarantee}, we know that so long as $m$ is sufficiently large, $\hat{\mu}$ is nearly optimal in expectation.
 Our algorithm stems from the observation that for any tree-constant cost function $\cost$ and any problem instance $Q$, $\cost\left(Q, \mu\score_1 + (1-\mu)\score_2\right)$ is simple: it is a piecewise-constant function of $\mu$ with a finite number of pieces. This is the same observation that we prove in Lemma~\ref{lem:induction_general}. Given a sample of problem instances $\sample = \left\{Q_1, \dots, Q_m\right\}$, our ERM algorithm constructs all $m$ piecewise-constant functions, takes their average, and finds the minimizer of that function. In practice, we find that the number of pieces making up this piecewise-constant function is small, so our algorithm can learn over a training set of many problem instances.
\section{Experiments}
\begin{figure}
\centering
\begin{subfigure}{.3\textwidth}
\includegraphics[width=\textwidth]{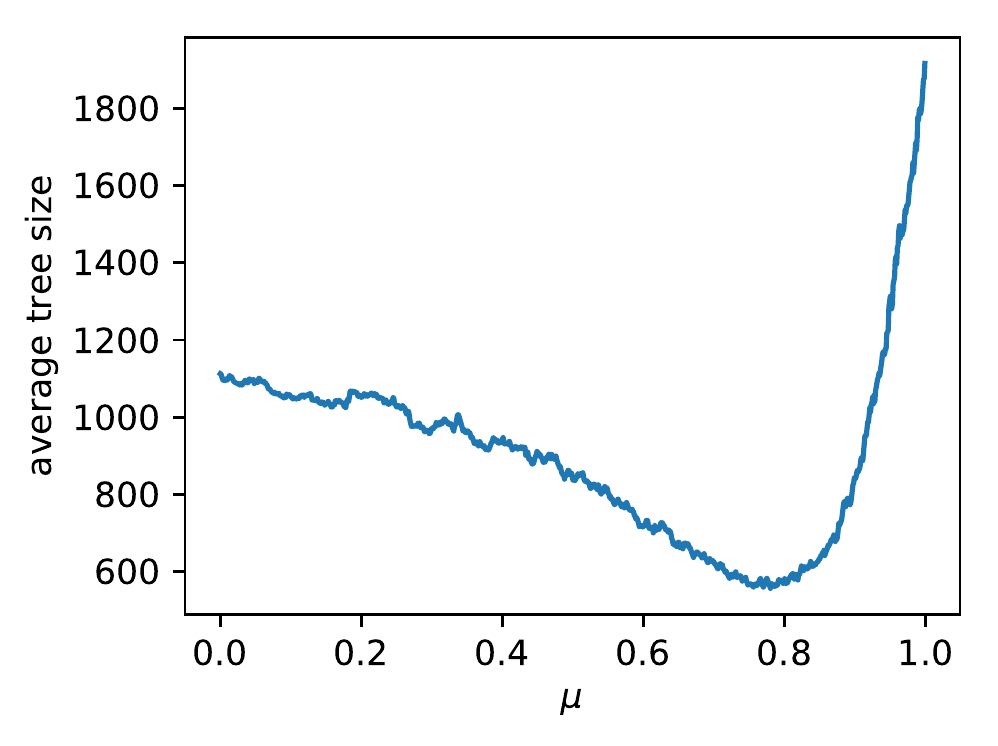}\centering
\caption{CATS ``arbitrary''}
\end{subfigure}\qquad
\begin{subfigure}{.3\textwidth}
\includegraphics[width=\textwidth]{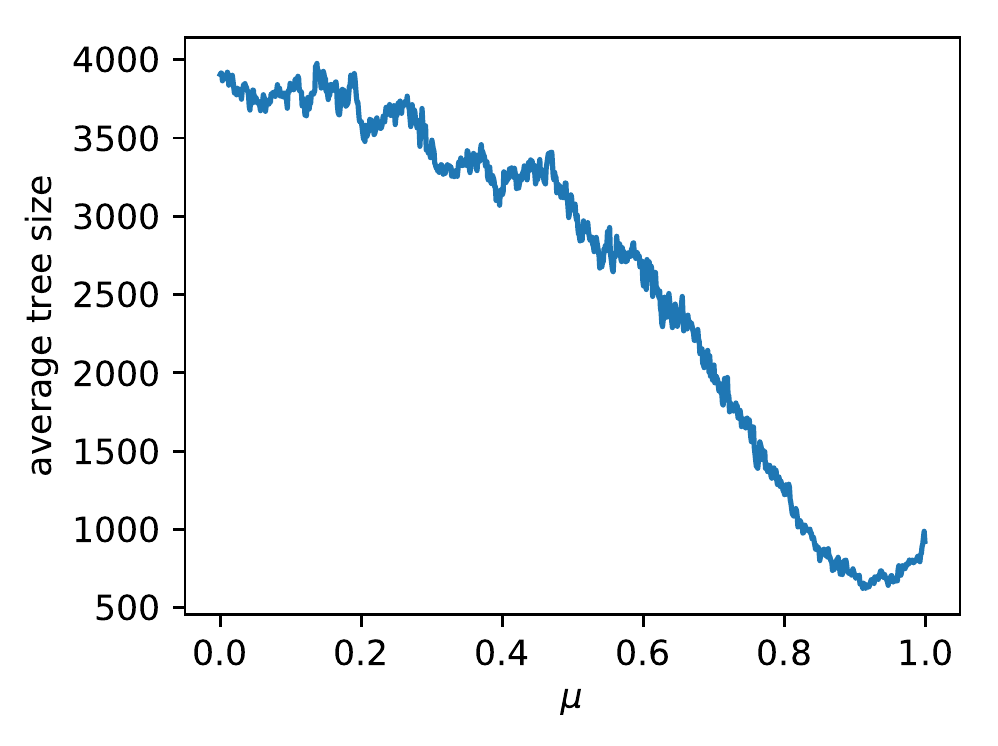}\centering
\caption{CATS ``regions''}
\end{subfigure}\qquad
\begin{subfigure}{.3\textwidth}
\includegraphics[width=\textwidth]{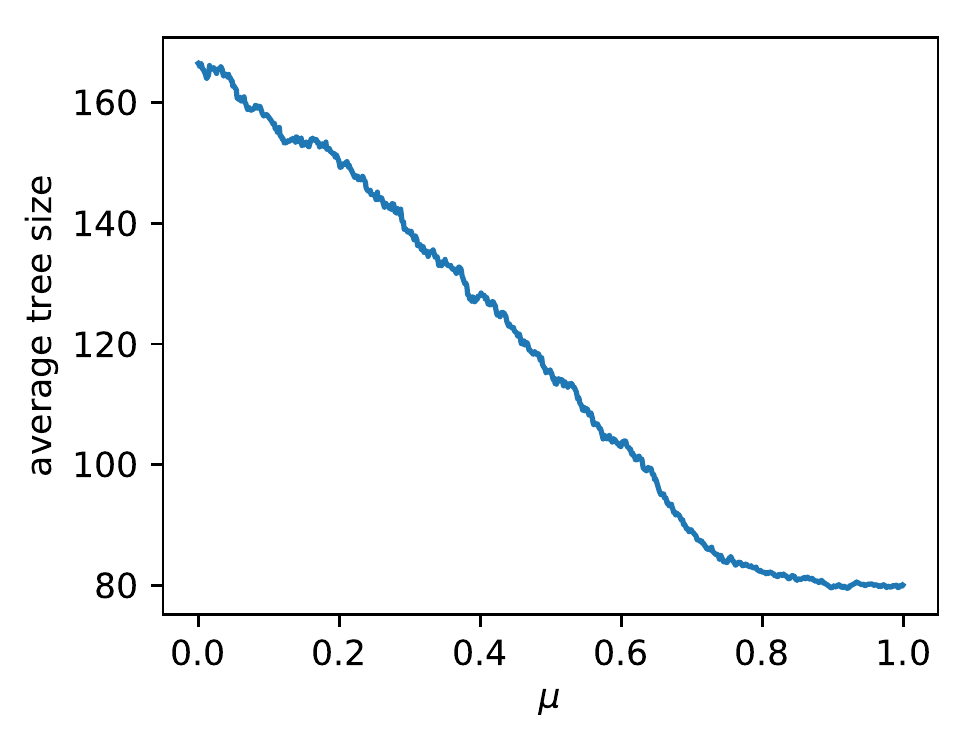}\centering
\caption{Facility location}
\end{subfigure}\qquad
\begin{subfigure}{.3\textwidth}
\includegraphics[width=\textwidth]{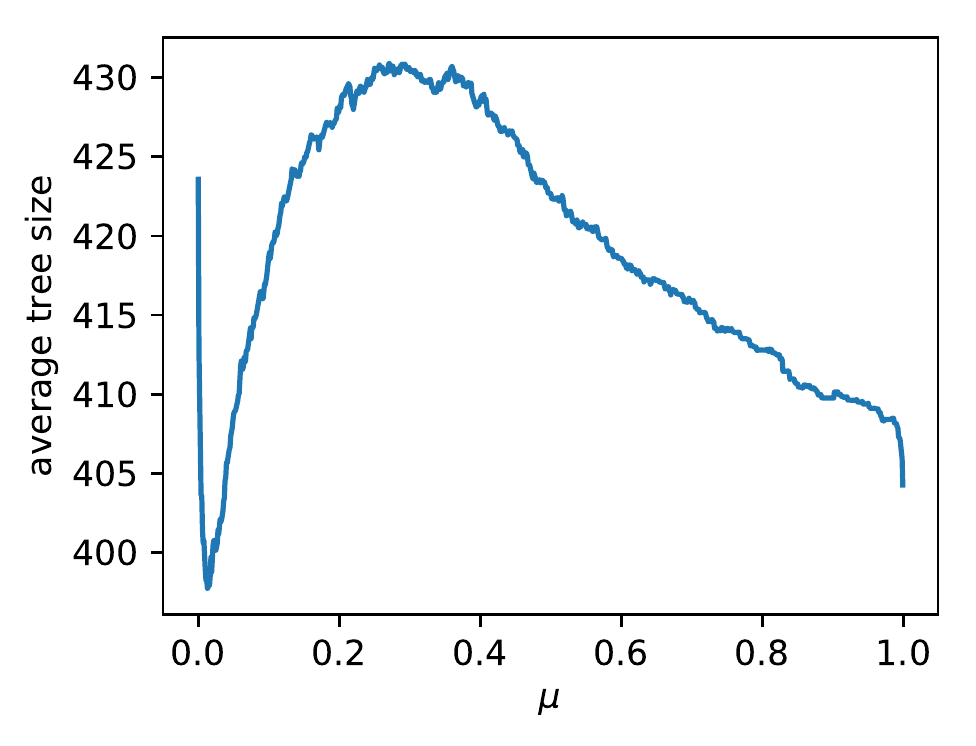}\centering
\caption{Linear separators}
\end{subfigure}\qquad
\begin{subfigure}{.3\textwidth}
\includegraphics[width=\textwidth]{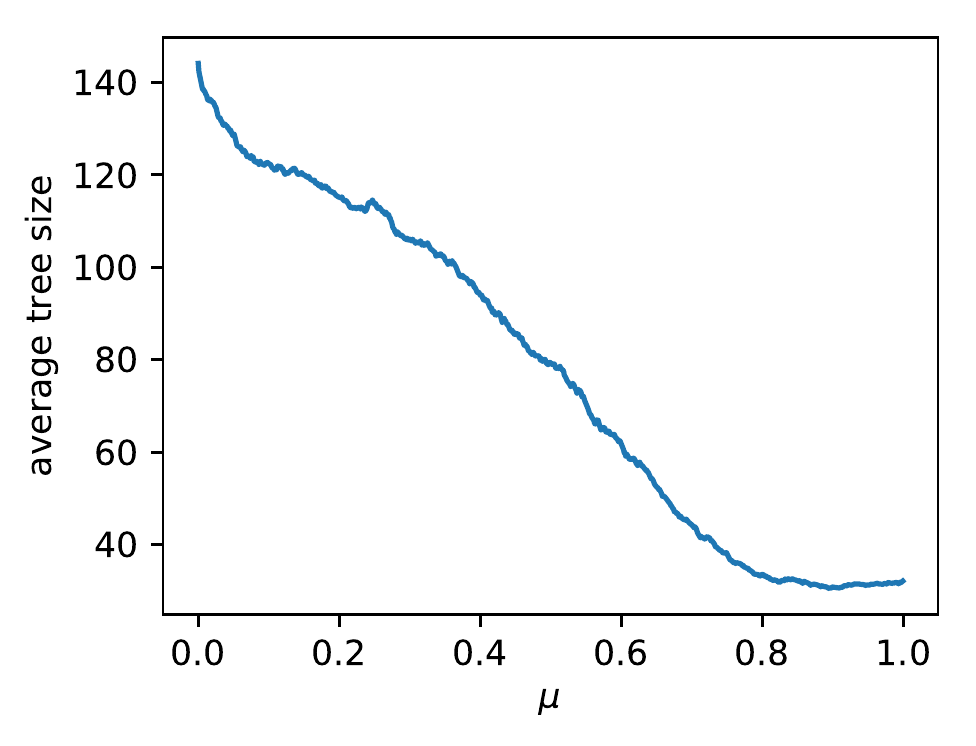}\centering
\caption{Clustering}
\end{subfigure}
\caption{The average tree size produced by B\&B when run with the linear scoring rule with parameter $\mu$.}
\label{fig:treesize}
\end{figure}

In this section, we show that the parameter of the variable selection rule in
B\&B algorithms for MILP can have a dramatic effect on the average tree size
generated for several domains, and no parameter value is effective across the
multiple natural distributions.

\paragraph{Experimental setup.} We use the C API of IBM ILOG CPLEX
12.8.0.0 to override the default variable selection rule using a branch
callback.
Additionally, our callback
performs extra book-keeping to determine a finite set of
values for the parameter that give rise to {\em all} possible B\&B trees for a given instance (for the given choice of branching rules that our algorithm is learning to weight). This ensures that there are no good or bad values for the parameter that get skipped; such skipping could be problematic according to our theory in Section~\ref{sec:WC_dist}.
We run CPLEX exactly once for each possible B\&B tree on
each instance. Following \citet{Khalil16:Learning, Fischetti12:Branching}, and \citet{Karzan09:Information}, we disable CPLEX's cuts and primal heuristics, and we also disable its root-node preprocessing.
The CPLEX node selection policy
is set to ``best bound'' (aka. $A^*$ in AI), which is the most typical choice in MILP. All experiments are run on a cluster of 64
c3.large Amazon AWS instances.

For each of the following application domains, Figure~\ref{fig:treesize} shows
the average B\&B tree size produced for each possible value of the $\mu$ parameter
for the linear scoring rule, averaged over $m$ independent samples from the
distribution.

\subparagraph{Combinatorial auctions.} We generate $m=100$
instances of the combinatorial auction winner determination problem under the OR-bidding language~\citep{Sandholm02:Algorithm}, which makes this problem equivalent to weighted set packing. The problem is NP-complete. We encode each instance as a binary MILP (see Example~\ref{ex:WD}). We use the Combinatorial
Auction Test Suite (CATS)~\citep{Leyton00:Toward} to generate these instances.
We use the ``arbitrary'' generator with 200 bids and 100 goods and ``regions'' generator with 400
bids and 200 goods.

\subparagraph{Facility location.} Suppose there is a set $I$ of customers
and a set $J$ of facilities that have not yet been built. The facilities each
produce the same good, and each consumer demands one unit of that good. Consumer
$i$ can obtain some fraction $y_{ij}$ of the good from facility $j$, which costs
them $d_{ij}y_{ij}$. Moreover, it costs $f_j$ to construct facility $j$. The
goal is to choose a subset of facilities to construct while minimizing total
cost. In Appendix~\ref{app:experiments} we show how to formulate facility
location as a MILP. We generate $m=500$ instances with 70 facilities and 70
customers each. Each cost $d_{ij}$ is uniformly sampled from $\left[0,
10^4\right]$ and each cost $f_j$ is uniformly sampled from $\left[0, 3\cdot
10^3\right]$. This distribution has regularly been used to generate benchmark
sets for facility location~\citep{Gilpin11:Information, Alekseeva15:Exact, Goldengorin11:Optimal, Kochetov05:Computationally, Homberger08:Two}.

\subparagraph{Clustering.} Given $n$ points $P = \left\{p_1, \dots,
p_n\right\}$ and pairwise distances $d(p_i, p_j)$ between each pair of points
$p_i$ and $p_j$, the goal of $k$-means clustering is to find $k$ \emph{centers}
$C = \left\{c_1, \dots, c_k\right\} \subseteq P$ such that the following
objective function is minimized: $\sum_{i = 1}^n \min_{j \in [k]} d\left(p_i,
c_j\right)^2.$ In Appendix~\ref{app:experiments}, we show how to formulate this
problem as a MILP. We generate $m=500$ instances with 35 points each and $k =
5$. We set $d(i,i) = 0$ for all $i$ and choose $d(i,j)$ uniformly at random from
$[0,1]$ for $i \neq j$. These distances do not satisfy the triangle inequality
and they are not symmetric (i.e., $d(i,j) \neq d(j,i)$), which tends to lead
to harder MILP instances than using Euclidean distances between randomly chosen
points in $\reals^d$.

\subparagraph{Agnostically learning linear separators.} Let $\vec{p}_1,
\dots, \vec{p}_N$ be $N$ points in $\R^d$ labeled by $z_1, \dots, z_N \in
\{-1,1\}$. Suppose we wish to learn a linear separator $\vec{w} \in \R^d$ that
minimizes 0-1 loss, i.e., $\sum_{i = 1}^N \mathbf{1}_{\left\{z_i\left\langle
\vec{p}_i, \vec{w}\right\rangle < 0\right\}}$. In
Appendix~\ref{app:experiments}, we show how to formulate this problem as a MILP.
We generate $m=500$ problem instances with 50 points $\vec{p}_1, \dots,
\vec{p}_{50}$ from the 2-dimensional standard normal distribution. We sample the
true linear separator $\vec{w}^*$ from the 2-dimensional standard Gaussian
distribution and label point $\vec{p}_i$ by $z_i = \sgn(\langle \vec{w}^*,
\vec{p}_i \rangle )$. We then choose $10$ random points and flip their labels so
that there is no consistent linear separator.

\paragraph{Experimental results.} The relationship between the variable
selection parameter and the average tree size varies greatly from application to
application. This implies that the parameters should be tuned on a
per-application basis, and that no parameter value is universally effective. In
particular, the optimal parameter for the ``regions'' combinatorial auction
problem, facility location, and clustering is close to 1. However, that value is
severely suboptimal for the ``arbitrary'' combinatorial auction domain,
resulting in trees that are
three times the size of the trees obtained under the optimal parameter value.

\paragraph{Data-dependent guarantees.}
We now explore data-dependent generalization guarantees. To prove our worst-case guarantee
Theorem~\ref{thm:MILP_WCpdim}, we show in Lemma~\ref{lem:induction} that for any
MILP instance over $n$ binary variables, there are $T \leq
2^{n(n-1)/2}n^n$ intervals $I_1, \dots, I_T$ partitioning $[0,1]$
where for any interval $I_j$, across all $\mu \in I_j$, the scoring rule
$\mu\score_1 + (1-\mu)\score_2$ results in the same B\&B tree. Our generalization guarantees grow logarithmically in
the number of intervals. In practice, we find that the number of
intervals partitioning $[0,1]$ is much smaller than $2^{n(n-1)/2}n^n$. In this section, we take advantage of this
data-dependent simplicity to derive stronger generalization guarantees when the number of
intervals partitioning $[0,1]$ is small. To do so, we move from pseudo-dimension to Rademacher complexity \citep{Bartlett02:Rademacher,
Koltchinskii01:Rademacher} since Rademacher complexity implies distribution-dependent guarantees whereas pseudo-dimension implies generalization guarantees that are worst-case over the distribution.
We now define empirical Rademacher complexity.
\begin{definition}[Empirical Rademacher complexity]\label{def:erad}
Let $\mathcal{F}$ be a class of functions mapping an abstract domain $\domain$ to the set $[-\kappa,\kappa]$. The \emph{empirical Rademacher complexity} of $\mathcal{F}$ with respect to a sample $\sample = \left\{z_1, \dots, z_m\right\} \subseteq \domain$ of size $m$ is defined as \[\erad(\mathcal{F}) = \frac{1}{m} \E_{\vec{\sigma} \sim \{-1,1\}^m}\left[\sup_{f \in \mathcal{F}}\sum_{i = 1}^m \sigma[i] f\left(z_i\right)\right].\]
\end{definition}
The following \emph{generalization guarantee} based on Rademacher complexity is well-known.
\begin{theorem}\label{thm:gen_guarantee_rad}
For any distribution $\dist$ over $\domain$, with probability at least $1-\delta$ over the draw of $\sample = \left\{z_1, \dots, z_m\right\} \sim \dist^m$, for all $f \in \mathcal{F}$, $\left|\E_{z \sim \dist}[f(z)] - \frac{1}{m} \sum_{i = 1}^m f\left(z_i\right)\right| \leq 2\erad(\mathcal{F}) + 4\kappa\sqrt{\frac{2}{m}\ln\frac{4}{\delta}}.$
\end{theorem}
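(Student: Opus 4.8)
The plan is to follow the classical symmetrization argument behind uniform-convergence bounds via Rademacher complexity. Define the one-sided supremum deviation $\Phi(\sample) = \sup_{f \in \fclass}\left(\E_{z \sim \dist}[f(z)] - \frac{1}{m}\sum_{i=1}^m f(z_i)\right)$. The first step is concentration of $\Phi$ around its mean: since every $f \in \fclass$ takes values in $[-\kappa,\kappa]$, replacing a single coordinate $z_i$ of $\sample$ changes $\frac{1}{m}\sum_j f(z_j)$ by at most $2\kappa/m$ uniformly over $f$, so $\Phi$ satisfies the bounded-differences property with constants $2\kappa/m$. McDiarmid's inequality then yields that with probability at least $1-\delta/2$ over $\sample \sim \dist^m$, $\Phi(\sample) \leq \E_{\sample}[\Phi(\sample)] + \kappa\sqrt{\frac{2\ln(2/\delta)}{m}}$.

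Second, I would bound $\E_{\sample}[\Phi(\sample)]$ by twice the \emph{expected} Rademacher complexity $\mathcal{R}_m(\fclass) = \E_{\sample,\vec{\sigma}}\left[\sup_{f\in\fclass}\frac{1}{m}\sum_i \sigma[i] f(z_i)\right]$ using the ghost-sample trick: introduce an independent copy $\sample' = \{z_1',\dots,z_m'\}$, rewrite $\E_{z\sim\dist}[f(z)]$ as $\E_{\sample'}[\frac{1}{m}\sum_i f(z_i')]$, move the supremum inside the $\sample'$-expectation with Jensen's inequality, insert i.i.d.\ uniform signs $\sigma[i] \in \{-1,1\}$ (valid because $f(z_i') - f(z_i)$ is symmetric about zero), and split the supremum over the two sums. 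This gives $\E_{\sample}[\Phi(\sample)] \leq 2\mathcal{R}_m(\fclass)$.

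Third, I would pass from the expected to the empirical Rademacher complexity $\erad(\fclass)$. Viewed as a function of $\sample$, $\erad(\fclass)$ also has bounded differences with constants $2\kappa/m$, so a second application of McDiarmid gives, with probability at least $1-\delta/2$, $\mathcal{R}_m(\fclass) \leq \erad(\fclass) + \kappa\sqrt{\frac{2\ln(2/\delta)}{m}}$. Finally, to obtain the two-sided bound with an absolute value, I would observe that $\sup_{f}\left|\E_{z\sim\dist}[f(z)] - \frac{1}{m}\sum_i f(z_i)\right|$ equals the maximum of $\Phi(\sample)$ and the analogous quantity for the class $-\fclass$, and that $\erad(-\fclass) = \erad(\fclass)$; a union bound over the (at most four) failure events, followed by chaining the displayed inequalities and absorbing constants, yields $\left|\E_{z\sim\dist}[f(z)] - \frac{1}{m}\sum_{i=1}^m f(z_i)\right| \leq 2\erad(\fclass) + 4\kappa\sqrt{\frac{2}{m}\ln\frac{4}{\delta}}$ simultaneously for all $f \in \fclass$, with probability at least $1-\delta$.

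There is no deep obstacle here --- this is a textbook result, and one could alternatively cite \citep{Bartlett02:Rademacher} and \citep{Koltchinskii01:Rademacher} directly. The only points requiring care are making the symmetrization step fully rigorous (measurability of the supremum and the exchange of expectation and supremum) and tracking the union-bound bookkeeping so the constants match the stated bound: the factor $4/\delta$ inside the logarithm arises from splitting the failure probability four ways, across the two sides of the absolute value and the two McDiarmid applications, and the constant $4$ in front is a convenient (loose) upper bound on the resulting coefficient.
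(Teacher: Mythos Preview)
Your proposal is correct and follows the standard textbook derivation (symmetrization plus McDiarmid, with a union bound to pass to the two-sided empirical Rademacher bound). The paper itself does not prove this theorem; it simply states it as ``well-known'' and cites \citep{Bartlett02:Rademacher, Koltchinskii01:Rademacher}, so there is nothing to compare against beyond noting that your write-up is exactly the argument one would find in those references.
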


To obtain data-dependent generalization guarantees, we rely on the following theorem, which follows from truncating the proof of Massart's finite lemma~\citep{Massart00:Some}, as observed by~\citet{Riondato15:Mining}. For a function class $\fclass$ and a  set $\sample  = \left\{z_1, \dots, z_m\right\} \subseteq \domain$, we use the notation $\fclass(\sample)$ to denote the set of vectors $\left\{\left(f\left(z_1\right), \dots, f\left(z_m\right)\right) : f \in \fclass \right\}$.

\begin{theorem}\label{thm:main_convex}
For a sample $\sample$ of size $m$, suppose that $\fclass(\sample)$ is finite.
Then \[\erad(\fclass) \leq \inf_{\lambda > 0}\frac{1}{\lambda} \log \left( \sum_{\vec{a} \in \fclass(\sample)}\exp \left(\frac{1}{2}\left(\frac{\lambda ||\vec{a}||_2}{m}\right)^2\right)\right).\]
\end{theorem}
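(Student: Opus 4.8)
The plan is to run the standard moment-generating-function argument behind Massart's finite class lemma, but to stop \emph{before} optimizing over the free parameter; stopping early is precisely what converts the usual closed-form bound into the stated $\inf_{\lambda>0}$ expression. Since $\fclass(\sample)$ is finite, write $\fclass(\sample) = \{\vec{a}_1, \dots, \vec{a}_N\}$, where each $\vec{a}_k$ collects the values $\bigl(f_k(z_1), \dots, f_k(z_m)\bigr)$ of some $f_k \in \fclass$. Then $\erad(\fclass) = \frac{1}{m}\,\E_{\vec{\sigma}}\bigl[\max_{\vec{a} \in \fclass(\sample)} \langle \vec{\sigma}, \vec{a}\rangle\bigr]$, where $\langle \vec{\sigma}, \vec{a}\rangle = \sum_{i=1}^m \sigma[i]\, a[i]$ and $\vec{\sigma}$ is uniform on $\{-1,1\}^m$, so it suffices to bound $\E_{\vec{\sigma}}\bigl[\max_{\vec{a}} \langle \vec{\sigma}, \vec{a}\rangle\bigr]$.

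Fix $\lambda > 0$. First I would apply Jensen's inequality to the convex map $t \mapsto e^{\lambda t}$ and then replace the maximum of nonnegative terms by their sum:
\[
\exp\!\Bigl(\lambda\, \E_{\vec{\sigma}}\bigl[\max_{\vec{a}} \langle \vec{\sigma}, \vec{a}\rangle\bigr]\Bigr)
\le \E_{\vec{\sigma}}\Bigl[\max_{\vec{a}} e^{\lambda \langle \vec{\sigma}, \vec{a}\rangle}\Bigr]
\le \sum_{\vec{a} \in \fclass(\sample)} \E_{\vec{\sigma}}\bigl[e^{\lambda \langle \vec{\sigma}, \vec{a}\rangle}\bigr].
\]
Next I would bound each summand using independence of the coordinates of $\vec{\sigma}$ and the elementary inequality $\cosh(x) \le e^{x^2/2}$:
\[
\E_{\vec{\sigma}}\bigl[e^{\lambda \langle \vec{\sigma}, \vec{a}\rangle}\bigr]
= \prod_{i=1}^m \cosh\bigl(\lambda\, a[i]\bigr)
\le \prod_{i=1}^m e^{\lambda^2 a[i]^2 / 2}
= e^{\lambda^2 \|\vec{a}\|_2^2 / 2}.
\]

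Combining the two displays, taking logarithms, and dividing by $\lambda$ gives $\E_{\vec{\sigma}}\bigl[\max_{\vec{a}} \langle \vec{\sigma}, \vec{a}\rangle\bigr] \le \frac{1}{\lambda}\log\bigl(\sum_{\vec{a}} e^{\lambda^2 \|\vec{a}\|_2^2/2}\bigr)$; dividing by $m$ and then substituting $\lambda \mapsto \lambda/m$ turns the right-hand side into exactly $\frac{1}{\lambda}\log\bigl(\sum_{\vec{a}} \exp\bigl(\tfrac{1}{2}(\lambda \|\vec{a}\|_2 / m)^2\bigr)\bigr)$, and taking the infimum over $\lambda > 0$ completes the proof. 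I do not expect a genuine obstacle: each ingredient---Jensen, the union-bound step, and the sub-Gaussian MGF estimate for Rademacher sums---is standard, and the only point requiring care is the final rescaling, so that the $1/m$ normalization in the definition of $\erad$ lands inside the exponent exactly as $(\lambda\|\vec{a}\|_2/m)^2$ rather than an off-by-a-factor-of-$m$ version.
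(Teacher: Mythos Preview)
Your proposal is correct and is exactly what the paper has in mind: the paper does not give its own proof but simply states that the bound ``follows from truncating the proof of Massart's finite lemma,'' and your Jensen / union-bound / Rademacher MGF argument stopped before optimizing in $\lambda$ is precisely that truncation.
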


Figure~\ref{fig:generalization} shows a comparison of the worst-case versus
data-dependent guarantees. The latter are significantly better. For ease of comparison, we compare the data-dependent bound provided by Theorem~\ref{thm:main_convex} with a variation of Theorem~\ref{thm:MILP_WCpdim} in terms of Rademacher complexity. See Theorem~\ref{thm:MILP_WCrad} in Appendix~\ref{app:experiments} for details.
\begin{figure}[t]\centering
\begin{subfigure}{.3\textwidth}
\includegraphics[width=\textwidth]{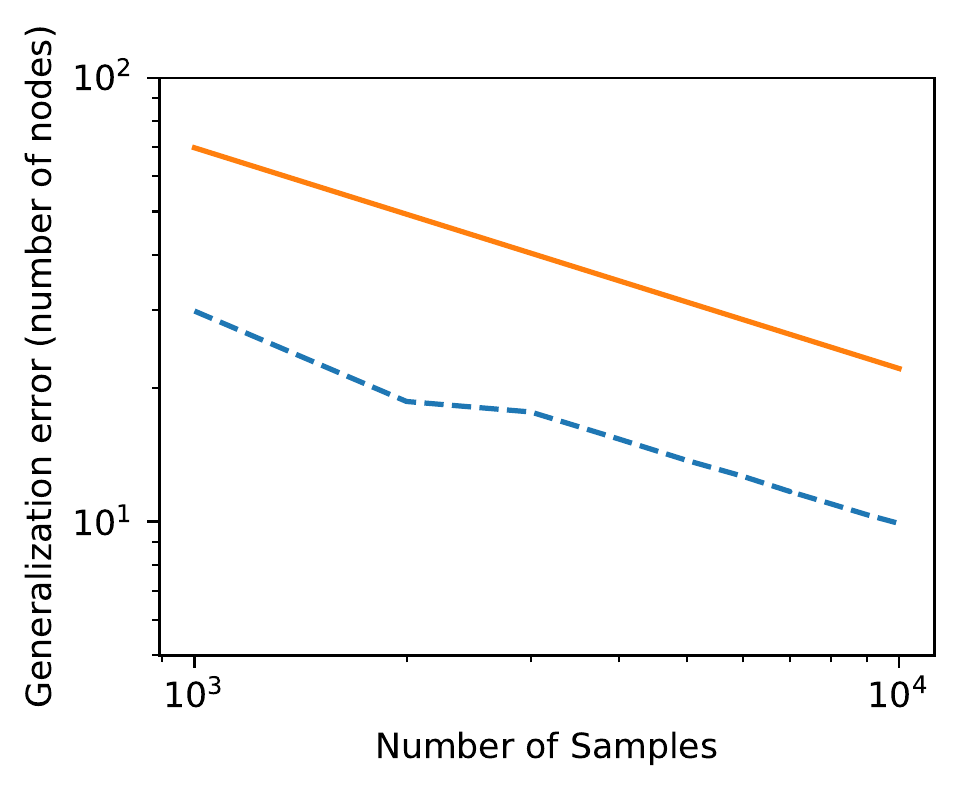}\centering
\caption{Linear separators}\label{fig:10loss10}
\end{subfigure}\qquad
\begin{subfigure}{.3\textwidth}
\includegraphics[width=\textwidth]{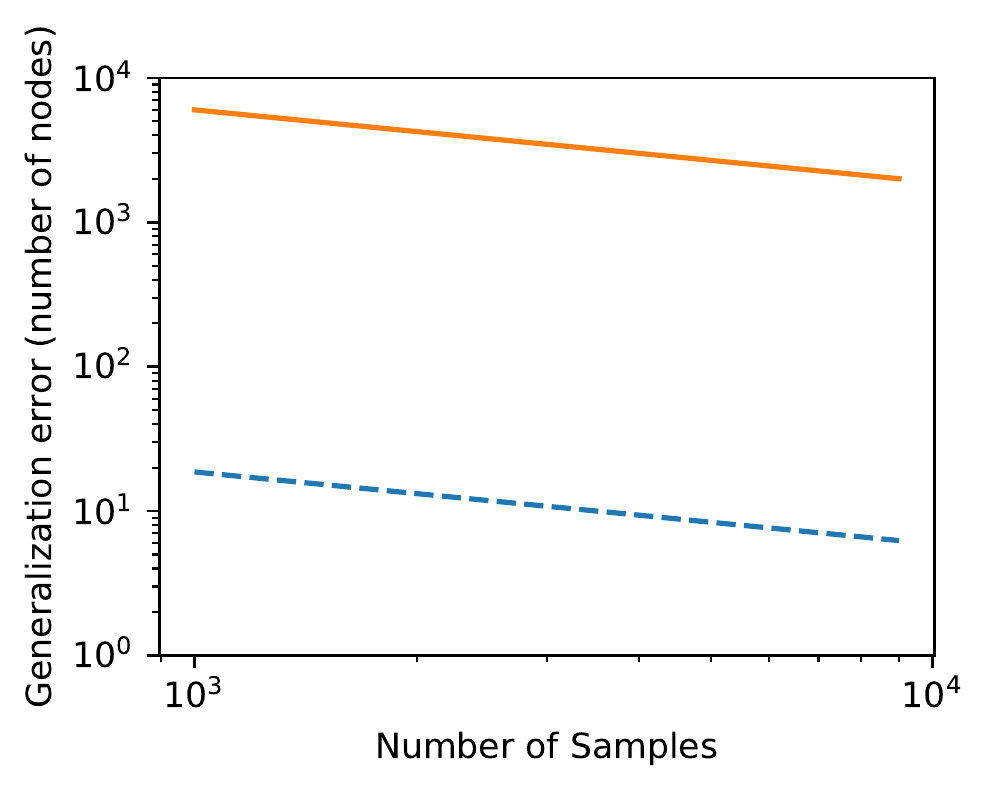}\centering
\caption{Clustering}\label{fig:kmeans_k3_35_5d}
\end{subfigure}
\caption{Data-dependent generalization guarantees (the dotted lines) and worst-case generalization guarantees (the solid lines). Figure~\ref{fig:10loss10} displays generalization guarantees for linear separators when
there are 10 points drawn from the 2-dimensional standard normal distribution, the true classifier is drawn from the 2-dimensional standard normal distribution, and 4 random points have their label flipped. Figure~\ref{fig:kmeans_k3_35_5d} shows generalization guarantees for $k$-means clustering MILPs when there are 35 points drawn from the 5-dimensional standard normal distribution and $k = 3$. In both plots, we cap the tree size at 150 nodes.}
\label{fig:generalization}
\end{figure}

\section{Constraint satisfaction problems}\label{sec:CSP}

In this section, we describe tree search for constraint satisfaction problems. The generalization guarantee from Section~\ref{sec:sample} also applies to tree search in this domain, as we describe in Appendix~\ref{app:TS}.

A constraint satisfaction problem (CSP) is a tuple $(X, D, C)$, where $X = \left\{x_1, \dots, x_n\right\}$ is a set of variables, $D = \left\{D_1, \dots, D_n\right\}$ is a set of domains where $D_i$ is the set of values variable $x_i$ can take on, and $C$ is a set of constraints between variables. Each constraint in $C$ is a pair $\left(\left(x_{i_1}, \dots, x_{i_r}\right), \psi\right)$ where $\psi$ is a function mapping $D_{i_1} \times \cdots \times D_{i_r}$ to $\{0,1\}$ for some $r \in [n]$ and some $i_1, \dots, i_r \in [n]$.
Given an assignment $\left(y_1, \dots, y_n\right) \in D_1 \times \cdots \times D_n$  of the variables in $X$, a constraint $\left(\left(x_{i_1}, \dots, x_{i_r}\right), \psi\right)$ is satisfied if $\psi\left(y_{i_1}, \dots, y_{i_r}\right) = 1$.
The goal is to find an assignment that maximizes the number of satisfied constraints.

The \emph{degree} of a variable $x$, denoted $\deg(x)$, is the number of constraints involving $x$. The \emph{dynamic degree} of (an unassigned variable) $x$ given a partial assignment $\vec{y}$, denoted $\ddeg(x, \vec{y})$ is the number of constraints involving $x$ and at least one other unassigned variable.

\begin{example}[Graph $k$-coloring]\label{ex:graph}
\begin{figure}
\centering
\begin{subfigure}{0.4\textwidth}
\includegraphics[scale=.8]{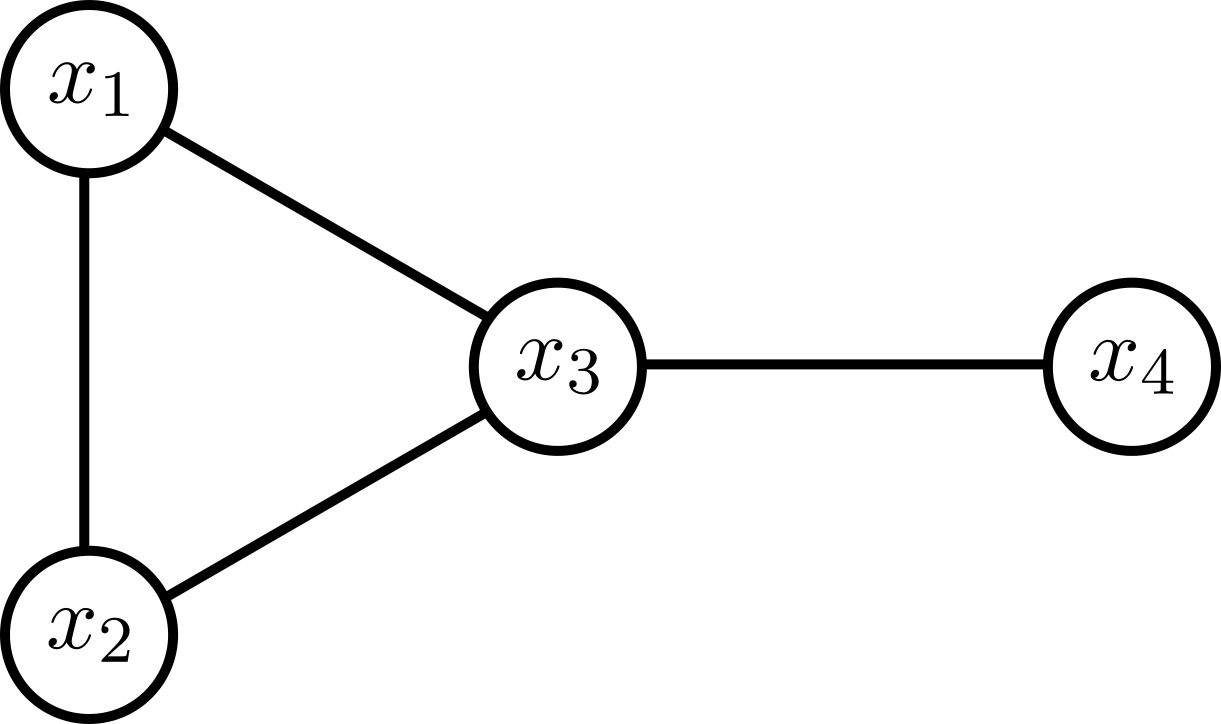} \centering
\caption{}
\label{fig:uncolored}
\end{subfigure}\qquad
\begin{subfigure}{0.4\textwidth}
\includegraphics[scale=.8]{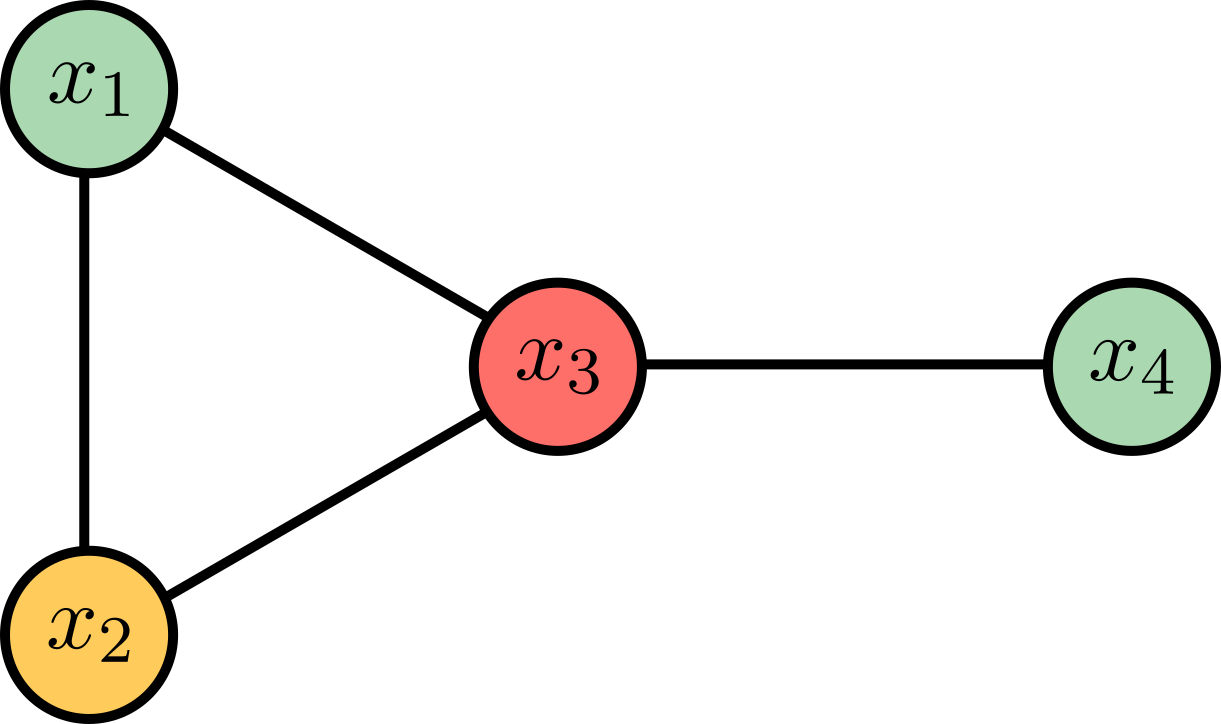}\centering
\caption{}
\label{fig:colored}
\end{subfigure}\qquad
\caption{Illustrations of Example~\ref{ex:graph}.}
\end{figure}
Given a graph, the goal of this problem is to color its vertices using at most $k$ colors such that no two adjacent vertices share the same color. This problem can be formulated as a CSP, as illustrated by the following example.
Suppose we want to 3-color the graph in Figure~\ref{fig:uncolored} using pink, green, and orange. The four vertices correspond to the four variables $X= \{x_1, \dots, x_4\}$. The domain $D_1 = \cdots = D_4 = \{\text{pink, green, orange}\}$. The only constraints on this problem are that no two adjacent vertices share the same color. Therefore we define $\psi$ to be the ``not equal'' relation mapping $\{\text{pink, green, orange}\} \times \{\text{pink, green, orange}\} \to \{0,1\}$ such that $\psi(\omega_1, \omega_2) = \textbf{1}_{\{\omega_1 \not= \omega_2\}}$. Finally, we define the set of constraints to be \[C = \left\{((x_1, x_2), \psi), ((x_1, x_3), \psi), ((x_2, x_3), \psi), ((x_3, x_4), \psi)\right).\] See Figure~\ref{fig:colored} for a coloring that satisfies all constraints ($y_1 = y_4 = \text{green, } y_2 = \text{orange, and } y_3 = \text{pink}).$ 
\end{example}
\subsubsection{CSP tree search}

CSP tree search begins by choosing a variable $x_i$ with domain $D_i$ and building $|D_i|$ branches, each one corresponding to one of the $|D_i|$ possible value assignments of $x$. Next, a node $\node$ of the tree is chosen, another variable $x_j$ is chosen, and $|D_j|$ branches from $\node$ are built, each corresponding to the possible assignments of $x_j$. The search continues and a branch is pruned if any of the constraints are not feasible given the partial assignment of the variables from the root to the leaf of that branch.

\subsubsection{Variable selection in CSP tree search}\label{sec:CSP_VSP}

As in MILP tree search, there are many variable selection policies researchers have suggested for choosing which variable to branch on at a given node. Typically, algorithms associate a score for branching on a given variable $x_i$ at node $\node$ in the tree $\tree$, as in B\&B. The algorithm then branches on the variable with the highest score.
We provide several examples of common variable selection policies below.

\paragraph{deg/dom and ddeg/dom \citep{Bessiere96:MAC}:} deg/dom corresponds to the scoring rule $\score(\tree, \node, i) = \deg(x_i)/|D_i|$ and ddeg/dom corresponds to the scoring rule $\score(\tree, \node, i) = \ddeg(x_i, \vec{y})/|D_i|$, where $\vec{y}$ is the assignment of variables from the root of $\tree$ to $\node$.

\paragraph{Smallest domain \citep{Haralick80:Increasing}:} In this case, $\score(\tree, \node, i) = 1/|D_i|$.

\bigskip

Our theory is for tree search and applies to both MILPs and CSPs.
It applies both to lookahead approaches that require learning the weighting of the two children (the more promising and less promising child) and to approaches that require learning the weighting of several different scoring rules.
\section{Conclusions and broader applicability}

In this work, we studied machine learning for tree search algorithm configuration. We showed how to learn a nearly optimal mixture of branching (e.g., variable selection) rules. Through experiments, we showed that using the optimal parameter for one application domain when solving problem instances from a different application domain can lead to a substantial tree size blow up. We proved that this blowup can even be exponential. 
We provided the first sample complexity guarantees for tree search algorithm configuration.
With only a small number of samples, the empirical cost incurred by using any mixture of two scoring rules will be close to its expected cost, where cost is an abstract measure such as tree size. We showed that using empirical Rademacher complexity, these bounds can be significantly tightened further.
Through theory and experiments, we showed that learning to branch is practical and hugely beneficial. 

While we presented the theory in the context of tree search, it also applies to other tree-growing applications. For example, it could be used for learning rules for selecting variables to branch on in order to construct small \emph{decision trees} that correctly classify training examples. Similarly, it could be used for learning to branch in order to construct a desirable
{\em taxonomy} of items represented as a tree---for example for representing  customer segments in advertising day to day.

\paragraph{Acknowledgments.}

This work was supported in part by the National Science Foundation under grants CCF-1422910, CCF-1535967,  IIS-1618714, IIS-1718457, IIS-1617590, CCF-1733556, a Microsoft Research Faculty Fellowship, an Amazon Research Award, a NSF Graduate Research Fellowship, and the ARO under award W911NF-17-1-0082.

\bibliography{../../dairefs}
\bibliographystyle{plainnat}
\newpage
\appendix
\section{Additional related work}\label{app:related}

As in this work, \citet{Khalil16:Learning} study variable selection policies. Their goal is to find a variable selection strategy that mimics the behavior of the classic branching strategy known as \emph{strong branching} while running faster than strong branching. They split a single MILP instance into a training and test set. The beginning of the B\&B algorithm is the training phase: their algorithm assigns features to each node and observes the variable selected to branch on by strong branching using the product scoring rule (see the definition in Section~\ref{sec:VSP}). Collecting these features and variable observations for a pre-specified number of nodes, they thus amass a training set. Their algorithm then uses this training set together with a classic ranking algorithm from the machine learning literature~\citep{Joachims02:Optimizing} to determine a function $f$ that, ideally, will imitate strong branching. In a bit more detail, on each node the algorithm encounters as it explores the rest of the tree, the algorithm assigns features $\vec{\phi}$ to that node, evaluates the function $f(\vec{\phi})$ which returns a variable to branch on, and the algorithm branches on that variable. The hope is that $f(\vec{\phi})$ matches the variable strong branching would choose to branch on at that node.

Other works have explored the use of machine learning techniques in the context of other aspects of B\&B beyond variable selection. For example, \citet{He14:Learning} use machine learning to speed up branch-and-bound, focusing on speeding up the node selection policy. Their work does not provide any learning-theoretic guarantees. Beyond node selection, \citet{He14:Learning} also aim to learn a \emph{pruning policy}, which determines whether to fathom a given node. This policy may occasionally fathom a node even when the branch contains the optimal solution, but ideally, this will happen infrequently or the resulting solution will be nearly optimal. Their learning algorithm takes as input a set of MIP instances and their optimal solutions. For each MIP in this set, the algorithm constructs a simple \emph{oracle} which returns, given a set of B\&B leaf nodes as input, the leaf whose feasible set contains the optimal solution. The learning algorithm uses the oracle to train the node selection and pruning policies using \emph{imitation learning}. Ideally, the resulting node selection policy should choose the nodes the oracle selects, and the pruning policy should prune any node that does not contain the optimal solution in its feasible set. 
\section{Variable selection policies}\label{app:VSP}

\begin{algorithm}
\caption{Generic variable selection}\label{alg:generic}
\begin{algorithmic}[1]
   \Require Current subproblem $Q$.
   \State Let $F = \{j \in I \ | \ \breve{x}_{Q}[j] \not\in \Z\}$ be the set of candidate variables.
\State For all candidates $i \in F$, calculate a score value $\score(Q, i)$.
   \Ensure $\argmax_{i} \{\score(Q, i)\}$.
\end{algorithmic}
\end{algorithm}
\section{Proofs from Section~\ref{sec:theory}}\label{app:theory}

\paragraph{Notation.} In the proofs in this section, we will use the following notation. Let $Q$ be a MILP instance. Suppose that we branch on $x_i$ and $x_j$, setting $x_i = 0$ and $x_j = 1$. We use the notation $Q_{i,j}^{-,+}$ to denote the resulting MILP. Similar, if we set $x_i = 1$ and $x_j = 0$, we denote the resulting MILP as $Q_{i,j}^{+,-}$.

\wcDist*

 \begin{proof}
We populate the support of the distribution $\dist$ by relying on two helpful theorems: Theorem~\ref{thm:families} and \ref{thm:families2}. In Theorem~\ref{thm:families}, we prove that for all $\mu^* \in \left(\frac{1}{3}, \frac{2}{3}\right)$, there exists an infinite family $\mathcal{F}_{n, \mu^*}$ of MILP instances such that for any $Q \in \mathcal{F}_{n, \mu^*}$, if $\mu \in \left[0,\mu^*\right)$, then the scoring rule $\mu \score_1 + (1-\mu)\score_2$ results in a B\&B tree with $O(1)$ nodes and if $\mu \in \left(\mu^*,1\right]$, the scoring rule results a tree with $2^{(n-4)/2}$ nodes. Conversely, in Theorem~\ref{thm:families2}, we prove that there exists an infinite family $\mathcal{G}_{n, \mu^*}$ of MILP instances such that for any $Q \in \mathcal{G}_{n, \mu^*}$, if $\mu \in \left[0,\mu^*\right)$, then the scoring rule $\mu \score_1 + (1-\mu)\score_2$ results in a B\&B tree with $2^{(n-5)/4}$ nodes and if $\mu \in \left(\mu^*,1\right]$, the scoring rule results a tree with $O(1)$ nodes.

Now, let $Q_a$ be an arbitrary instance in $\mathcal{G}_{n, a}$ and let $Q_b$ be an arbitrary instance in $\mathcal{F}_{n, b}$. The theorem follows by letting $\dist$ be a distribution such that $\Pr_{Q \sim \dist}\left[Q = Q_a\right] = \Pr_{Q \sim \dist}\left[Q = Q_b\right] = 1/2.$ We know that if $\mu \in [0,1] \setminus (a,b)$, then the expected value of $\cost\left(Q, \mu\score_1 + (1-\mu)\score_2\right)$ is
$\frac{1}{2}\left(O(1) + 2^{(n-5)/4}\right) \geq 2^{(n-9)/4}.$ Meanwhile, if $\mu \in (a,b)$, then with probability 1, \[\cost\left(Q, \mu\score_1 + (1-\mu)\score_2\right) = O(1).\]

Throughout the proof of this theorem, we assume the node selection policy is depth-first search. We then prove that for any infeasible MILP, if NSP and NSP' are two node selection policies and $\score = \mu\score_1 + (1-\mu)\score_2$ for any $\mu \in [0,1]$, then tree $\tree$ B\&B builds using NSP and $\score$ equals the tree $\tree'$ it builds using NSP' and $\score$ (see Theorem~\ref{thm:NSP}). Thus, the theorem holds for any node selection policy.\end{proof}

\families*

\begin{proof}
For ease of notation in this proof, we will drop $\tree$ from the input of the functions $\score_1$ and $\score_2$ since the scoring rules do not depend on $\tree$, they only depend on the input MILP instance and variable.

For any constant $\gamma \geq 1$, let $\vec{c}_1 = \gamma(1, 2, \dots, n-3)$ and let $\vec{c}_2 = \gamma\left(0, 1.5, 3-\frac{1}{2\mu^*}\right)$. Let $\vec{c} = \left(\vec{c}_1, \vec{c}_2\right) \in \R^n$ be the concatenation of $\vec{c}_1$ and $\vec{c}_2$. Next, define the $n$-dimensional vectors $\vec{a}_1 = 2\sum_{i = 1}^{n-3} \vec{e}_i$ and $\vec{a}_2 = 2\sum_{i = 1}^3 \vec{e}_{n-3+i}$, and let $A$ be a matrix whose first row is $\vec{a}_1$ and second row is $\vec{a}_2$. Let $Q_{\gamma,n}$ be the MILP
\[\begin{array}{ll}
\text{maximize}&\vec{c} \cdot \vec{x}\\
\text{subject to} & A\vec{x} = (n-3, 3)^{\top}\\
& \vec{x} \in \{0,1\}^n.
\end{array}\] We define $\mathcal{F}_{n, \mu^*} = \left\{Q_{n, \gamma} : \gamma \geq 1\right\}.$

\begin{example}\label{ex:wc}
If $\gamma = 1$ and $n = 8$, then $Q_{\gamma, n}$ is \[\begin{array}{ll}
\text{maximize}&\left(1, 2, 3, 4, 5, 0, 1.5, 3-\frac{1}{2\mu^*}\right) \cdot \vec{x}\\
\text{subject to} & \begin{pmatrix}
2 & 2 & 2 & 2 & 2 & 0 & 0 & 0\\
0 & 0 & 0 & 0 & 0 & 2 & 2 & 2
\end{pmatrix}\vec{x} = \begin{pmatrix}
5\\ 3
\end{pmatrix}\\
& \vec{x} \in \{0,1\}^8.
\end{array}\]\end{example}

For every even $n \geq 6$, both of the constraints $\vec{a}_1 \cdot \vec{x} = 2\sum_{i = 1}^{n-3} x_i = n-3$ and $\vec{a}_2 \cdot \vec{x} = 2\sum_{i = 1 }^3 x_{n-3+i} = 3$ are infeasible for $\vec{x} \in \{0,1\}^n$ since $2\sum_{i = 1 }^{n-3} x_i$ and $2\sum_{i = 1}^3 x_{n-3+i}$ are even numbers but 3 and $n-3$ are odd numbers. The key idea of this proof is that if branch-and-bound branches on all variables in $\{x_{n-2}, x_{n-1}, x_n\}$ first, it will terminate upon making a tree of size at most $2^3 = 8$, since at most three branches are necessary to determine that $\vec{a}_1 \cdot \vec{x} = 2\sum_{i = 1}^3 x_{n-3+i} = 3$ is infeasible. However, if branch-and-bound branches on all variables in $\{x_1, \dots, x_{n-3}\}$ first, it will create a tree with exponential size before it terminates.

\begin{lemma}\label{lem:mu_big}
Suppose $\mu < \mu^*$. Then for any MILP $Q_{\gamma, n} \in \fclass_{n, \mu^*}$, $\mu \score_1 + (1-\mu)\score_2$ branches on all variables in $\{x_{n-2}, x_{n-1}, x_n\}$ before branching on variables in $\{x_1, \dots, x_{n-3}\}$.
\end{lemma}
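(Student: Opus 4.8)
The plan is to exploit the block structure of $Q_{\gamma,n}$: I would show that whenever B\&B has so far branched only on the three ``small'' variables $x_{n-2},x_{n-1},x_n$, the unique fractional small variable always outscores the unique fractional ``big'' variable under $\mu\score_1+(1-\mu)\score_2$ (with $\mu<\mu^*$), and then induct over the search tree. (As in the proof sketch, ``branches on all of $\{x_{n-2},x_{n-1},x_n\}$ before branching on $\{x_1,\dots,x_{n-3}\}$'' should be read as: B\&B never branches on any variable in $\{x_1,\dots,x_{n-3}\}$, which also yields a tree with $O(1)$ nodes since $2(x_{n-2}+x_{n-1}+x_n)=3$ is infeasible over $\{0,1\}^3$, so every leaf is fathomed within three branches.)

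First I would record the decomposition. The constraint matrix $A$ is block-diagonal and $\vec{c}$ is separable, so for any node $Q$ obtained by fixing some variables to values in $\{0,1\}$, the LP relaxation of $Q$ splits into a \emph{big} LP over $x_1,\dots,x_{n-3}$ (with constraint $2\sum_{i\le n-3}x_i=n-3$) and a \emph{small} LP over $x_{n-2},x_{n-1},x_n$ (with constraint $2(x_{n-2}+x_{n-1}+x_n)=3$), and $\breve{c}_Q$ is the sum of the two optimal values. Consequently, if $x_\ell\in\{x_1,\dots,x_{n-3}\}$ then $\breve{c}_Q-\breve{c}_{Q_\ell^{+}}$ and $\breve{c}_Q-\breve{c}_{Q_\ell^{-}}$ depend only on which big variables have been fixed, not on any small-variable branching. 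So as long as B\&B has branched only on small variables, the big LP is still the original one: its optimum sets $x_1,\dots,x_{(n-4)/2}$ to $0$, $x_{(n-2)/2}$ to $\frac12$, and $x_{n/2},\dots,x_{n-3}$ to $1$, making $x_{(n-2)/2}$ the unique fractional big variable; swapping $x_{(n-2)/2}$ with the appropriate neighbor ($x_{n/2}$ when setting it to $1$, $x_{(n-4)/2}$ when setting it to $0$) shows $\breve{c}_Q-\breve{c}_{Q_{(n-2)/2}^+}=\breve{c}_Q-\breve{c}_{Q_{(n-2)/2}^-}=\frac\gamma2$, so $\mu\score_1+(1-\mu)\score_2$ assigns $x_{(n-2)/2}$ the value $\frac\gamma2$, independent of $\mu$.

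Next I would handle the root. Because $3-\frac{1}{2\mu^*}>\frac32$ (using $\mu^*>\frac13$), the small LP optimum is $(x_{n-2},x_{n-1},x_n)=(0,\frac12,1)$, so the only fractional variables at the root are $x_{(n-2)/2}$ and $x_{n-1}$. A short computation with the small LP gives $\breve{c}_Q-\breve{c}_{Q_{n-1}^+}=\gamma\bigl(\frac34-\frac{1}{4\mu^*}\bigr)$ and $\breve{c}_Q-\breve{c}_{Q_{n-1}^-}=\frac{3\gamma}{4}$; since $\mu^*<\frac12$ the former is the smaller of the two (and it is positive since $\mu^*>\frac13$), so $\mu\score_1(Q,n-1)+(1-\mu)\score_2(Q,n-1)=\frac{3\gamma}{4}-\frac{\mu\gamma}{4\mu^*}$, which exceeds $\frac\gamma2$ exactly when $\mu<\mu^*$. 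Hence B\&B's first branch is on $x_{n-1}$.

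Finally I would run the induction. The claim is that every node $Q$ reached by B\&B via branches on small variables only is either fathomed or has exactly one fractional small variable $x_s$ for which branching on $x_s$ yields at least one LP-infeasible child. This follows from a short case analysis on which of $x_{n-2},x_{n-1},x_n$ are already fixed (after the root step the fixed set always contains $x_{n-1}$) together with their values: if two small variables are fixed then feasibility of $Q$'s LP forces their sum to be $1$ and the third to $\frac12$, so branching on the third gives both children an infeasible integral sum; if only $x_{n-1}$ is fixed, the small LP makes exactly one of the other two fractional and one of its children forces a small variable outside $[0,1]$. Since fixing a variable can only lower the LP optimum, $\breve{c}_Q-\breve{c}_{Q_s^{\pm}}\ge 0$ always, while the infeasible child contributes the ``large'' value $>\|\vec{c}\|_1\ge\gamma$; hence $\mu\score_1(Q,s)+(1-\mu)\score_2(Q,s)\ge(1-\mu)\|\vec{c}\|_1>\frac\gamma2$, using $\mu<\mu^*<\frac12$. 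Together with the decomposition step, B\&B therefore branches on $x_s$ rather than on $x_{(n-2)/2}$, and inducting over the tree, B\&B never branches on any variable in $\{x_1,\dots,x_{n-3}\}$, which is the lemma. \textbf{The main obstacle} is precisely this case analysis: for each partial fixing of the three small variables one must verify the small LP has the asserted unique fractional coordinate and that branching on it makes at least one child infeasible, so the ``large'' value dominates the score; the root computation is the only genuinely arithmetic part, and it is where the threshold $\mu^*$ gets pinned down.
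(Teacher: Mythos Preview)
Your proposal is correct and follows essentially the same approach as the paper: compute the root scores to pin down the threshold $\mu^*$, then show that at every subsequent node reached by branching only on $\{x_{n-2},x_{n-1},x_n\}$ the unique fractional small variable has at least one LP-infeasible child, which forces its score above the constant big-variable score $\gamma/2$. Your block-decomposition framing and case analysis on partial fixings is a cleaner packaging of what the paper does via an explicit node-by-node trace, and your post-root bound $(1-\mu)\lVert\vec{c}\rVert_1>\gamma/2$ (using $\mu<\mu^*<\tfrac12$) is a genuine simplification over the paper's longer inequality chain at the nodes $Q_{n-1}^{\pm}$.
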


\begin{proof}[Proof of Lemma~\ref{lem:mu_big}]
For ease of notation, for the remainder of this proof, we drop the subscript $(\gamma, n)$ from $Q_{\gamma, n}$ and denote this MILP as $Q$. We first need to determine the form of $\breve{\vec{x}}_Q$, which is the optimal solution to the LP relaxation of $Q$.
 It is easiest to see how the LP relaxation will set the variables $x_{n-2}$, $x_{n-1}$, and $x_n$. The only constraints on these variables are that $2(x_{n-2} + x_{n-1} + x_n) = 3$ and that $x_{n-2}, x_{n-1}, x_n \in [0,1]$. Recall that $\vec{c} = \left(1, 2, \dots, n-3, 0, 1.5, 3-\frac{1}{2\mu^*}\right)$ is the vector defining the objective value of $Q$. Since $\mu^* > 1/3$, we know that $1.5 < 3-\frac{1}{2\mu^*}$, which means that $c[n-2] < c[n-1] < c[n]$. Since the goal is to maximize $\vec{c} \cdot \vec{x}$, the LP relaxation will set $x_{n-2} = 0$, $x_{n-1} = \frac{1}{2}$, and $x_n = 1$. The logic for the first $n-3$ variables is similar. In this case, $c[1] < c[2] < \cdots < c[n-3]$, so the LP relaxation's solution will put as much weight as possible on the variable $x_{n-3}$, then as much weight as possible on the variable $x_{n-4}$, and so on, putting as little weight as possible on the variable $x_1$ since it has the smallest corresponding objective coefficient $c[1]$. Since the only constraints on these variables are that $2\sum_{i = 1}^{n-3} x_i = n-3$ and $x_1, \dots, x_{n-3} \in [0,1]$, the LP objective value can set $\left \lfloor \frac{n-3}{2} \right\rfloor$ of the variables to 1, it can set one variable to $\frac{1}{2}$, and it has to set the rest of the variables to 0. Letting $i = \left \lceil \frac{n-3}{2} \right\rceil$, this means the LP relaxation will set the first $i - 1$ variables $x_{1}, \cdots x_{i-1}$ to zero, it will set $x_i = \frac{1}{2}$, and it will set $x_{i + 1}, \dots, x_{n-3}$ to 1. In other words, \[\breve{x}_{Q}[j]
= \begin{cases} 0 &\text{if } j \leq \left\lfloor (n-3)/2 \right\rfloor \text{ or } j = n-2\\
\frac{1}{2} &\text{if } j = \left\lceil (n-3)/2 \right\rceil \text{ or } j = n-1\\
1 &\text{if } \left\lceil (n-3)/2 \right\rceil \leq j \leq n-3 \text{ or } j = n.
\end{cases}\]
For example, if $n = 8$, then $\breve{\vec{x}}_{Q} = \left(0, 0, \frac{1}{2}, 1, 1, 0, \frac{1}{2}, 1\right)$. (See Figure~\ref{fig:lb_Q}.) Therefore, the only candidate variables to branch on are $x_{n-1}$ and $x_i$ where again, $i = \left\lceil (n-3)/2 \right\rceil$.

\begin{figure}[t]
\centering
\begin{subfigure}{\textwidth}
\includegraphics[scale=.8]{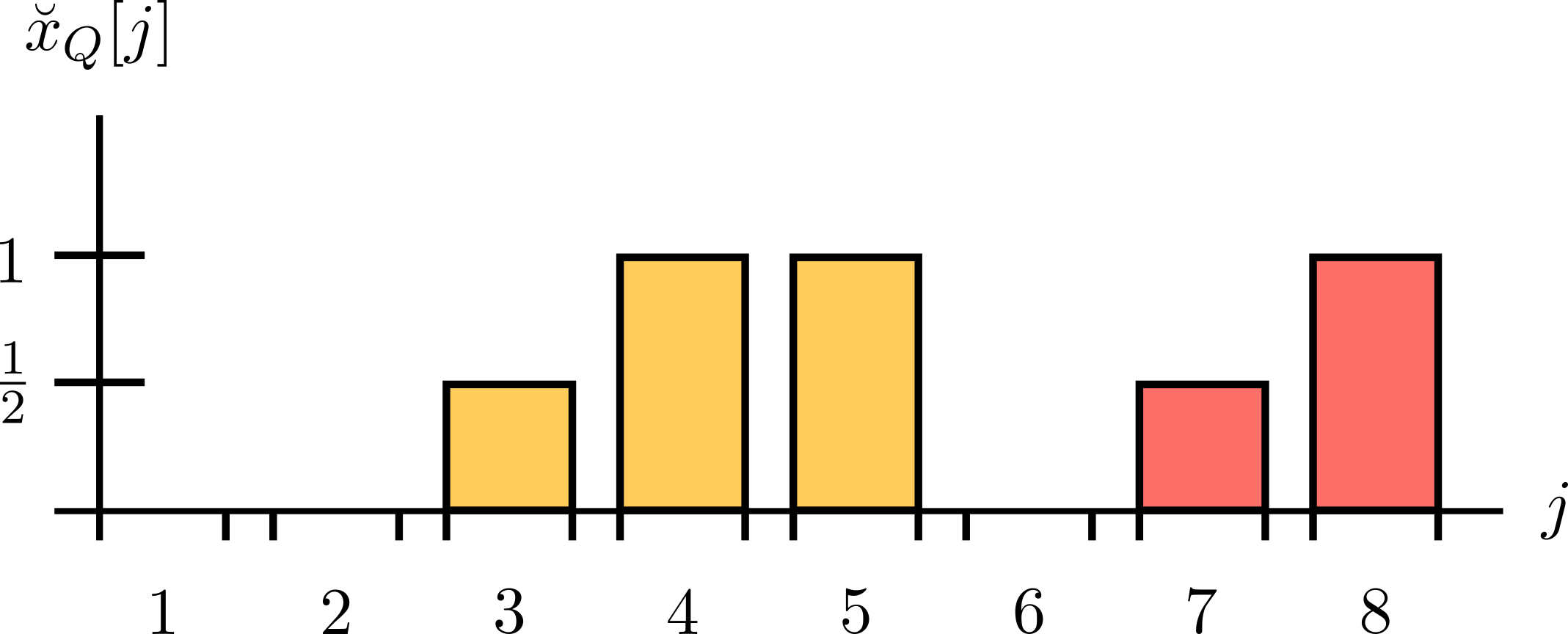} \centering
\caption{}\label{fig:lb_Q}
\end{subfigure}\newline
\begin{subfigure}{0.45\textwidth}
\includegraphics[scale=.8]{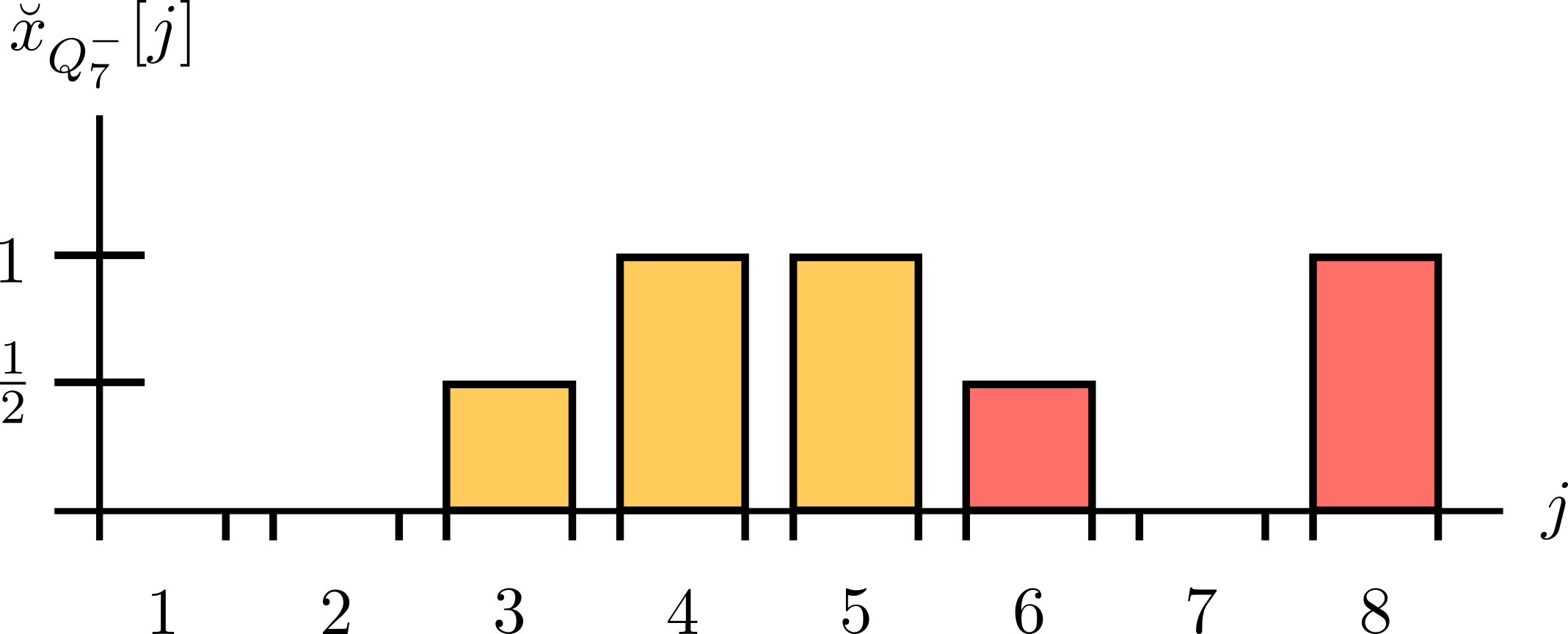}\centering
\caption{}\label{fig:lb_Q7m}
\end{subfigure}\qquad
\begin{subfigure}{0.45\textwidth}
\includegraphics[scale=.8]{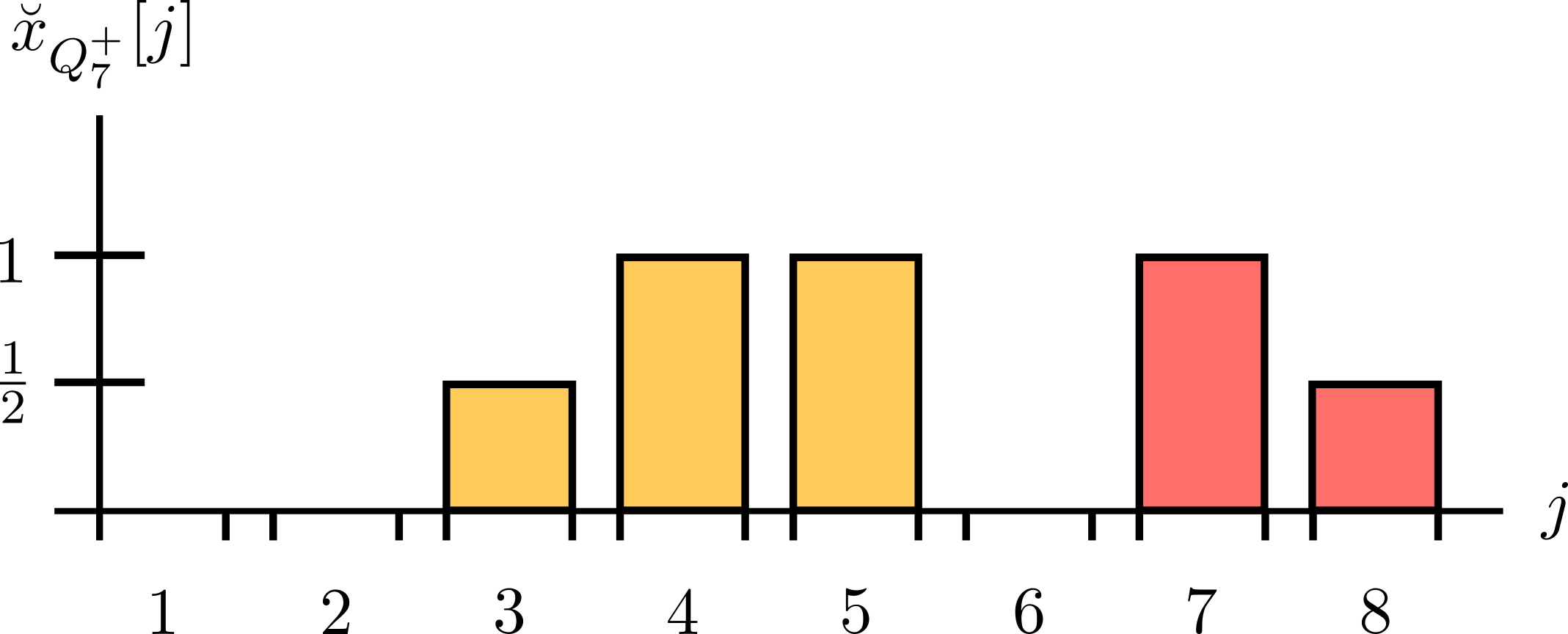}\centering
\caption{}\label{fig:lb_Q7p}
\end{subfigure}
\begin{subfigure}{0.45\textwidth}
\includegraphics[scale=.8]{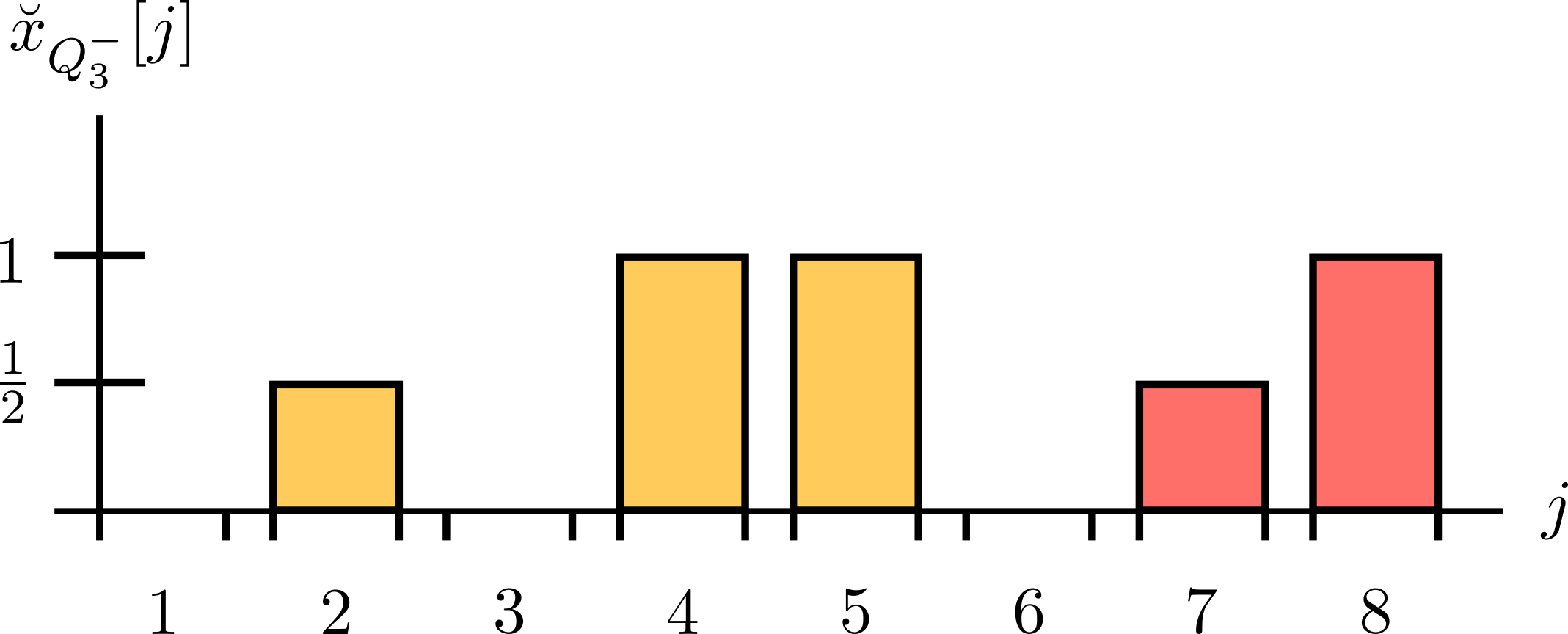}\centering
\caption{}\label{fig:lb_Q3m}
\end{subfigure}\qquad
\begin{subfigure}{0.45\textwidth}
\includegraphics[scale=.8]{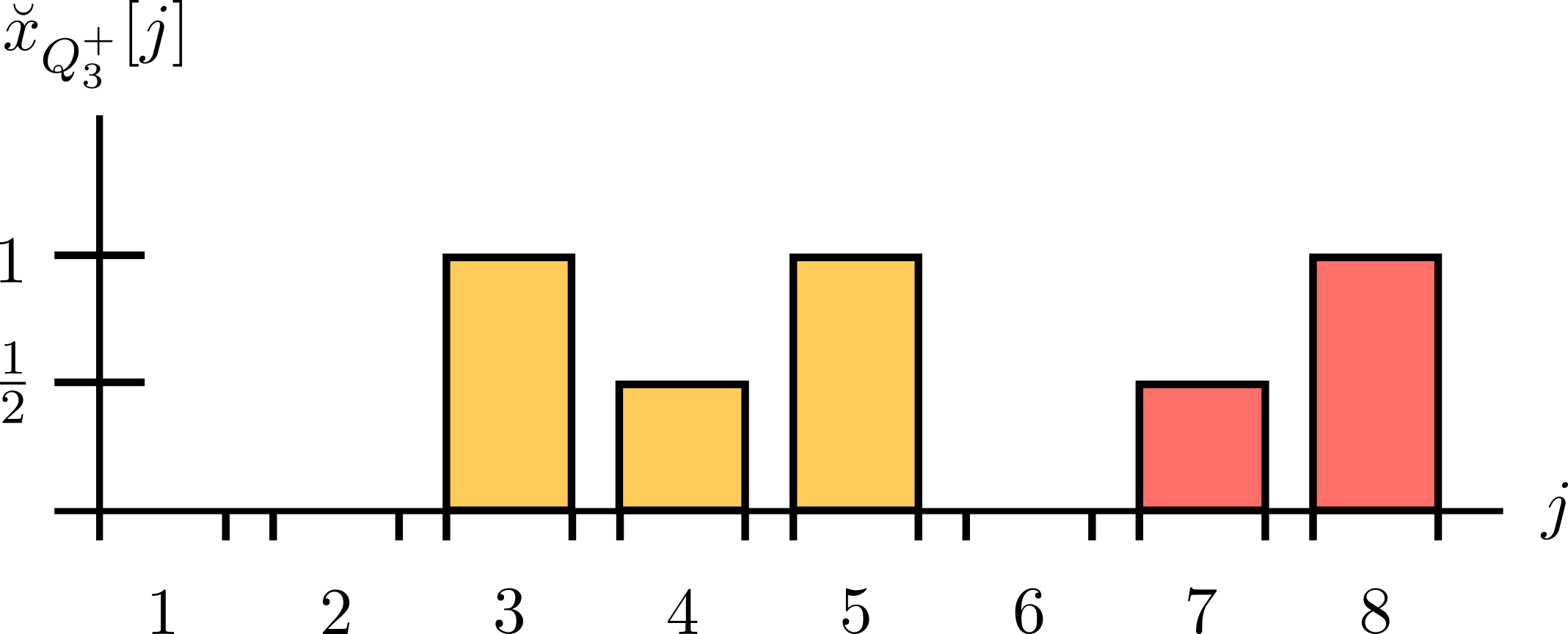}\centering
\caption{}\label{fig:lb_Q3p}
\end{subfigure}
\caption{Illustrations to accompany the proof of Lemma~\ref{lem:mu_big} when $n = 8$. For each $j$ on the $x$-axis, the histogram gives the value of either $\breve{x}_Q[j]$ (Figure~\ref{fig:lb_Q}), $\breve{x}_{Q_7^-}[j]$ (Figure~\ref{fig:lb_Q7m}), $\breve{x}_{Q_7^+}[j]$ (Figure~\ref{fig:lb_Q7p}), $\breve{x}_{Q_3^-}[j]$ (Figure~\ref{fig:lb_Q3m}), or $\breve{x}_{Q_3^+}[j]$ (Figure~\ref{fig:lb_Q3p}). In this case, $i = 3$.}\label{fig:lb_graphs}
\end{figure}

To determine when variable B\&B will branch on, we need to calculate $\breve{\vec{x}}_{Q_i^-}$ which is the solution to the LP relaxation of $Q$ with the additional constraint that $x_i = 0$, as well as $x_{Q_i^+}$ which is the solution to the LP relaxation of $Q$ with the additional constraint that $x_i = 1$, and $x_{Q_{n-1}^-}$ and $x_{Q_{n-1}^+}$.
First, we will determine the form of the vectors $\breve{\vec{x}}_{Q_{n-1}^-}$ and $\breve{\vec{x}}_{Q_{n-1}^+}$. Suppose we set $x_{n-1}=0$. We now need to ensure that $2(x_{n-2} + 0 + x_n) = 3$ and that $x_{n-2}, x_n \in [0,1]$. Since $c[n-2] = 0 < 3 - \frac{1}{2\mu^*} = c[n]$, the solution to the LP relaxation of $Q_{n-1}^-$ will set $x_{n-2} = \frac{1}{2}$ and $x_n = 1$. (See Figure~\ref{fig:lb_Q7m}.) Similarly, if we set $x_{n-1}=1$, we now need to ensure that $2(x_{n-2} + 1 + x_n) = 3$ and that $x_{n-2}, x_n \in [0,1]$. Therefore, the solution to the LP relaxation of $Q_{n-1}^-$ will set $x_{n-2} = 0$ and $x_n = \frac{1}{2}$. (See Figure~\ref{fig:lb_Q7p}.) In other words, $\breve{\vec{x}}_{Q_{n-1}^-} = \breve{\vec{x}}_{Q} - \frac{1}{2}\vec{e}_{n-1} + \frac{1}{2}\vec{e}_{n-2}$ and $\breve{\vec{x}}_{Q_{n-1}^+} = \breve{\vec{x}}_{Q} + \frac{1}{2}\vec{e}_{n-1} - \frac{1}{2}\vec{e}_{n}$.

The argument for $\breve{\vec{x}}_{Q_i^-}$ and $\breve{\vec{x}}_{Q_i^-}$ is similar. Recall that in $\breve{\vec{x}}_Q$, the solution to the LP relaxation of the original MIP $Q$, we have that $\breve{x}_Q[i] = \frac{1}{2}$ since it is the median of the variables $x_1, \dots, x_{n-3}$. For all $j > i$, we have that $\breve{x}_Q[j] = 1$ and for all $j < i$, we have that $\breve{x}_Q[j] = 0$. Suppose we set $x_i = 0$. As before, the LP relaxation's solution will put as much weight as possible on the variable $x_{n-3}$, then as much weight as possible on the variable $x_{n-4}$, and so on, putting as little weight as possible on the variable $x_1$ since it has the smallest corresponding objective coefficient $c[1]$. Since it cannot set $x_i = \frac{1}{2}$, it will set the next-best variable to $\frac{1}{2}$, which is $x_{i-1}$. (See Figure~\ref{fig:lb_Q3m}.) In other words, $\breve{\vec{x}}_{Q_i^-} = \breve{\vec{x}}_{Q} - \frac{1}{2}\vec{e}_i + \frac{1}{2}\vec{e}_{i-1}$. If we set $x_i = 1$, the LP relaxation's solution will have to take some weight away from the variables $x_{i+1}, \dots, x_{n-3}$ since it needs to ensure that $2\sum_{i = 1}^{n-3} x_i = n-3$. Therefore, it will set $x_{i+1}$ to $\frac{1}{2}$ and $x_j$ to 1 for all $j > i+1$. (See Figure~\ref{fig:lb_Q3p}.)
In other words, $\breve{\vec{x}}_{Q_i^+} = \breve{\vec{x}}_{Q} + \frac{1}{2}\vec{e}_i - \frac{1}{2}\vec{e}_{i+1}$.

 Therefore, \begin{align*}\breve{c}_{Q_i^-} &= \breve{\vec{c}}_{Q} - \frac{1}{2}\left\lceil \frac{n-3}{2} \right\rceil + \frac{1}{2}\left\lfloor \frac{n-3}{2} \right\rfloor = \breve{c}_{Q} - \frac{\gamma}{2},\\
\breve{\vec{c}}_{Q_i^+} &= \breve{\vec{c}}_{Q} +  \frac{1}{2}\left\lceil \frac{n-3}{2} \right\rceil - \frac{1}{2}\left(\left\lceil \frac{n-3}{2} \right\rceil + 1\right) = \breve{c}_{Q} - \frac{\gamma}{2},\\
\breve{c}_{Q_{n-1}^-} &=\breve{\vec{c}}_{Q} - \frac{3\gamma}{4}, \text{ and }\\
\breve{c}_{Q_{n-1}^+} &=\breve{\vec{c}}_{Q} +\frac{3}{4} - \frac{1}{2}\left(3 - \frac{1}{2\mu^*}\right) = \breve{\vec{c}}_{Q} - \frac{\gamma}{4}\left(3 - \frac{1}{\mu^*}\right).
\end{align*} This means that $\breve{c}_Q - \breve{c}_{Q_i^-} = \breve{c}_Q - \breve{c}_{Q_i^+} = \gamma/2$, $\breve{c}_Q - \breve{c}_{Q_{n-1}^-} = 3\gamma/4$, and $\breve{c}_Q - \breve{c}_{Q_{n-1}^+} = \frac{\gamma}{4}\left(3 - \frac{1}{\mu^*}\right)$. Therefore, $\mu\score_1(Q,i) + (1-\mu)\score_2(Q,i) = \gamma/2$ and $\mu\score_1(Q,n-1) + (1-\mu)\score_2(Q,n-1) = \frac{\mu\gamma}{4}\left(3 - \frac{1}{\mu^*}\right) + \frac{3\gamma(1-\mu)}{4} = \frac{3\gamma}{4} -  \frac{\mu\gamma}{4\mu^*}.$ This means that $\mu\score_1(Q,i) + (1-\mu)\score_2(Q,i) = \gamma/2 <\frac{3\gamma}{4} -  \frac{\mu\gamma}{4\mu^*} = \mu\score_1(Q,n-1) + (1-\mu)\score_2(Q,n-1)$ so long as $\mu < \mu^*$.

The next node branch-and-bound will explore is $Q_{n-1}^-$. The vector $\breve{\vec{x}}_{Q_{n-1}^-}$ has fractional values only in positions $i$ and $n-2$. Branching on $i$, we again have that $\breve{c}_{Q_{n-1}^-} - \breve{c}_{Q_{n-1,i}^{-,-}} = \breve{c}_{Q_{n-1}^-} - \breve{c}_{Q_{n-1,i}^{-,+}} = \gamma/2$.
Branching on $n-2$, $Q_{n-1, n-2}^{-,-}$ is infeasible, so $\breve{c}_{Q_{n-1}^-} - \breve{c}_{Q_{n-1,n-2}^{-,-}}$ equals some large number $B \geq ||\vec{c}||_1$. Next, $\breve{\vec{x}}_{Q_{n-1, n-2}^{-,+}} = \breve{\vec{x}}_{Q_{n-1}^-} + \frac{1}{2}\vec{e}_{n-2} - \frac{1}{2}\vec{e}_n,$ so $\breve{c}_{Q_{n-1, n-2}^{-,+}} = \breve{c}_{Q_{n-1}^-} - \frac{\gamma}{2}\left(3 - \frac{1}{2\mu^*}\right).$ Therefore, $\mu\score_1\left(Q_{n-1}^-,i\right) + (1-\mu)\score_2\left(Q_{n-1}^-,i\right) = \gamma/2$ and \begin{align*}\mu\score_1\left(Q_{n-1}^-,n-2\right) + (1-\mu)\score_2\left(Q_{n-1}^-,n-2\right) &= \frac{\mu\gamma}{2}\left(3 - \frac{1}{2\mu^*}\right) + (1 - \mu)B\\
&= B + \mu\left(\frac{3\gamma}{2} - \frac{\gamma}{4\mu^*} - B\right)\\
&\geq B + \mu^*\left(\frac{3\gamma}{2} - \frac{\gamma}{4\mu^*} - B\right)\\
&= B - \frac{\gamma}{4} + \mu^*\left(\frac{3\gamma}{2} - B\right)\\
&> B - \frac{\gamma}{4} + \frac{3\gamma/4 - B}{3\gamma/2 - B}\left(\frac{3\gamma}{2} - B\right)\\
&= B - \frac{\gamma}{4} + \frac{3\gamma}{4} - B\\
&= \frac{\gamma}{2},\\
\end{align*} where the final inequality holds because $\mu^* < 1 < \frac{B-3\gamma/4}{B - 3\gamma/2}$. Therefore, $x_{n-2}$ will be branched on next.

Since $Q_{n-1, n-2}^{-,-}$ is infeasible, the next node branch-and-bound will explore is $Q_{n-1, n-2}^{-,+}$. The vector $\breve{\vec{x}}_{Q_{n-1, n-2}^{-, +}}$ has fractional values only in positions $i$ and $n$. Both MILP instances $Q_{n-1, n-2, n}^{-, +, -}$ and $Q_{n-1, n-2, n}^{-, +, +}$ are infeasible, so $\mu\score_1\left(Q_{n-1, n-2}^{-, +},n\right) + (1-\mu)\score_2\left(Q_{n-1, n-2}^{-, +},n\right) = B$ whereas $\mu\score_1\left(Q_{n-1, n-2}^{-, +},i\right) + (1-\mu)\score_2\left(Q_{n-1, n-2}^{-, +},i\right) = \gamma/2$, as before. Therefore, branch-and-bound will branch on $x_n$ and fathom both children.

The next node branch-and-bound will explore is $Q_{n-1}^+$. The vector $\breve{\vec{x}}_{Q_{n-1}^+}$ has fractional values only in positions $i$ and $n$. Branching on $i$, we again have that $\breve{c}_{Q_{n-1}^+} - \breve{c}_{Q_{n-1,i}^{+,-}} = \breve{c}_{Q_{n-1}^+} - \breve{c}_{Q_{n-1,i}^{+,+}} = \gamma/2$.
Branching on $x_n$, $\breve{\vec{x}}_{Q_{n-1, n}^{+,-}} = \breve{\vec{x}}_{Q_{n-1}^+} - \frac{1}{2}\vec{e}_{n} + \frac{1}{2}\vec{e}_{n-2},$ so $\breve{c}_{Q_{n-1, n}^{+,-}} = \breve{c}_{Q_{n-1}^+} - \frac{\gamma}{2}\left(3 - \frac{1}{2\mu^*}\right).$
Meanwhile, $Q_{n-1, n}^{+,+}$ is infeasible, so $\breve{c}_{Q_{n-1}^+} - \breve{c}_{Q_{n-1,n-2}^{+,+}} = B$.
 Therefore, $\mu\score_1\left(Q_{n-1}^+,i\right) + (1-\mu)\score_2\left(Q_{n-1}^+,i\right) = \gamma/2$ and $\mu\score_1\left(Q_{n-1}^+,n\right) + (1-\mu)\score_2\left(Q_{n-1}^+,n\right) = \frac{\mu\gamma}{2}\left(3 - \frac{1}{2\mu^*}\right) + (1 - \mu)B > \gamma/2$. Therefore, $x_n$ will be branched on next.

The next node branch-and-bound will explore is $Q_{n-1, n}^{+,-}$. The vector $\breve{\vec{x}}_{Q_{n-1, n}^{+,-}}$ has fractional values only in positions $i$ and $n-2$. Both MILP instances $Q_{n-1, n, n-2}^{+, -, -}$ and $Q_{n-1, n, n-2}^{+,-,-}$ are infeasible, so $\mu\score_1\left(Q_{n-1, n}^{-, +},n-2\right) + (1-\mu)\score_2\left(Q_{n-1, n}^{-, +},n-2\right) = B$ whereas $\mu\score_1\left(Q_{n-1, n-2}^{-, +},i\right) + (1-\mu)\score_2\left(Q_{n-1, n-2}^{-, +},i\right) = \gamma/2$, as before. Therefore, branch-and-bound will branch on $x_{n-2}$ and fathom both children.

At this point, all children have been fathomed, so branch-and-bound will terminate.
\end{proof}

\begin{lemma}\label{lem:mu_small}
Suppose $\mu > \mu^*$. Then for any MILP $Q_{\gamma, n} \in \fclass_{n, \mu^*}$, branch-and-bound with the scoring rule $\mu \score_1 + (1-\mu)\score_2$ will create a tree of depth at least $2^{(n-5)/4}$.
\end{lemma}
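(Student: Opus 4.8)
The plan is to establish, when $\mu > \mu^*$, a dichotomy mirroring Lemma~\ref{lem:mu_big}: branch-and-bound now prefers the ``big Jeroslow'' variables $\{x_1, \dots, x_{n-3}\}$ over the ``small'' variables $\{x_{n-2}, x_{n-1}, x_n\}$ at every node, so it re-creates the exponential branching of Jeroslow's instance on $\{x_1, \dots, x_{n-3}\}$. I would prove this by an induction on depth together with a counting argument, and the one fact doing the real work is a structural description of every LP relaxation reached along a path that has only branched on big variables.

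First I would record that structure. Because the two equality constraints act on disjoint variable blocks, the LP relaxation at any node decomposes into a ``small'' problem and a ``big'' problem. On the small block, maximizing $\gamma\left(0, \frac{3}{2}, 3 - \frac{1}{2\mu^*}\right) \cdot \vec{x}$ subject to $x_{n-2}+x_{n-1}+x_n = \frac{3}{2}$ and $\vec{x} \in [0,1]^3$ always yields $\left(0, \frac{1}{2}, 1\right)$, since $0 < \frac{3}{2} < 3 - \frac{1}{2\mu^*}$ (using $\mu^* > \frac{1}{3}$); hence $x_{n-1}$ is the unique fractional small variable at every big-only node, and branching on it is exactly the computation already carried out in Lemma~\ref{lem:mu_big}, yielding $\breve{c}_Q - \breve{c}_{Q_{n-1}^-} = \frac{3\gamma}{4}$ and $\breve{c}_Q - \breve{c}_{Q_{n-1}^+} = \frac{\gamma}{4}\left(3 - \frac{1}{\mu^*}\right)$, so its combined score is $\frac{\gamma}{4}\left(3 - \frac{\mu}{\mu^*}\right)$, which is strictly below $\frac{\gamma}{2}$ exactly when $\mu > \mu^*$. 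On the big block, if $a$ of the $n-3$ variables have been fixed to $0$ and $b$ to $1$, the residual problem maximizes an objective with distinct coefficients over $\sum_{j \text{ free}} x_j = \frac{n-3}{2} - b$ with free variables in $[0,1]$; since the right-hand side is a half-integer, the greedy-by-coefficient solution has exactly one fractional coordinate (at value $\frac{1}{2}$) whenever it is feasible, i.e.\ whenever $0 \le \frac{n-3}{2} - b \le n-3-a-b$, equivalently $a \le \frac{n-4}{2}$ and $b \le \frac{n-4}{2}$; otherwise the relaxation is infeasible and the node is fathomed. A short calculation as in Lemma~\ref{lem:mu_big} shows that branching either way on this fractional big variable either renders a child infeasible (difference at least $\|\vec{c}\|_1$) or shifts the half-unit of weight to an adjacent free index, changing the LP value by at least $\frac{\gamma}{2}$ in magnitude; in every case its combined score is at least $\frac{\gamma}{2}$.

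With these facts I would run the induction: every node B\&B reaches has been obtained by branching only on big variables. At such a node, the LP is feasible, so the only two fractional coordinates are $x_{n-1}$ and the single fractional big variable, and by the two score computations the big variable strictly wins (score $\ge \frac{\gamma}{2} > \frac{\gamma}{4}\left(3 - \frac{\mu}{\mu^*}\right)$), so B\&B branches on it, preserving the invariant. Moreover the original MILP is infeasible, so no LP solution ever satisfies its constraints and the incumbent stays $-\infty$; hence the only fathoming rule that can ever fire is infeasibility of the LP, which by the displayed inequalities occurs precisely when the branch counts leave $\left\{a \le \frac{n-4}{2},\, b \le \frac{n-4}{2}\right\}$. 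Consequently every node at depth $D := \frac{n-4}{2}$ is actually created: it lies at the end of a path with $a + b = D$, so $a, b \le D$ at that node and at every prefix, so no ancestor was fathomed. There are $2^{D} = 2^{(n-4)/2}$ such nodes, so B\&B builds a tree of at least this size (in particular of depth at least $\frac{n-4}{2}$), which yields the stated exponential lower bound; the argument never invokes the node selection policy.

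The main obstacle is the structural claim on the big block: one has to verify that along an \emph{arbitrary} big-only path the active objective coefficients, though no longer consecutive integers as at the root, still make the residual LP a ``Jeroslow-shaped'' problem with exactly one fractional coordinate whose removal costs at least $\frac{\gamma}{2}$ --- this is what makes both the score comparison against $x_{n-1}$ and the clean $(a,b)$ feasibility criterion go through. Once that is in hand, the induction, the fathoming analysis, and the count of depth-$D$ nodes are routine.
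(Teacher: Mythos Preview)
Your proposal is correct and follows essentially the same route as the paper: an induction on depth showing that at every node reached so far, the unique fractional ``big'' variable has combined score at least $\tfrac{\gamma}{2}$ while $x_{n-1}$ has score $\tfrac{\gamma}{4}\bigl(3-\tfrac{\mu}{\mu^*}\bigr)<\tfrac{\gamma}{2}$, so B\&B always branches on a big variable; together with the observation that no node is fathomed before depth $\tfrac{n-4}{2}$, this yields $2^{(n-4)/2}$ nodes. Your $(a,b)$ bookkeeping and the explicit feasibility criterion $a,b\le\tfrac{n-4}{2}$ are a slightly cleaner way to organize the same argument the paper carries out by tracking the ordered list $p_1<\cdots<p_t$ of remaining free indices and computing the two gaps $\tfrac{\gamma}{2}(p_{i'+1}-p_{i'})$ and $\tfrac{\gamma}{2}(p_{i'+2}-p_{i'+1})$ directly; the ``main obstacle'' you flag is precisely the content of that computation.
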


\begin{proof}[Proof of Lemma~\ref{lem:mu_small}]
To prove this lemma, we use induction to show that on any path from the root of the branch-and-bound tree to a node of depth $i = \left\lfloor (n-3)/2 \right\rfloor$, if $J$ are the set of indices branched on along that path, then $J \subseteq \{x_1, \dots, x_{n-3}\}$. Even after branching on $i$ nodes from $\{x_1, \dots, x_{n-3}\}$, the MILP will still be feasible, so the branch will not yet have been fathomed (since the original MILP is infeasible, a node will be fathomed only when it is infeasible). Therefore, branch-and-bound will continue down every branch to depth $\left\lfloor (n-3)/2 \right\rfloor$, thus creating a tree with $2^{(n-4)/2}$ nodes.

\begin{claim}\label{claim:path}
On any path from the root of the branch-and-bound tree to a node of depth $i = \left\lfloor (n-3)/2 \right\rfloor$, if $J$ are the set of indices branched on along that path, then $J \subseteq \{x_1, \dots, x_{n-3}\}$.
\end{claim}

\begin{proof}[Proof of Claim~\ref{claim:path}]We prove this claim by induction.

\bigskip
\noindent\textbf{Inductive hypothesis.} For $j \leq \left\lfloor (n-3)/2 \right\rfloor$, let $J$ be the set of indices branched on along an arbitrary path of the branch-and-bound tree from the root to a node of depth $j$. Then $J \subseteq \{x_1, \dots, x_{n-3}\}$.

\bigskip
\noindent\textbf{Base case ($j=0$).} As we saw in the proof of Lemma~\ref{lem:mu_big}, if $\mu > \mu^*$, then branch-and-bound will first branch on $x_i$ where $i = \left\lceil (n-3)/2 \right\rceil$.

\bigskip
\noindent\textbf{Inductive step.} Let $j$ be an arbitrary index such that $0 \leq j \leq i-1$. Let $J$ be the set of indices branched on along an arbitrary path of the branch-and-bound tree from the root to a node of depth $j$. We know from the inductive hypothesis that $J \subseteq \{x_1, \dots, x_{n-3}\}$. Let $Q'$ be the MILP at that node. Since $j \leq \left\lfloor (n-3)/2 \right\rfloor - 1$, we know that the LP relaxation of $Q'$ is feasible. Let $z$ be the number of variables set to zero in $J$ and let $x_{p_1}, x_{p_2}, \dots, x_{p_t}$ be $\left\{x_1, \dots, x_{n-3}\right\} \setminus J$ ordered such that $p_k < p_{k'}$ for $k < k'$. We know that the solution to the LP relaxation of $Q'$ will have the first $i' := \left\lfloor (n-3)/2 \right\rfloor - z$ variables $x_{p_1}, \dots, x_{p_{i'}}$ set to 0, it will set $x_{p_{i'+1}}$ to 1/2, and it will set the remaining variables in $\left\{x_1, \dots, x_{n-3}\right\} \setminus J$ to 1. Thus, the fractional variables are $x_{p_{i'+1}}$ and $x_{n-1}$. Note that since $z \leq  |J| \leq \left\lfloor (n-3)/2 \right\rfloor - 1$, $i' = \left\lfloor (n-3)/2 \right\rfloor - z \geq 1$.

Suppose we branch on $x_{p_{i'+1}}$. If we set $x_{p_{i'+1}} = 0$, then the LP relaxation of $(Q')_{p_{i'+1}}^-$ will set $x_{p_{i'}}$ to be 1/2 and otherwise the optimal solution will remain unchanged. Thus, $\breve{c}_{Q'} - \breve{c}_{(Q')_{p_{i' + 1}}^-} = 
\breve{c}_{Q'} - \left(\breve{c}_{Q'} - \gamma p_{i'+1}/2 + \gamma p_{i'}/2\right) = \frac{\gamma\left(p_{i'+1} - p_{i'}\right)}{2}$. Meanwhile, if we set $x_{p_{i'+1}} = 1$, then the LP relaxation of $(Q')_{p_{i'+1}}^-$ will set $x_{p_{i'+2}}$ to be 0 and otherwise the optimal solution will remain unchanged. Thus, $\breve{c}_{Q'} - \breve{c}_{(Q')_{p_{i' + 1}}^+} = \breve{c}_{Q'} - \left(\breve{c}_{Q'} + \gamma p_{i'+1}/2 - \gamma p_{i'+2}/2\right) = \frac{\gamma\left(p_{i'+2} - p_{i'+1}\right)}{2}$. Suppose that $\breve{c}_{Q'} - \breve{c}_{(Q')_{p_{i' + 1}}^+} > \breve{c}_{Q'} - \breve{c}_{(Q')_{p_{i' + 1}}^-}$. Then \begin{align*}\mu\score_1\left(Q', p_{i'+1}\right) + (1-\mu)\score_2\left(Q', p_{i'+1}\right) &= \frac{\gamma}{2} \left(\mu \left(p_{i'+1} - p_{i'}\right) + (1-\mu)\left(p_{i'+2} - p_{i'+1}\right) \right)\\
&\geq \frac{\gamma}{2} \left(\mu + (1-\mu)\right)\\
&= \frac{\gamma}{2}.
\end{align*}

 Meanwhile, suppose that $\breve{c}_{Q'} - \breve{c}_{(Q')_{p_{i' + 1}}^+} \leq \breve{c}_{Q'} - \breve{c}_{(Q')_{p_{i' + 1}}^-}$. Then \begin{align*}\mu\score_1\left(Q', p_{i'+1}\right) + (1-\mu)\score_2\left(Q', p_{i'+1}\right) &=  \frac{\gamma}{2} \left(\mu \left(p_{i'+2} - p_{i'+1}\right) + (1-\mu)\left( p_{i'+1} - p_{i'}\right) \right)\\
&\geq \frac{\gamma}{2} \left(\mu + (1-\mu)\right)\\
&= \frac{\gamma}{2}.
\end{align*} Meanwhile, as in the proof of Lemma~\ref{lem:mu_big}, $\mu\score_1(Q',n-1) + (1-\mu)\score_2(Q',n-1) = \frac{3\gamma}{4} -  \frac{\mu\gamma}{4\mu^*} < \frac{\gamma}{2}$ so long as $\mu > \mu^*$. Thus, branch-and-bound will branch next on $x_{p_{i'}}$.
\end{proof}
\end{proof}

\end{proof}

\begin{theorem}\label{thm:families2}
Let \[\score_1(Q, i) = \min\left\{\breve{c}_Q - \breve{c}_{Q_i^+}, \breve{c}_Q - \breve{c}_{Q_i^-}\right\} \textnormal{ and } \score_2(Q, i) = \max\left\{\breve{c}_Q - \breve{c}_{Q_i^+}, \breve{c}_Q - \breve{c}_{Q_i^-}\right\}.\] For all even $n \geq 6$ and all $\mu^* \in \left(\frac{1}{3}, \frac{2}{3}\right)$, there exists an infinite family $\mathcal{G}_{n, \mu^*}$ of MILP instances such that for any $Q \in \mathcal{G}_{n, \mu^*}$, if $\mu \in \left[0,\mu^*\right)$, then the scoring rule $\mu \score_1 + (1-\mu)\score_2$ results in a branch-and-bound tree with $\Omega\left(2^{(n-5)/4}\right)$ nodes and if $\mu \in \left(\mu^*,1\right]$, the scoring rule results a tree with $O(1)$ nodes.
\end{theorem}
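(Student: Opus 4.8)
The statement is the mirror image of Theorem~\ref{thm:families}: there we built a family on which small $\mu$ yields an $O(1)$ tree and large $\mu$ yields an exponential tree, and now we want a family $\mathcal{G}_{n,\mu^*}$ on which the two regimes are swapped. My plan is to reuse the ``glued Jeroslow instances'' paradigm from the proof of Theorem~\ref{thm:families} --- combine a hard Jeroslow instance on $n-3$ variables with an easy one on $3$ variables, so that B\&B builds a small tree exactly when it branches into the $3$-variable instance first --- but to redistribute where the $\mu$-dependence sits so that B\&B is steered onto the hard instance precisely when $\mu<\mu^*$.

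The mechanism I would exploit is that for a single variable the combined score $\mu\score_1+(1-\mu)\score_2$ equals $\mu\cdot(\text{smaller child improvement})+(1-\mu)\cdot(\text{larger child improvement})$, which is the non-increasing affine function $\max-\mu(\max-\min)$ of $\mu$. In the proof of Theorem~\ref{thm:families} the easy instance's fractional variable carried the asymmetric (hence genuinely $\mu$-dependent, decreasing) score $\tfrac{3\gamma}{4}-\tfrac{\mu\gamma}{4\mu^*}$, while the hard instance's fractional variable had equal child improvements and the flat score $\gamma/2$; since the decreasing line sits above $\gamma/2$ exactly for $\mu<\mu^*$, B\&B branched into the easy instance in that regime. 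To flip this, in $\mathcal{G}_{n,\mu^*}$ I would make it the hard $(n-3)$-variable Jeroslow instance whose relevant fractional variable has asymmetric child improvements, designed so that its (decreasing) score line passes through the easy instance's flat score at exactly $\mu=\mu^*$ and lies above it for $\mu<\mu^*$ --- so that small $\mu$ sends B\&B into the exponential cascade while large $\mu$ lets B\&B finish inside the $3$-variable instance. Concretely, I would keep the two parity constraints $2\sum_{i=1}^{n-3}x_i=n-3$ and $2(x_{n-2}+x_{n-1}+x_n)=3$, give the easy block coefficients in arithmetic progression (e.g.\ $\gamma(0,1,2)$) so that its fractional variable has equal child improvements and a $\mu$-independent score, and give the hard block coefficients that are an arithmetic progression perturbed near its median by a $\mu^*$-dependent amount --- the direct analogue of the $3-\tfrac{1}{2\mu^*}$ trick used for the easy block in the proof of Lemma~\ref{lem:mu_big} --- calibrated so the crossover occurs at $\mu^*$. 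I would then set $\mathcal{G}_{n,\mu^*}=\{Q_{n,\gamma}:\gamma\geq 1\}$, which is infinite.

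From here the argument parallels Theorem~\ref{thm:families} step for step. First, compute the LP relaxation of the root and identify its two fractional variables, one in each block. Second, compute the child LP values, hence the two scores, and verify that they cross at $\mu=\mu^*$ with the hard-block score strictly larger for $\mu<\mu^*$ and strictly smaller for $\mu>\mu^*$. Third, for $\mu>\mu^*$: trace B\&B through the at most $2^3=8$ nodes spanned by $\{x_{n-2},x_{n-1},x_n\}$, checking at each node that the easy-block variable still wins, so that B\&B discovers the parity infeasibility of $2(x_{n-2}+x_{n-1}+x_n)=3$ in $O(1)$ nodes (the analogue of the second half of Lemma~\ref{lem:mu_big}). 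Fourth, for $\mu<\mu^*$: prove by induction --- the analogue of Claim~\ref{claim:path} --- that along every root-to-node path of depth at most $\lfloor(n-3)/2\rfloor$, B\&B has branched only on variables of the hard block, because at every such node the remaining fractional hard-block variable's score still strictly exceeds the easy-block variable's flat score for all $\mu<\mu^*$; since branching on only a small fraction of the hard-block variables leaves the LP feasible, no such node is fathomed, and B\&B is forced to keep branching on hard-block variables along every path long enough to produce $\Omega(2^{(n-5)/4})$ nodes. Finally, Theorem~\ref{thm:NSP} removes the dependence on the node selection policy.

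The main obstacle is the inductive cascade in the fourth step. In Theorem~\ref{thm:families} the hard-block variables all had the flat score $\gamma/2$, so the induction was essentially free; here the hard-block scores are $\mu$-dependent lines whose intercepts and slopes change from node to node, because branching out variables alters the gaps between consecutive surviving objective coefficients. I would need to choose the coefficient perturbation so that at every reachable node the fractional hard-block variable's score line stays strictly above the easy-block constant \emph{simultaneously for all} $\mu<\mu^*$, while still producing the crossover at exactly $\mu=\mu^*$ at the root. Threading this --- presumably by confining the perturbation to the median's neighbouring gaps and bounding the score at every other node below by the unperturbed arithmetic-progression case, where all gaps are at least $1$ and the score is therefore at least $\gamma/2$ --- is the delicate calculation, just as the bulk of the work in Theorem~\ref{thm:families} was the corresponding case $\mu>\mu^*$.
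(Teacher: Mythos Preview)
Your high-level strategy --- swap which block carries the $\mu$-dependent score so that the hard $(n-3)$-block's fractional variable has the decreasing score line while the easy $3$-block's is flat, then rerun the Jeroslow-gluing argument --- is exactly what the paper does. But your concrete proposal for the hard-block coefficients (an arithmetic progression perturbed only near the median) is not the paper's construction, and it runs head-on into the tie problem you yourself flag. With an arithmetic progression of common difference $1$, at any node whose fractional hard-block variable lies in the unperturbed part both child improvements are $\gamma/2$, so the hard-block score is \emph{exactly} $\gamma/2$ --- which merely ties the easy-block score $\gamma/2$ rather than strictly beating it. Your proposed fix, ``bound below by the unperturbed case where the score is at least $\gamma/2$,'' therefore delivers equality, not the strict inequality the cascade requires.

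The idea you are missing is to abandon the arithmetic progression entirely and give the hard block a three-level \emph{step function}: $c_1[i]=0$ below the median, $c_1[i]=\tfrac{3}{2}$ at the median, and $c_1[i]=3-\tfrac{1}{2\mu^*}$ above it (with the easy block getting the arithmetic progression $(1,2,3)$). The payoff is that sliding the LP's half-unit of fractional mass within a plateau costs nothing (improvement $0$), while sliding it across the step costs $\tfrac{\gamma}{2}\bigl(3-\tfrac{1}{2\mu^*}\bigr)$. Thus at every tracked node the hard-block fractional variable has the \emph{same} asymmetric improvement pair $\bigl(0,\,\tfrac{\gamma}{2}(3-\tfrac{1}{2\mu^*})\bigr)$, hence the same score $(1-\mu)\cdot\tfrac{\gamma}{2}\bigl(3-\tfrac{1}{2\mu^*}\bigr)$; one checks directly that this is strictly greater than $\gamma/2$ for all $\mu<\mu^*$ when $\mu^*\in(\tfrac13,\tfrac12)$. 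The $\mu$-dependence thus persists uniformly through the tree rather than washing out away from the median, and the ``threading'' problem you anticipate disappears.

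One more divergence: the induction is not a straight analogue of Claim~\ref{claim:path}. Because the plateau trick only pins the improvement pair when the LP's fractional variable sits on the boundary between plateaus, the paper tracks only those root-to-node paths on which the branched variables form a contiguous block around the median with (near-)balanced $0$/$1$ assignments. These paths double only every two levels of depth, which is precisely why the bound here is $\Omega\bigl(2^{(n-5)/4}\bigr)$ rather than the $2^{(n-4)/2}$ of Theorem~\ref{thm:families}.
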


\begin{proof}
For any constant $\gamma \geq 1$, let $\vec{c}_1 \in \R^{n-3}$ be a vector such that \[c_1[i] = \begin{cases} 0 &\text{if } i < (n-3)/2\\
1.5 &\text{if } i = \left\lceil (n-3)/2\right\rceil\\
3 - \frac{1}{2\mu^*} &\text{if } i > (n-3)/2 + 1
\end{cases}\] and let $\vec{c}_2 = \left(1, 2, 3\right)$. Let $\vec{c} = \gamma\left(\vec{c}_1, \vec{c}_2\right) \in \R^n$ be the concatenation of $\vec{c}_1$ and $\vec{c}_2$ multiplied with $\gamma$. For example, if $\gamma = 1$ and $n = 8$, then $\vec{c} = \left(0,0, 1.5, 3 - \frac{1}{2\mu^*}, 3 - \frac{1}{2\mu^*}, 1, 2, 3\right)$ Next, define the $n$-dimensional vectors $\vec{a}_1 = 2\sum_{i = 1}^{n-3} \vec{e}_i$ and $\vec{a}_2 = 2\sum_{i = 1}^3 \vec{e}_{n-3+i}$, and let $A$ be a matrix whose first row is $\vec{a}_1$ and second row is $\vec{a}_2$. Let $Q_{\gamma,n}$ be the MILP
\[\begin{array}{ll}
\text{maximize}&\vec{c} \cdot \vec{x}\\
\text{subject to} & A\vec{x} = (n-3, 3)^{\top}\\
& \vec{x} \in \{0,1\}^n.
\end{array}\]

We define $\mathcal{G}_{n, \mu^*} = \left\{Q_{n, \gamma} : \gamma \geq 1\right\}.$

For every even $n \geq 6$, $Q_{\gamma, n}$, both of the constraints $\vec{a}_1 \cdot \vec{x} = 2\sum_{i = 1}^{n-3} x_i = n-3$ and $\vec{a}_2 \cdot \vec{x} = 2\sum_{i = 1 }^3 x_{n-3+i} = 3$ are infeasible for $\vec{x} \in \{0,1\}^n$ since $2\sum_{i = 1 }^{n-3} x_i$ and $2\sum_{i = 1}^3 x_{n-3+i}$ are even numbers but 3 and $n-3$ are odd numbers. The key idea of this proof is that if branch-and-bound branches on all variables in $\{x_{n-2}, x_{n-1}, x_n\}$ first, it will terminate upon making a tree of size at most $2^3 = 8$, since at most three branches are necessary to determine that $\vec{a}_1 \cdot \vec{x} = 2\sum_{i = 1}^3 x_{n-3+i} = 3$ is infeasible. However, if branch-and-bound branches on all variables in $\{x_1, \dots, x_{n-3}\}$ first, it will create a tree with exponential size before it terminates.

\begin{lemma}\label{lem:mu_big_2}
Suppose $\mu > \mu^*$. Then for any MILP $Q_{\gamma, n} \in \mathcal{G}_{n, \mu^*}$, $\mu \score_1 + (1-\mu)\score_2$ branches on all variables in $\{x_{n-2}, x_{n-1}, x_n\}$ before branching on variables in $\{x_1, \dots, x_{n-3}\}$.
\end{lemma}

\begin{proof}[Proof of Lemma~\ref{lem:mu_big_2}]
For ease of notation, for the remainder of this proof, we drop the subscript $(\gamma, n)$ from $Q_{\gamma, n}$ and denote this MILP as $Q$.
The optimal solution to the LP relaxation of $Q$ has the following form: \[\breve{\vec{x}}_{Q}[j]
= \begin{cases} 0 &\text{if } j \leq \left\lfloor (n-3)/2 \right\rfloor \text{ or } j = n-2\\
\frac{1}{2} &\text{if } j = \left\lceil (n-3)/2 \right\rceil \text{ or } j = n-1\\
1 &\text{if } \left\lceil (n-3)/2 \right\rceil \leq j \leq n-3 \text{ or } j = n.
\end{cases}\]
For example, if $n = 8$, then $\breve{\vec{x}}_{Q} = \left(0, 0, \frac{1}{2}, 1, 1, 0, \frac{1}{2}, 1\right)$. Therefore, the only candidate variables to branch on are $x_{n-1}$ and $x_i$ where $i = \left\lceil (n-3)/2 \right\rceil$. Branching on $x_i$, we have $\breve{\vec{x}}_{Q_i^-} = \breve{\vec{x}}_{Q} - \frac{1}{2}\vec{e}_i + \frac{1}{2}\vec{e}_{i-1}$ and $\breve{\vec{x}}_{Q_i^+} = \breve{\vec{x}}_{Q} + \frac{1}{2}\vec{e}_i - \frac{1}{2}\vec{e}_{i+1}$. Branching on $x_{n-1}$, we have $\breve{\vec{x}}_{Q_{n-1}^-} = \breve{\vec{x}}_{Q} - \frac{1}{2}\vec{e}_{n-1} + \frac{1}{2}\vec{e}_{n-2}$ and $\breve{\vec{x}}_{Q_{n-1}^+} = \breve{\vec{x}}_{Q} + \frac{1}{2}\vec{e}_{n-1} - \frac{1}{2}\vec{e}_{n}$. Therefore, \begin{align*}\breve{c}_{Q_i^-} &= \breve{\vec{c}}_{Q} - \frac{3\gamma}{4},\\
\breve{\vec{c}}_{Q_i^+} &= \breve{\vec{c}}_{Q} +  \frac{3\gamma}{4} - \frac{\gamma}{2}\left(3 - \frac{1}{2\mu^*}\right) = \breve{c}_{Q} - \frac{3\gamma}{4}\left( 1- \frac{1}{\mu^*}\right),\\
\breve{c}_{Q_{n-1}^-} &=\breve{\vec{c}}_{Q} - \gamma + \frac{\gamma}{2} =\breve{\vec{c}}_{Q} - \frac{\gamma}{2}, \text{ and }\\
\breve{c}_{Q_{n-1}^+} &=\breve{\vec{c}}_{Q} +\gamma - \frac{3\gamma}{2} =\breve{\vec{c}}_{Q} - \frac{\gamma}{2}.\\
\end{align*} This means that $\breve{c}_Q - \breve{c}_{Q_i^-} = 3\gamma/4$, $\breve{c}_Q - \breve{c}_{Q_i^+} = \frac{\gamma}{4}\left(3 - \frac{1}{\mu^*}\right)$, and $\breve{c}_Q - \breve{c}_{Q_{n-1}^-} = \breve{c}_Q - \breve{c}_{Q_{n-1}^+} = \gamma/2$. Therefore, $\mu\score_1(Q,i) + (1-\mu)\score_2(Q,i) =  \frac{\mu\gamma}{4}\left(3 - \frac{1}{\mu^*}\right) + \frac{3\gamma(1-\mu)}{4} = \frac{3\gamma}{4} -  \frac{\mu\gamma}{4\mu^*}$ and $\mu\score_1(Q,n-1) + (1-\mu)\score_2(Q,n-1) = \gamma/2.$ This means that $\mu\score_1(Q,i) + (1-\mu)\score_2(Q,i) = \frac{3\gamma}{4} -  \frac{\mu\gamma}{4\mu^*} < \gamma/2 = \mu\score_1(Q,n-1) + (1-\mu)\score_2(Q,n-1)$ so long as $\mu > \mu^*$.

The next node branch-and-bound will explore is $Q_{n-1}^-$. The vector $\breve{\vec{x}}_{Q_{n-1}^-}$ has fractional values only in positions $i$ and $n-2$. Branching on $i$, we again have that $\breve{c}_{Q_{n-1}^-} - \breve{c}_{Q_{n-1,i}^{-,-}} = 3\gamma/4$ and $\breve{c}_{Q_{n-1}^-} - \breve{c}_{Q_{n-1,i}^{-,+}} = \frac{\gamma}{4}\left(3 - \frac{1}{\mu^*}\right)$.
Branching on $n-2$, $Q_{n-1, n-2}^{-,-}$ is infeasible, so $\breve{c}_{Q_{n-1}^-} - \breve{c}_{Q_{n-1,n-2}^{-,-}}$ equals some large number $B \geq ||\vec{c}||_1$. Next, $\breve{\vec{x}}_{Q_{n-1, n-2}^{-,+}} = \breve{\vec{x}}_{Q_{n-1}^-} + \frac{1}{2}\vec{e}_{n-2} - \frac{1}{2}\vec{e}_n,$ so $\breve{c}_{Q_{n-1, n-2}^{-,+}} = \breve{c}_{Q_{n-1}^-} - \gamma.$ Therefore, $\mu\score_1\left(Q_{n-1}^-,i\right) + (1-\mu)\score_2\left(Q_{n-1}^-,i\right) = \frac{3\gamma}{4} -  \frac{\mu\gamma}{4\mu^*}$ and \begin{align*}\mu\score_1\left(Q_{n-1}^-,n-2\right) + (1-\mu)\score_2\left(Q_{n-1}^-,n-2\right) &= \mu\gamma + (1 - \mu)B\\
&= B + \mu\left(\gamma-B\right)\\
&= B - \frac{\mu\gamma}{4\mu^*} + \mu\left(\gamma + \frac{\gamma}{4\mu^*} -B\right)\\
& > B - \frac{\mu\gamma}{4\mu^*} + \frac{3\gamma/4 - B}{\gamma + \frac{\gamma}{4\mu^*} -B}\left(\gamma + \frac{\gamma}{4\mu^*} -B\right)\\
&= B - \frac{\mu\gamma}{4\mu^*} + 3\gamma/4 - B\\
&= \frac{3\gamma}{4} -  \frac{\mu\gamma}{4\mu^*},
\end{align*} where the final inequality holds because $\mu < 1 < \frac{B - 3\gamma/4}{B - \left(\gamma + \gamma/\left(4\mu^*\right)\right)}$. Therefore, $x_{n-2}$ will be branched on next.

Since $Q_{n-1, n-2}^{-,-}$ is infeasible, the next node branch-and-bound will explore is $Q_{n-1, n-2}^{-,+}$. The vector $\breve{\vec{x}}_{Q_{n-1, n-2}^{-, +}}$ has fractional values only in positions $i$ and $n$. Since both MILP instances $Q_{n-1, n-2, n}^{-, +, -}$ and $Q_{n-1, n-2, n}^{-, +, +}$ are infeasible, so $\mu\score_1\left(Q_{n-1, n-2}^{-, +},n\right) + (1-\mu)\score_2\left(Q_{n-1, n-2}^{-, +},n\right) = B$ whereas $\mu\score_1\left(Q_{n-1, n-2}^{-, +},i\right) + (1-\mu)\score_2\left(Q_{n-1, n-2}^{-, +},i\right) = \frac{3\gamma}{4} -  \frac{\mu\gamma}{4\mu^*} < B$. Therefore, branch-and-bound will branch on $x_n$ and fathom both children.

The next node branch-and-bound will explore is $Q_{n-1}^+$. The vector $\breve{\vec{x}}_{Q_{n-1}^+}$ has fractional values only in positions $i$ and $n$. Branching on $i$, we again have that $\breve{c}_{Q_{n-1}^-} - \breve{c}_{Q_{n-1,i}^{-,-}} = 3\gamma/4$ and $\breve{c}_{Q_{n-1}^-} - \breve{c}_{Q_{n-1,i}^{-,+}} = \frac{\gamma}{4}\left(3 - \frac{1}{\mu^*}\right)$.
Branching on $x_n$, $\breve{\vec{x}}_{Q_{n-1, n}^{+,-}} = \breve{\vec{x}}_{Q_{n-1}^+} - \frac{1}{2}\vec{e}_{n} + \frac{1}{2}\vec{e}_{n-2},$ so $\breve{c}_{Q_{n-1, n}^{+,-}} = \breve{c}_{Q_{n-1}^+} - \gamma.$
Meanwhile, $Q_{n-1, n}^{+,+}$ is infeasible, so $\breve{c}_{Q_{n-1}^+} - \breve{c}_{Q_{n-1, n-2}^{+,+}}$ equals some large number $B \geq ||\vec{c}||_1$.
 Therefore, $\mu\score_1\left(Q_{n-1}^+,i\right) + (1-\mu)\score_2\left(Q_{n-1}^+,i\right) = \frac{3\gamma}{4} -  \frac{\mu\gamma}{4\mu^*}$ and $\mu\score_1\left(Q_{n-1}^+,n\right) + (1-\mu)\score_2\left(Q_{n-1}^+,n\right) = \mu\gamma + (1 - \mu)B > \frac{3\gamma}{4} -  \frac{\mu\gamma}{4\mu^*}$. Therefore, $x_n$ will be branched on next.

The next node branch-and-bound will explore is $Q_{n-1, n}^{+,-}$. The vector $\breve{\vec{x}}_{Q_{n-1, n}^{+,-}}$ has fractional values only in positions $i$ and $n-2$. Since both MILP instances $Q_{n-1, n, n-2}^{+, -, -}$ and $Q_{n-1, n, n-2}^{+,-,-}$ are infeasible, so $\mu\score_1\left(Q_{n-1, n}^{-, +},n-2\right) + (1-\mu)\score_2\left(Q_{n-1, n}^{-, +},n-2\right) = B$ whereas $\mu\score_1\left(Q_{n-1, n-2}^{-, +},i\right) + (1-\mu)\score_2\left(Q_{n-1, n-2}^{-, +},i\right) = \frac{3\gamma}{4} -  \frac{\mu\gamma}{4\mu^*} < B$, as before. Therefore, branch-and-bound will branch on $x_{n-2}$ and fathom both children.

At this point, all children have been fathomed, so branch-and-bound will terminate.
\end{proof}

\begin{lemma}\label{lem:mu_small_2}
Suppose $\mu < \mu^*$. Then for any MILP $Q_{\gamma, n} \in \mathcal{G}_{n, \mu^*}$, branch-and-bound with the scoring rule $\mu \score_1 + (1-\mu)\score_2$ will create a tree of depth at least $2^{(n-5)/4}$.
\end{lemma}

\begin{proof}
Let $i =\left\lceil (n-3)/2 \right\rceil$. We first prove two useful claims.

\begin{claim}\label{claim:even}
Let $j$ be an even number such that $2 \leq j \leq i - 2$ and let $J = \left\{x_{i-j/2}, \dots, x_{i+j/2-1}\right\}$. Suppose that B\&B has branched on exactly the variables in $J$ and suppose that the number of variables set to 1 equals the number of variables set to 0. Then B\&B will next branch on the variable $x_{i+j/2}$. Similarly, suppose $J = \left\{x_{i-j/2 + 1}, x_{i-j/2+2}, \dots, x_{i+j/2}\right\}$. Suppose that B\&B has branched on exactly the variables in $J$ and suppose that the number of variables set to 1 equals the number of variables set to 0. Then B\&B will next branch on the variable $x_{i-j/2}$.
\end{claim}

\begin{proof}
Let $Q$ be the MILP contained in the node at the end of the path. This proof has two cases.
\paragraph{Case 1: $J = \left\{x_{i-j/2}, x_{i-j/2+1}, \dots, x_{i+j/2-1}\right\}$.} In this case, there is a set $J_< = \left\{x_1, \dots, x_{i - j/2 - 1}\right\}$ of $i-\frac{j}{2} - 1$ variables smaller than $x_i$ that have not yet been branched on and there is a set $J_> = \left\{x_{i + j/2}, \dots, x_{n-3}\right\}$ of $n - 3 - \left(i+ \frac{j}{2} -1\right)  = 2i - 1  - \left(i+ \frac{j}{2} -1\right) = i - \frac{j}{2}$ variables in $\{x_{i+1}, \dots, x_{n-3}\}$ that have not yet been branched on. Since the number of variables set to 1 in $J$ equals the number of variables set to 0, the LP relaxation will set the $i - \frac{j}{2} - 1$ variables in $J_<$ to 0, the $i - \frac{j}{2} - 1$ variables in $J_> \setminus \left\{x_{i+j/2}\right\}$ to 1, and $x_{i+j/2}$ to $\frac{1}{2}.$ It will also set $x_{n-2} = 0$, $x_{n-1} = \frac{1}{2}$, and $x_n = 1$. Therefore, the two fractional variables are $x_{i+j/2}$ and $x_{n-1}$. Branching on $x_{i+j/2}$, we have
$\breve{\vec{x}}_{Q_{i + j/2}^-} = \breve{\vec{x}}_Q - \frac{1}{2}\vec{e}_{i + j/2} + \frac{1}{2}\vec{e}_{i - j/2 - 1}$ and
$\breve{\vec{x}}_{Q_{i + j/2}^+} = \breve{\vec{x}}_Q + \frac{1}{2}\vec{e}_{i + j/2} - \frac{1}{2}\vec{e}_{i + j/2 + 1}$. Branching on $x_{n-1}$, we have that
$\breve{\vec{x}}_{Q_{n-1}^-} = \breve{\vec{x}}_Q - \frac{1}{2}\vec{e}_{n-1} + \frac{1}{2}\vec{e}_{n-2}$ and
$\breve{\vec{x}}_{Q_{n-1}^+} = \breve{\vec{x}}_Q + \frac{1}{2}\vec{e}_{n-1} - \frac{1}{2}\vec{e}_{n}$. Therefore, \begin{align*}
\breve{c}_{Q_{i + j/2}^-} &= \breve{c}_Q - \frac{\gamma}{2}\left(3 - \frac{1}{2\mu^*}\right)\\
\breve{c}_{Q_{i + j/2}^+} &= \breve{c}_Q\\
\breve{c}_{Q_{n-1}^-} &= \breve{c}_Q - 1 + \frac{1}{2} = \breve{c}_Q - \frac{\gamma}{2}\\
\breve{c}_{Q_{n-1}^+} &= \breve{c}_Q + 1 - \frac{3}{2} = \breve{c}_Q - \frac{\gamma}{2}
\end{align*}

This means that $\breve{c}_Q - \breve{c}_{Q_{i + j/2}^-}
= \frac{\gamma}{2}\left(3 - \frac{1}{2\mu^*}\right)$, $\breve{c}_Q - \breve{c}_{Q_{i + j/2}^+} = 0$, and $\breve{c}_Q - \breve{c}_{Q_{n-1}^-} = \breve{c}_Q - \breve{c}_{Q_{n-1}^+} = \frac{\gamma}{2}$. Therefore, $\mu\score_1(Q, i + j/2) + (1-\mu)\score_2(Q, i + j/2) = \frac{\gamma\left(1 - \mu\right)}{2}\left(3 - \frac{1}{2\mu^*}\right)$ and $\mu\score_1(Q, n-1) + (1-\mu)\score_2(Q, n-1) = \frac{\gamma}{2}$. Since $\mu < \mu^*$ and $\mu^* \in \left(\frac{1}{3}, \frac{1}{2}\right)$, we have that \begin{align*}
\mu\score_1(Q, i + j/2) + (1-\mu)\score_2(Q, i + j/2) &= \frac{\gamma(1 - \mu)}{2}\left(3 - \frac{1}{2\mu^*}\right)\\
& \geq \frac{\gamma\left(1 - \mu^*\right)}{2}\left(3 - \frac{1}{2\mu^*}\right)\\
& \geq \frac{\gamma}{2}.
\end{align*}
Therefore, $x_{i + j/2}$ will be branched on next.

\paragraph{Case 2: $J = \left\{x_{i-j/2 + 1}, x_{i-j/2+2}, \dots, x_{i+j/2}\right\}$.} 
In this case, there is a set $J_< = \left\{x_1, \dots, x_{i- j/2}\right\}$ of $i-\frac{j}{2}$ variables smaller than $x_i$ that have not yet 
been branched on and a set $J_> = \left\{x_{i + j/2 + 1}, \dots, x_{n-3}\right\}$ of $n - 3 - \left(i+ \frac{j}{2}\right)  = 2i - 1  - \left(i+ \frac{j}{2}\right) = i - \frac{j}{2} - 1$ variables in $\{x_{i+1}, \dots, x_{n-3}\}$ that have not yet been branched on. Since the number 
of variables set to 1 in $J$ equals the number of variables set to 0, the LP 
relaxation will set the $i - \frac{j}{2} - 1$ variables in $J_<\setminus\left\{x_{i-j/2}\right\}$ to 0, the $i - \frac{j}{2} - 1$ variables in $J_> \setminus \left\{x_{i+j/2}\right\}$ to 1, and $x_{i-j/2}$ to $\frac{1}{2}.$ It will also set $x_{n-2} = 0$, 
$x_{n-1} = \frac{1}{2}$, and $x_n = 1$. Therefore, the two fractional variables are 
$x_{i-j/2}$ and $x_{n-1}$. Branching on $x_{i-j/2}$, we have
$\breve{\vec{x}}_{Q_{i - j/2}^-} = \breve{\vec{x}}_Q - \frac{1}{2}\vec{e}_{i - j/2} + \frac{1}{2}\vec{e}_{i - j/2 - 1}$ and
$\breve{\vec{x}}_{Q_{i - j/2}^+} = \breve{\vec{x}}_Q + \frac{1}{2}\vec{e}_{i - j/2} - \frac{1}{2}\vec{e}_{i + j/2 + 1}$. Branching on $x_{n-1}$, we have that
$\breve{\vec{x}}_{Q_{n-1}^-} = \breve{\vec{x}}_Q - \frac{1}{2}\vec{e}_{n-1} + \frac{1}{2}\vec{e}_{n-2}$ and
$\breve{\vec{x}}_{Q_{n-1}^+} = \breve{\vec{x}}_Q + \frac{1}{2}\vec{e}_{n-1} - \frac{1}{2}\vec{e}_{n}$. Therefore, \begin{align*}
\breve{c}_{Q_{i - j/2}^-} &= \breve{c}_Q\\
\breve{c}_{Q_{i - j/2}^+} &=  \breve{c}_Q - \frac{\gamma}{2}\left(3 - \frac{1}{2\mu^*}\right)\\
\breve{c}_{Q_{n-1}^-} &= \breve{c}_Q - \gamma + \frac{\gamma}{2} = \breve{c}_Q - \frac{\gamma}{2}\\
\breve{c}_{Q_{n-1}^+} &= \breve{c}_Q + \gamma - \frac{3\gamma}{2} = \breve{c}_Q - \frac{\gamma}{2}
\end{align*}

This means that $\breve{c}_Q - \breve{c}_{Q_{i-j/2}^-}
= \frac{\gamma}{2}\left(3 - \frac{1}{2\mu^*}\right)$, $\breve{c}_Q - \breve{c}_{Q_{i-j/2}^+}= 0$, and $\breve{c}_Q - \breve{c}_{Q_{n-1}^-} = \breve{c}_Q - \breve{c}_{Q_{n-1}^+} = \frac{\gamma}{2}$ as in the previous case, so $x_{i - j/2}$ will be branched on next. 
\end{proof}

\begin{claim}\label{claim:odd}
Suppose that $J$ is the set of variables branched on along a path of depth $2 \leq j \leq i - 2$  where $j$ is odd and let $j' = \lfloor j/2\rfloor$. Suppose that $J = \left\{x_{i-j'}, x_{i-j'+1}, \dots, x_{i+j'}\right\}$. Moreover, suppose that the number of variables set to 1 in $J$ equals the number of variables set to 0, plus or minus 1. Then B\&B will either branch on $x_{i-j' - 1}$ or $x_{i+j'+1}$.
\end{claim}

\begin{proof}
Let $Q$ be the MILP contained a the end of the path. There are $i - j' - 1$ variables $J_< = \left\{x_1, \dots, x_{i-j'-1}\right\}$ that are smaller than $x_i$ that have not yet been branched on, and $n-3 - (i+j') = 2i - 1 - (i + j') = i-j'-1$ variables $J_> = \left\{x_{i+j' + 1}, \dots, x_{n-3}\right\}$ in $\left\{x_{i+1}, \dots, x_{n-3}\right\}$ that have not yet been branched on. Let $z$ be the number of variables in $J$ set to 0 and let $o$ be the number of variables set to 1. This proof has two cases:
\paragraph{Case 1: $z = o + 1$.} Since $z + o = j$, we know that $z = j' + 1$ and $o = j'$. Therefore, the LP relaxation will set the variables in $J_< \setminus \left\{x_{i-j' - 1}\right\}$ to zero for a total of $\left|J_< \setminus \left\{x_{i-j' - 1}\right\}\right| + z = i - j' - 2 + j' + 1 = i - 1$ zeros, it will set the variables in $J_>$ to one for a total of $\left|J_>\right| + o = i - j' - 1 + j' = i-1$ ones, and it will set $x_{i-j' - 1}$ to $\frac{1}{2}$. It will also set $x_{n-2} = 0$, $x_{n-1} = \frac{1}{2}$, and $x_n = 1$. Therefore, the two fractional variables are $x_{i-j' - 1}$ and $x_{n-1}$. Branching on $x_{i-j' - 1}$, we have
$\breve{\vec{x}}_{Q_{i -j' - 1}^-} = \breve{\vec{x}}_Q - \frac{1}{2}\vec{e}_{i -j' - 1} + \frac{1}{2}\vec{e}_{i -j' - 2}$ and
$\breve{\vec{x}}_{Q_{i -j' - 1}^+} = \breve{\vec{x}}_Q + \frac{1}{2}\vec{e}_{i -j' - 1} - \frac{1}{2}\vec{e}_{i + j' + 1}$. Branching on $x_{n-1}$, we have that
$\breve{\vec{x}}_{Q_{n-1}^-} = \breve{\vec{x}}_Q - \frac{1}{2}\vec{e}_{n-1} + \frac{1}{2}\vec{e}_{n-2}$ and
$\breve{\vec{x}}_{Q_{n-1}^+} = \breve{\vec{x}}_Q + \frac{1}{2}\vec{e}_{n-1} - \frac{1}{2}\vec{e}_{n}$. Therefore, \begin{align*}
\breve{c}_{Q_{i -j' - 1}^-} &= \breve{c}_Q\\
\breve{c}_{Q_{i -j' - 1}^+} &= \breve{\vec{x}}_Q - \frac{\gamma}{2}\left(3 - \frac{1}{2\mu^*}\right)\\
\breve{c}_{Q_{n-1}^-} &= \breve{c}_Q - \gamma + \frac{\gamma}{2} = \breve{c}_Q - \frac{\gamma}{2}\\
\breve{c}_{Q_{n-1}^+} &= \breve{c}_Q + \gamma - \frac{3\gamma}{2} = \breve{c}_Q - \frac{\gamma}{2}
\end{align*}

This means that $\breve{c}_Q - \breve{c}_{Q_{i - j' - 1}^-} =0$, $\breve{c}_Q - \breve{c}_{Q_{i - j' - 1}^+} = \frac{\gamma}{2}\left(3 - \frac{1}{2\mu^*}\right)$, and $\breve{c}_Q - \breve{c}_{Q_{n - 1}^-}= \breve{c}_Q - \breve{c}_{Q_{n - 1}^+} = \frac{\gamma}{2}$. Therefore, $\mu\score_1(Q, i -j' - 1) + (1-\mu)\score_2(Q, i -j' - 1) = \frac{\gamma(1 - \mu)}{2}\left(3 - \frac{1}{2\mu^*}\right)$ and $\mu\score_1(Q, n-1) + (1-\mu)\score_2(Q, n-1) = \frac{\gamma}{2}$. Since $\mu < \mu^*$ and $\mu^* \in \left(\frac{1}{3}, \frac{1}{2}\right)$, we have that \begin{align*}
\mu\score_1(Q, i -j' - 1) + (1-\mu)\score_2(Q, i -j' - 1) &= \frac{\gamma(1 - \mu)}{2}\left(3 - \frac{1}{2\mu^*}\right)\\
& \geq \frac{\gamma\left(1 - \mu^*\right)}{2}\left(3 - \frac{1}{2\mu^*}\right)\\
& \geq \frac{\gamma}{2}.
\end{align*}
Therefore, $x_{i -j' - 1}$ will be branched on next.

\paragraph{Case 1: $z = o - 1$.} Since $z + o = j$, we know that $z = j'$ and $o = j' + 1$. Therefore, the LP relaxation will set the variables in $J_<$ to zero for a total of $\left|J_<\right| + z = i - j' - 1 + j'= i - 1$ zeros, it will set the variables in $J_> \setminus \left\{x_{i + j' + 1}\right\}$ to one for a total of $\left|J_>\setminus \left\{x_{i + j' + 1}\right\}\right| + o = i - j' - 2 + j' +1 = i-1$ ones, and it will set $x_{i + j' + 1}$ to $\frac{1}{2}$. It will also set $x_{n-2} = 0$, $x_{n-1} = \frac{1}{2}$, and $x_n = 1$. Therefore, the two fractional variables are $x_{i+j' + 1}$ and $x_{n-1}$. Branching on $x_{i+j' + 1}$, we have
$\breve{\vec{x}}_{Q_{i + j' + 1}^-} = \breve{\vec{x}}_Q - \frac{1}{2}\vec{e}_{i + j' + 1} + \frac{1}{2}\vec{e}_{i -j' - 1}$ and
$\breve{\vec{x}}_{Q_{i + j' + 1}^+} = \breve{\vec{x}}_Q + \frac{1}{2}\vec{e}_{i + j' + 1} - \frac{1}{2}\vec{e}_{i + j' + 2}$. Branching on $x_{n-1}$, we have that
$\breve{\vec{x}}_{Q_{n-1}^-} = \breve{\vec{x}}_Q - \frac{1}{2}\vec{e}_{n-1} + \frac{1}{2}\vec{e}_{n-2}$ and
$\breve{\vec{x}}_{Q_{n-1}^+} = \breve{\vec{x}}_Q + \frac{1}{2}\vec{e}_{n-1} - \frac{1}{2}\vec{e}_{n}$. Therefore, \begin{align*}
\breve{c}_{Q_{i + j' + 1}^-} &= \breve{c}_Q - \frac{\gamma}{2}\left(3 - \frac{1}{2\mu^*}\right)\\
\breve{c}_{Q_{i + j' + 1}^+} &= \breve{\vec{x}}_Q\\
\breve{c}_{Q_{n-1}^-} &= \breve{c}_Q - \gamma + \frac{\gamma}{2} = \breve{c}_Q - \frac{\gamma}{2}\\
\breve{c}_{Q_{n-1}^+} &= \breve{c}_Q + \gamma - \frac{3\gamma}{2} = \breve{c}_Q - \frac{\gamma}{2}
\end{align*}

This means that $\breve{c}_Q - \breve{c}_{Q_{i + j' + 1}^-}
 =\frac{\gamma}{2}\left(3 - \frac{1}{2\mu^*}\right)$, $\breve{c}_Q - \breve{c}_{Q_{i + j' + 1}^+} = 0$, and $\breve{c}_Q - \breve{c}_{Q_{n - 1}^-} = \breve{c}_Q - \breve{c}_{Q_{n - 1}^+} = \frac{\gamma}{2}$. Therefore, $\mu\score_1(Q, i + j' + 1) + (1-\mu)\score_2(Q, i +j' + 1) = \frac{\gamma(1 - \mu)}{2}\left(3 - \frac{1}{2\mu^*}\right)$ and $\mu\score_1(Q, n-1) + (1-\mu)\score_2(Q, n-1) = \frac{\gamma}{2}$. As in the previous case, this means that $x_{i +j' + 1}$ will be branched on next.
\end{proof}

We now prove by induction that there are $2^{\left(\lceil(n-3)/2\rceil) - 1\right)/2} \geq 2^{(n-5)/4}$ paths in the B\&B tree of length at least $i-2$. Therefore, the size of the tree is at least $2^{(n-5)/4}$.

\paragraph{Inductive hypothesis.} Let $j$ be an arbitrary integer between 1 and $i-2$. If $j$ is even, then there exist at least $2^{j/2}$ paths in the B\&B tree from the root to nodes of depth $j$ such that if $J$ is the set indices branched on along a given path, then $J = \left\{x_{i-j/2}, x_{i-j/2+1}, \dots, x_{i+j/2-1}\right\}$ or $J = \left\{x_{i-j/2 + 1}, x_{i-j/2+2}, \dots, x_{i+j/2}\right\}$. Moreover, the number of variables set to 0 in $J$ equals the number of variables set to 1. Meanwhile, if $j$ is odd, let $j' = \lfloor j/2\rfloor$. There exist at least $2^{(j+1)/2}$ paths in the B\&B tree from the root to nodes of depth $j$ such that if $J$ is the set indices branched on along a given path, then $J = \left\{x_{i-j'}, x_{i-j'+1}, \dots, x_{i+j'}\right\}$. Moreover, the number of variables set to 0 in $J$ equals the number of variables set to 1, plus or minus 1.

\paragraph{Base case.} To prove the base case, we need to show that B\&B first branches on $x_i$. We saw that this will be the case in Lemma~\ref{lem:mu_big_2} so long as $\mu < \mu^*$.

\paragraph{Inductive step.} Let $j$ be an arbitrary integer between 1 and $i-3$. There are two cases, one where $j$ is even and one where $j$ is odd. First, suppose $j$ is even. From the inductive hypothesis, we know that there exist at least $2^{j/2}$ paths in the B\&B tree from the root to nodes of depth $j$ such that if $J$ is the set varibles branched on along a given path, then $J = \left\{x_{i-j/2}, x_{i-j/2+1}, \dots, x_{i+j/2-1}\right\}$ or $J = \left\{x_{i-j/2 + 1}, x_{i-j/2+2}, \dots, x_{i+j/2}\right\}$. Moreover, the number of variables set to 0 in $J$ equals the number of variables set to 1. From Claim~\ref{claim:even}, we know that in the first case, $x_{i+j/2}$ will be the next node B\&B will branch on. This will create two new paths: $\left\{x_{i-j/2}, x_{i-j/2+1}, \dots, x_{i+j/2}\right\}$ will be the set of variables branched along each path, and the number of variables set to 0 will equal the number of variables set to 1, plus or minus 1. Also from Claim~\ref{claim:even}, we know that in the second case, $x_{i-j/2}$ will be the next node B\&B will branch on. This will also create two new paths: $\left\{x_{i-j/2}, x_{i-j/2+1}, \dots, x_{i+j/2}\right\}$ will be the set of variables branched along each path, and the number of variables set to 0 will equal the number of variables set to 1, plus or minus 1. Since this is true for all $2^{j/2}$ paths, this leads to a total of $2^{j/2 + 1} = 2^{(j+2)/2}$ paths, meaning the inductive hypothesis holds.

Next, suppose $j$ is odd and let $j' = \lfloor j/2\rfloor$. From the inductive hypothesis, we know that there exist at least $2^{(j+1)/2}$ paths in the B\&B tree from the root to nodes of depth $j$ such that if $J$ is the set variables branched on along a given path, then $J = \left\{x_{i-j'}, x_{i-j'+1}, \dots, x_{i+j'}\right\}$. Moreover, the number of variables set to 0 in $J$ equals the number of variables set to 1, plus or minus 1. From Claim~\ref{claim:even}, we know that B\&B will either branch on $x_{i-j' - 1}$ or $x_{i+j'+1}$. Suppose the number of variables set to 0 in $J$ is 1 greater than the number of variables set to 1. If B\&B branches on $x_{i-j' - 1}$, we can follow the path where $x_{i-j' - 1} = 1$, and this will give us a new path where \[\left\{x_{i-j' - 1}, x_{i-j'}, \dots, x_{i+j'}\right\} = \left\{x_{i-(j + 1)/2}, x_{i-(j + 1)/2+1}, \dots, x_{i+(j + 1)/2 - 1}\right\}\] are the variables branched on and the number of variables set to 0 equals the number of variables set to 1. If B\&B branches on $x_{i+j'+1}$, we can follow the path where $x_{i+j'+1} = 1$, and this will give us a new path where \[\left\{x_{i-j'}, x_{i-j' + 1}, \dots, x_{i+j'}, x_{i+j'+1}\right\} = \left\{x_{i-(j + 1)/2 +1}, x_{i-(j + 1)/2+1}, \dots, x_{i+(j + 1)/2}\right\}\] are the variables branched on and the number of variables set to 0 equals the number of variables set to 1. A symmetric argument holds if the number of variables set to 0 in $J$ is 1 less than the number of variables set to 1. Therefore, all $2^{(j+1)/2}$ paths can be extended by one edge, so the statement holds.
\end{proof}
\end{proof}

\begin{theorem}\label{thm:NSP}
Let $Q$ be an infeasible MILP, let NSP and NSP' be two node selection policies, and let $\score$ be a path-wise scoring rule. The tree $\tree$ B\&B builds using NSP and $\score$ equals the tree $\tree'$ it builds using NSP' and $\score$.
\end{theorem}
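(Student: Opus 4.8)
The plan is to exploit the fact that, when $Q$ is infeasible, the node selection policy is irrelevant because both the \emph{fathoming} decisions and the \emph{branching} decisions B\&B makes are local: the former depends only on a node's own LP relaxation, and the latter depends only on the root-to-node path (since $\score$ is path-wise). Concretely, I would first define a tree $\tree^*$ directly, with no reference to any node selection policy, by the following recursion. The root of $\tree^*$ contains $Q$; given a node $\tilde{Q}$ of $\tree^*$ reached from the root by a path $P$, if the LP relaxation of $\tilde{Q}$ is infeasible then $\tilde{Q}$ is a leaf, and otherwise $\tilde{Q}$ has children $\tilde{Q}_i^-$ and $\tilde{Q}_i^+$ where $x_i$ is the variable not yet branched on along $P$ maximizing $\score(P, \tilde{Q}, \cdot)$ (ties broken deterministically, as in the definition of a score-based policy). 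A preliminary observation makes this well defined: if the LP relaxation of $\tilde{Q}$ is feasible, its optimum $\breve{\vec{x}}_{\tilde{Q}}$ cannot be integral, for then it would satisfy all constraints of $Q$ (branching only fixes some coordinates to $0$ or $1$), contradicting the infeasibility of $Q$; hence there is always a fractional variable to branch on, and $\tree^*$ is finite since each root-to-node path fixes distinct variables and so has length at most $n$.

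Next I would prove by induction on the number of B\&B iterations (lines~\ref{step:while_begin}--\ref{step:while_end} of Algorithm~\ref{alg:BB}) that, regardless of the node selection policy, the partial tree B\&B has built is always a rooted subtree of $\tree^*$ on which B\&B's branching variables agree with those of $\tree^*$. For the inductive step, the NSP selects some unfathomed leaf $\tilde{Q}$ of the current partial tree; since that partial tree is a rooted subtree of $\tree^*$, the node $\tilde{Q}$ lies in $\tree^*$ and the root-to-$\tilde{Q}$ path equals the corresponding path $P$ in $\tree^*$. The key point here is that, because $c^* = -\infty$ throughout any run on an infeasible instance and no LP optimum can be integral-feasible, the only fathoming rule that ever fires is ``LP relaxation infeasible''; hence $\tilde{Q}$ being unfathomed means its LP relaxation is feasible, so $\tilde{Q}$ is an internal node of $\tree^*$. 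By path-wiseness, $\score(\tree_{\mathrm{partial}}, \tilde{Q}, \ell) = \score(P, \tilde{Q}, \ell)$ for all $\ell$, so B\&B branches at $\tilde{Q}$ on exactly the variable $\tree^*$ prescribes, creates exactly the children of $\tilde{Q}$ in $\tree^*$, and fathoms each such child precisely when it is a leaf of $\tree^*$. Thus the updated partial tree is again a rooted subtree of $\tree^*$ agreeing on branching variables.

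Finally I would close the argument: since the partial tree is always a rooted subtree of the \emph{finite} tree $\tree^*$ and grows by two nodes each iteration, B\&B terminates, and at termination every leaf of the partial tree is fathomed, hence (by the observation above) is a leaf of $\tree^*$. A rooted subtree of $\tree^*$ all of whose leaves are leaves of $\tree^*$ must equal $\tree^*$, so B\&B with NSP builds exactly $\tree^*$; the identical argument gives that B\&B with NSP' also builds $\tree^*$, so $\tree = \tree^* = \tree'$.

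The main obstacle is not ingenuity but the bookkeeping around the fathoming rules: one must check that on an infeasible instance the ``bound'' fathoming criterion ($c^* \geq \breve{c}_{\tilde{Q}}$) and the ``LP optimum is integral-feasible'' criterion are both vacuous, so that fathoming reduces to the node-local test of LP infeasibility and is therefore independent of the search order, and separately that every unfathomed leaf genuinely has a fractional variable available to branch on. Once those facts are in place, everything reduces to the routine induction above, where path-wiseness of $\score$ supplies the only other ingredient — that the branching variable at a node is determined by the root-to-node path alone.
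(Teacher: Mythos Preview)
Your proof is correct. Both your argument and the paper's rest on the same two observations: for an infeasible MILP, fathoming reduces to the node-local test of LP infeasibility (since $c^* = -\infty$ throughout and no LP optimum can be integral-feasible), and by path-wiseness the branching variable at any node is determined solely by the root-to-node path. The paper packages these into a short proof by contradiction: assume $\tree \neq \tree'$, locate a node $Q_0$ whose root-to-node path is shared but at which the trees first diverge (either in fathoming status or in the next branching variable), and rule out both possibilities directly. You instead construct a canonical tree $\tree^*$ with no reference to any node selection policy and show by induction on iterations that every run of B\&B builds exactly $\tree^*$. Your route is more explicit and yields the canonical tree as a byproduct; the paper's is terser and avoids the auxiliary construction. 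Neither requires an idea the other lacks.
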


\begin{proof}
For a contradiction, suppose $\tree \not= \tree'$. There must be a node $Q_0$ in $\tree$ where if $\tree_{Q_0}$ is the path from the root of $\tree$ to ${Q_0}$, then $\tree_{Q_0}$ is a rooted subtree of $\tree'$, but either:
\begin{enumerate}
\item In $\tree$, the node ${Q_0}$ is fathomed but in $\tree'$, ${Q_0}$ is not fathomed, or
\item In $\tree$, the node ${Q_0}$ is not fathomed, but for all children ${Q_0}'$ of ${Q_0}$ in $\tree$, if $\tree_{{Q_0}'}$ is the path from the root of $\tree$ to ${Q_0}'$, $\tree_{{Q_0}'}$ is not a rooted subtree of $\tree'$.
\end{enumerate}
We will show that neither case is possible, thus arriving at a contradiction. First, we know that since $Q$ is infeasible, B\&B will only fathom a node if it is infeasible. Therefore, the first case is impossible: if $Q_0$ is fathomed in $\tree$, it must be infeasible, so it will also be fathomed in $\tree'$, and vice versa. Therefore, we know that B\&B must branch on $Q_0$ in both $\tree$ and $\tree'$. Let $\bar{\tree}$ be the state of the tree B\&B has built using NSP and $\score$ by the time it branches on $Q_0$ and let $\bar{\tree}'$ be the state of the tree B\&B has built using NSP' and $\score$ by the time it branches on $Q_0$. Since $\tree_{Q_0}$ is a rooted subtree of both $\bar{\tree}$ and $\bar{\tree'}$, we know that for all variables $x_i$, $\score(\bar{\tree}, Q_0, i) = \score(\tree_Q, Q_0, i) = \score(\bar{\tree}', Q_0, i)$. Therefore, B\&B will branch on the same variable in both $\tree$ and $\tree'$, which is a contradiction, since this means that for all children ${Q_0}'$ of ${Q_0}$ in $\tree$, if $\tree_{{Q_0}'}$ is the path from the root of $\tree$ to ${Q_0}'$, $\tree_{{Q_0}'}$ is a rooted subtree of $\tree'$.
\end{proof}

\begin{lemma}[\citet{Shalev14:Understanding}]\label{lem:log_ineq}
Let $a \geq 1$ and $b > 0$. Then $x < a\log x + b$ implies that $x < 4a \log (2a) + 2b$.
\end{lemma}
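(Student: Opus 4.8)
The plan is to establish the contrapositive: I will show that if $x \geq 4a\log(2a) + 2b$, then $x \geq a\log x + b$, which directly contradicts the hypothesis $x < a\log x + b$ and hence forces $x < 4a\log(2a) + 2b$. (It is harmless to take $\log$ to be the natural logarithm throughout; the general-base case is identical up to an inconsequential constant, and in the one place the lemma is applied it sits inside an $O(\cdot)$, so the base is immaterial anyway.)

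The only tool needed is the elementary inequality $\log y \leq y - 1$, valid for all $y > 0$ because $\log$ is concave with tangent line $y \mapsto y - 1$ at $y = 1$. The point is to apply it with a well-chosen argument. Writing $\log x = \log\left(\frac{x}{2a}\right) + \log(2a)$ and applying the inequality with $y = \frac{x}{2a}$ gives $\log x \leq \frac{x}{2a} - 1 + \log(2a)$. Multiplying through by $a$ (positive) and discarding the resulting $-a \leq 0$ term yields $a\log x \leq \frac{x}{2} + a\log(2a)$, and therefore $a\log x + b \leq \frac{x}{2} + a\log(2a) + b$.

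It then remains only to verify that the right-hand side is at most $x$, i.e.\ that $a\log(2a) + b \leq \frac{x}{2}$. This is where the hypothesis $a \geq 1$ enters: it guarantees $\log(2a) \geq \log 2 > 0$, so $2a\log(2a) + 2b \leq 4a\log(2a) + 2b \leq x$, and dividing by $2$ gives exactly $a\log(2a) + b \leq \frac{x}{2}$. Combining the two displays, $a\log x + b \leq \frac{x}{2} + \frac{x}{2} = x$, completing the contrapositive.

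There is no genuine obstacle here beyond recognizing the substitution $y = x/(2a)$ in the bound $\log y \leq y - 1$; the rest is bookkeeping. The one subtlety worth noting is the precise role of $a \geq 1$: it is used only to ensure $\log(2a) \geq 0$, so that replacing the coefficient $2$ by $4$ in $2a\log(2a) + 2b$ actually weakens the antecedent rather than strengthening it — in fact the argument above proves the slightly stronger implication with threshold $2a\log(2a) + 2b$.
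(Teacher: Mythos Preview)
Your proof is correct. The paper does not actually prove this lemma; it simply states it with a citation to \citet{Shalev14:Understanding} and uses it as a black box, so there is no ``paper's own proof'' to compare against. Your contrapositive argument via $\log y \le y-1$ with $y = x/(2a)$ is the standard textbook route, and your observation that the same argument already yields the sharper threshold $2a\log(2a)+2b$ is accurate.
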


\claimIntervals*
\begin{proof}We prove this claim by induction.

\bigskip
\noindent\textbf{Inductive hypothesis.} For $i \in \{1,\dots, n\}$, there are $T \leq 2^{i(i-1)/2}n^i$ intervals $I_1, \dots, I_T$ partitioning $[0,1]$ where for any interval $I_j$ and any two parameters $\mu, \mu' \in I_j$, if $\tree$ and $\tree'$ are the trees $A'$ builds using the scoring rules $\mu\score_1 + (1-\mu)\score_2$ and $\mu'\score_1 + (1-\mu')\score_2$, respectively, then $\tree[i] = \tree'[i]$.

\bigskip

\noindent\textbf{Base case.} Before branching on any variables, the branch-and-bound tree $\tree_0$ consists of a single root node $Q$. Given a parameter $\mu$, $A'$ will branch on variable $x_k$ so long as \[k = \argmax_{\ell \in [n]} \left\{\mu\score_1(\tree_0, Q, \ell) + (1-\mu)\score_2(\tree_0, Q, \ell)\right\}.\]
Since $\mu\score_1(\tree_0, Q, \ell) + (1-\mu)\score_2(\tree_0, Q, \ell)$ is a linear function of $\mu$ for each $\ell \in [n]$, we know that for any $k \in [n]$, there is at most one interval $I$ of the parameter space $[0,1]$
where $k = \argmax_{\ell \in [n]} \left\{\mu\score_1 + (1-\mu)\score_2\right\}$. Thus, there are $T \leq n = 2^{1\cdot(1-1)/2}n^1$ intervals $I_1, \dots, I_T$ partitioning $[0,1]$ where for any interval $I_j$, $A'$ branches on the same variable at the root node using the scoring rule $\mu\score_1 + (1-\mu)\score_2$ across all $\mu \in I_j$.

\bigskip

\noindent\textbf{Inductive step.} Let $i \in \{2, \dots, n\}$ be arbitrary. From the inductive hypothesis, we know that there are $T \leq 2^{(i-2)(i-1)/2}n^{i-1}$ intervals $I_1, \dots, I_T$ partitioning $[0,1]$ where for any interval $I_j$ and any two parameters $\mu, \mu' \in I_j$, if $\tree$ and $\tree'$ are the trees $A'$ builds using the scoring rules $\mu\score_1 + (1-\mu)\score_2$ and $\mu'\score_1 + (1-\mu')\score_2$, respectively, then $\tree[i-1] = \tree'[i-1]$. Consider an arbitrary node $\node$ in $\tree[i-1]$ (or equivalently, $\tree'[i-1]$) at depth $i-1$. If $Q$ is integral or infeasible, then it will be fathomed no matter which parameter $\mu\in I_j$ the algorithm $A'$ uses. Otherwise, for all $\mu \in I_j$, let $\tree_{\mu}$ be the state of the search tree $A'$ builds using the scoring rule $\mu\score_1 + (1-\mu)\score_2$ at the point when it branches on $Q$. By the inductive hypothesis, we know that across all $\mu \in I_j$, the path from the root to $Q$ in $\tree_{\mu}$ is invariant, and we refer to this path as $\tree_Q$. Given a parameter $\mu \in I_j$, the variable $x_k$ will be branched on at node $\node$ so long as $k= \argmax_{\ell}\left\{\mu\score_1(\tree_{\mu},\node,\ell) + (1-\mu)\score_2(\tree_{\mu},\node,\ell)\right\},$ or equivalently, so long as $k= \argmax_{\ell}\left\{\mu\score_1(\tree_Q,\node,\ell) + (1-\mu)\score_2(\tree_Q,\node,\ell)\right\}$. In other words, the decision of which variable to branch on is determined by a convex combination of the constant values $\score_1(\tree_Q,\node,\ell)$ and $\score_2(\tree_Q,\node,\ell)$ no matter which parameter $\mu \in I_j$ the algorithm $A'$ uses. Here, we critically use the fact that the scoring rule is path-wise.

Since $\mu\score_1(\tree_Q,\node,\ell) + (1-\mu)\score_2(\tree_Q,\node,\ell)$ is a linear function of $\mu$ for all $\ell$, there are at most $n$ intervals subdividing the interval $I_j$ such that the variable branched on at node $\node$ is fixed. Moreover, there are at most $2^{i-1}$ nodes at depth $i-1$, and each node similarly contributes a subpartition of $I_j$ of size $n$. If we merge all $2^{i-1}$ partitions, we have $T' \leq 2^{i-1}(n-1)+1$ intervals $I_1', \dots, I_{T'}'$ partitioning $I_j$ where for any interval $I_p'$ and any two parameters $\mu, \mu' \in I_p'$, if $\tree$ and $\tree'$ are the trees $A'$ builds using the scoring rules $\mu\score_1 + (1-\mu)\score_2$ and $\mu'\score_1 + (1-\mu')\score_2$, respectively, then $\tree[i] = \tree'[i]$. We can similarly subdivide each interval $I_1, \dots, I_T$ for a total of \[\bar{T} \leq 2^{(i-1)(i-2)/2}n^{i-1}\left(2^{i-1}(n-1)+1\right) \leq 2^{(i-1)(i-2)/2}n^{i-1}\left(2^{i-1}n\right) = 2^{i(i-1)/2}n^{i}\] intervals $\bar{I}_1, \dots, \bar{I}_{\bar{T}}$ partitioning $[0,1]$ such that for any interval $\bar{I}_t$, across all $\mu \in \bar{I}_t$ and any two parameters $\mu, \mu' \in \bar{I}_t$, if $\tree$ and $\tree'$ are the trees $A'$ builds using the scoring rules $\mu\score_1 + (1-\mu)\score_2$ and $\mu'\score_1 + (1-\mu')\score_2$, respectively, then $\tree[i] = \tree'[i]$.
\end{proof}

\subsection{Empirical Risk Minimization Algorithm} \label{app:ermalg}

In this section we describe an empirical risk minimization algorithm capable of
finding the best mixture of two variable selection scoring rules for a given set
of problem instances $\Pi^{(1)}, \dots, \Pi^{(m)}$. We modify the tree search
algorithm so that given a problem instance $\Pi$ and a mixing paramter $\mu \in
[0,1]$, the search algorithm keeps track of the largest interval $I \subset
[0,1]$ such that the behavior of the algorithm is identical to the current run
when run with any parameter $\mu' \in I$. With this, we can enumerate all
possible behaviors of the algorithm for a single instance $\Pi$ by running the
algorithm with $\mu = 0$, followed by the smallest value of $\mu$ that will give
a different outcome, and so on, until we have covered the entire interval
$[0,1]$. This procedure results in running the tree search algorithm on the
instance $\Pi$ exactly once for each possible behavior achievable across all
values of the parameter $\mu \in [0,1]$. By applying this algorithm to each
problem $\Pi^{(i)}$ for $i = 1, \dots, m$, we discover how the tree search
algorithm would perform on every instance for any value of the mixing parameter
$\mu$. This allows us to divide the interval $[0,1]$ into a finite number of
intervals on which $\cost$ is piecewise constant, and to compute the cost on
each interval.

To see why this additional book keeping is possible, suppose we are choosing
which variable to branch on in node $\node$ of tree $\tree$. We have two scoring
rules $\score_1$ and $\score_2$ that each rank the candidate variables in
$\node$, and when we run the algorithm with parameter $\mu$, we combine these
two scores as $(1-\mu)\score_1(\tree, \node, i) + \mu \score_2(\tree, \node,
i)$. For any parameter $\mu$ which results in the same variable having the
highest score, the variable chosen for branching in this node will be identical.
Let $i^*$ be the variable chosen by the algorithm when run with parameter $\mu$.
The set of all $\mu'$ for which $i^*$ is the variable of the highest score is an
interval (and its end points can be found by solving a linear equation to
determine the value of $\mu'$ for which some other variable overtakes $i^*$
under the mixed score). Also, for every parameter $\mu'$ outside of
this interval, the algorithm would indeed branch on a different node, resulting
in a different outcome of the tree search algorithm. By taking the intersections
of these intervals across all branching variable choices, we find the largest
subset of $[0,1]$ for which the algorithm would behave exactly the same, and
this subset is an interval. The overhead of this book keeping is only linear in
the number of candidate branch variables.

Psuedo-code for the ERM algorithm is given in Algorithm~\ref{alg:ermalg}.

\begin{algorithm}[tb]
\caption{ERM Algorithm}\label{alg:ermalg}
\begin{algorithmic}[1]
\Require Problem instances $\Pi^{(1)}, \dots, \Pi^{(m)}$, variable scoring rules $\score_1, \score_2$.
\State For each problem instance $\Pi^{(i)}$, compute the piecewise constant
$\cost$, as a function of the mixing parameter $\mu$. (See discussion in Section~\ref{app:ermalg}).
\State Compute the point-wise average of the resulting piecewise constant
functions.
\Ensure The $\mu^*$ in the interval minimizing the average cost.
\end{algorithmic}
\end{algorithm}

\section{Additional information about experiments}\label{app:experiments}

\paragraph{Facility location.}
Suppose there is a set $I$ of customers
and a set $J$ of facilities that have not yet been built. The facilities each
produce the same good, and each consumer demands one unit of that good. Consumer
$i$ can obtain some fraction $y_{ij}$ of the good from facility $j$, which costs
them $d_{ij}y_{ij}$. Moreover, it costs $f_j$ to construct facility $j$. The
goal is to choose a subset of facilities to construct while minimizing total
cost. We can model this
problem as a MILP by assigning a binary variable $x_j$ to each facility, which
represents whether or not it is built. The optimization is then over the
variables $x_j$ and $y_{ij}$, as follows.
\[
\begin{array}{lll}
\textnormal{minimize} & \sum_{j \in J} f_jx_j + \sum_{j \in J, i \in I} d_{ij}y_{ij} &\\
\textnormal{s.t.} & \sum_{j \in J} y_{ij} = 1 &\forall i \in I\\
&y_{ij} \leq x_j &\forall i \in I, j \in J\\
& x_j \in \{0,1\} &\forall j \in J\\
& y_{ij} \in [0,1] &\forall i \in I, j \in J.
\end{array}
\]

\paragraph{Clustering.} We can formulate $k$-means clustering as a MILP by
assigning a binary variable $x_i$ to each point $p_i$ where $x_i = 1$ if and
only if $p_i$ is a center, as well as a binary variable $y_{ij}$ for each pair
of points $p_i$ and $p_j$, where $y_{ij} = 1$ if and only if $p_j$ is a center
and $p_j$ is the closest center to $p_i$. We want to solve the following
problem:
\[
\begin{array}{lll}
\textnormal{min} & \sum_{i,j \in [n]} d\left(p_i,p_j\right)y_{ij} &\\
\textnormal{s.t.} & \sum_{i = 1}^n x_i = k &\\
& \sum_{j = 1}^n y_{ij} = 1 &\forall i \in [n]\\
& y_{ij} \leq x_j & \forall i,j \in [n]\\
& x_i \in \{0,1\} &\forall i \in [n]\\
& y_{ij} \in \{0,1\} &\forall i,j \in [n].
\end{array}
\]

\paragraph{Agnostically learning linear separators.} We can formulate this
problem as a MILP as follows. Let $M > \max \norm{\vec{p}_i}_1$.
\[
\begin{array}{lll}
\textnormal{min} & \sum_{i = 1}^n x_i &\\
\textnormal{s.t.} & z_i \left\langle\vec{p}_i, \vec{w}\right\rangle > -Mx_i &\forall i \in [n]\\
& w[i] \in [-1,1] &\forall i \in [n]\\
& x_i \in \{0,1\} &\forall i \in [n].
\end{array}
\]
Since $\left| \left \langle \vec{p}_i, \vec{w}\right\rangle\right| < M$, the
inequality $z_i \left\langle\vec{p}_i, \vec{w}\right\rangle > -Mx_i$ ensures
that if $z_i \left\langle\vec{p}_i, \vec{w}\right\rangle > 0$, then $x_i$ will
equal 0, but if $z_i \left\langle\vec{p}_i, \vec{w}\right\rangle \leq 0$, then
$x_i$ must equal 1.\footnote{In practice, we implement this constraint by
enforcing that $z_i \left\langle\vec{p}_i, \vec{w}\right\rangle \geq -Mx_i +
\gamma$ for some tiny $\gamma > 0$ since MILP solvers cannot enforce strict
inequalities.}

\begin{theorem}\label{thm:MILP_WCrad}
Let $\cost$ be a tree-constant cost function, let $\score_1$ and $\score_2$ be two path-wise scoring rules, and let $\sample$ be a set of $m$ problem instances over $n$ binary variables. Let $\mathcal{C}$ be the set of functions $\left\{\cost\left(\cdot, \mu\score_1 + (1-\mu)\score_2\right) : \mu \in [0,1]\right\}$. Then \[\erad(\mathcal{C}) \leq \kappa \sqrt{\frac{1}{m}\left(n^2 + 2n\log n + 2\log m\right)}.\]
\end{theorem}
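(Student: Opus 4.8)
The plan is to combine the structural result of Lemma~\ref{lem:induction} with the Rademacher bound of Theorem~\ref{thm:main_convex}. First I would fix the sample $\sample = \{Q_1, \dots, Q_m\}$ and apply Lemma~\ref{lem:induction} to each instance $Q_i$ separately: this yields a partition of $[0,1]$ into $T_i \le 2^{n(n-1)/2}n^n$ intervals such that across each interval the scoring rule $\mu\score_1 + (1-\mu)\score_2$ produces the same B\&B tree on $Q_i$, and hence, since $\cost$ is tree-constant, a constant value $\cost(Q_i, \mu\score_1 + (1-\mu)\score_2)$ as a function of $\mu$. Merging these $m$ partitions into a common refinement gives a partition of $[0,1]$ into at most $m(2^{n(n-1)/2}n^n - 1) + 1 \le m \cdot 2^{n(n-1)/2}n^n$ intervals, on each of which the entire vector $\left(\cost(Q_1, \mu\score_1+(1-\mu)\score_2), \dots, \cost(Q_m, \mu\score_1+(1-\mu)\score_2)\right)$ is constant. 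Consequently $\mathcal{C}(\sample)$ is finite, with $|\mathcal{C}(\sample)| \le m \cdot 2^{n(n-1)/2}n^n$.

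Next I would invoke Theorem~\ref{thm:main_convex}. Since every $f \in \mathcal{C}$ maps into $[-\kappa,\kappa]$, every $\vec{a} \in \mathcal{C}(\sample)$ satisfies $\|\vec{a}\|_2 \le \kappa\sqrt{m}$, so the theorem gives $\erad(\mathcal{C}) \le \inf_{\lambda>0}\left(\frac{\log|\mathcal{C}(\sample)|}{\lambda} + \frac{\lambda\kappa^2}{2m}\right)$. Choosing $\lambda = \frac{\sqrt{2m\log|\mathcal{C}(\sample)|}}{\kappa}$ balances the two terms and produces the standard Massart-type estimate $\erad(\mathcal{C}) \le \kappa\sqrt{\frac{2\log|\mathcal{C}(\sample)|}{m}}$.

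Finally, plugging in the cardinality bound and taking natural logarithms, $\log|\mathcal{C}(\sample)| \le \log m + \frac{n(n-1)}{2}\log 2 + n\log n \le \log m + \frac{n^2}{2} + n\log n$, where I use $\log 2 < 1$ together with $n(n-1) < n^2$. Hence $2\log|\mathcal{C}(\sample)| \le n^2 + 2n\log n + 2\log m$, and substituting this into the Massart bound yields exactly $\erad(\mathcal{C}) \le \kappa\sqrt{\frac{1}{m}\left(n^2 + 2n\log n + 2\log m\right)}$, as claimed.

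There is no deep difficulty here once Lemma~\ref{lem:induction} and Theorem~\ref{thm:main_convex} are available; the content is essentially bookkeeping. The one place that needs a little care is the merging step --- arguing that the common refinement of $m$ interval partitions of $[0,1]$ has at most $m$ times as many pieces, and that it is the full cost \emph{vector}, not just each individual tree, that is invariant on each piece --- together with tracking the base of the logarithm so that the $2^{n(n-1)/2}$ factor contributes only $n^2/2$ (via $\log 2 < 1$) rather than a larger quantity. The remaining optimization over $\lambda$ and the arithmetic simplification are routine.
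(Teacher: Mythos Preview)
Your proposal is correct and follows essentially the same route as the paper: apply Lemma~\ref{lem:induction} to each instance, merge the resulting partitions to bound $|\mathcal{C}(\sample)| \le m\cdot 2^{n(n-1)/2}n^n$, and then invoke Massart's finite lemma. The only cosmetic difference is that the paper cites Massart's lemma directly, whereas you rederive it by optimizing $\lambda$ in Theorem~\ref{thm:main_convex}; the content is the same.
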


\begin{proof} To prove this theorem, we rely on Lemma~\ref{lem:induction}, which tells us that for any problem instance $\Pi \in \sample$, there are $T \leq 2^{n(n-1)/2}n^n$ intervals $I_1, \dots, I_T$ partitioning $[0,1]$ where for any interval $I_j$, across all $\mu \in I_j$, the scoring rule $\mu\score_1 + (1-\mu)\score_2$ results in the same search tree.
Theorem~\ref{thm:MILP_WCrad} then follows from this lemma and Massart's lemma~\citep{Massart00:Some}, which tells us that for any class $\fclass$ of functions with range $[0, c]$, if $N= |\fclass(\sample)|$, then $\erad(\fclass) \leq c \sqrt{\frac{2\log N}{m}}.$ Lemma~\ref{lem:induction} allows us to apply Massart's lemma with $N \leq \left|2^{n^2/2}n^nm\right|$ and $c = \kappa$, since so long as the search tree is fixed for all $\mu$ in an interval $I$, we know that $\cost\left(\Pi, \mu\score_1 + (1-\mu)\score_2\right)$ is constant for all $\mu \in I$.\end{proof}
\section{Tree search}\label{app:TS}
\begin{algorithm}[t]
\caption{Tree search}\label{alg:TS}
\begin{algorithmic}[1]
   \Require A problem instance $\Pi = (X, D, f, g)$.
   \State Let $\tree$ be a tree that consists of a single node containing the empty partial solution $(\top, \dots, \top)$.
   \While {there remains an unfathomed leaf in $\tree$}
   \parState {Use a \emph{node selection policy} to select a leaf of the tree $\tree$. Let $\vec{y}$ be the partial solution contained in that leaf.}\label{step:begin_while_TS}
\State Use a \emph{variable selection policy} to choose a variable $x_i \in X$ to branch on at that leaf.
\State For all $j \in D_i$, let $\vec{y}^{(j)}_i$ be the partial solution $\vec{y}$ except with the component $y[i] = j$.
\parState {Create $|D_i|$ children of the node containing the partial solution $\vec{y}$, where the $j^{th}$ child contains the partial solution $\vec{y}^{(j)}_i$. Let $\tree'$ be the resulting search tree.}
\For {$j \in D_i$}\label{step:fathom_loop}
	\If {$\loc(\Pi, \vec{y}^{(j)}_i) = \fathom$}
		\State Update $\tree'$ so that the leaf containing the partial solution $\vec{y}^{(j)}_i$ is fathomed.
	\ElsIf {$\glo(\Pi, \vec{y}^{(j)}_i, \tree') = \fathom$}\label{step:if}
		\State Update $\tree'$ so that the leaf containing the partial solution $\vec{y}^{(j)}_i$ is fathomed.\label{step:then}
	\EndIf
\EndFor
\State Set $\tree = \tree'$.\label{step:end_while_TS}
  \EndWhile
 \Ensure The best known feasible solution $\vec{y}^*$, if one exists. Otherwise, return $\texttt{Null}$.
\end{algorithmic}
\end{algorithm}

A \emph{tree search algorithm} takes as input a tuple $\Pi = (X, D, f, g)$, where $X = \left\{x_1, \dots, x_n\right\}$ is a set of variables, $D = \{D_1, \dots, D_n\}$ is a set of domains where $D_i$ is the finite set of values variable $x_i$ can take on, $f: D_1\times \cdots \times D_n \to \{0,1\}$ is a feasibility function, and $g:D_1\times \cdots \times D_n \to \R$ is an objective function (if the problem is a satisfiability problem, rather than an optimization problem, we set $g$ to be the constant zero function). We use $\vec{y} \in (D_1 \cup \{\top\}) \times \cdots \times (D_n \cup \{\top\})$ to denote a partial solution to the problem instance $\Pi$, where $y[i]$ is an assignment of the variable $x_i$ and if $y[i] = \top$, it means the variable $x_i$ has not yet been assigned a value.

Tree search builds a tree of partial solutions to $\Pi$ until it finds the optimal solution.
We define two fathoming functions tree search can use to prune branches of the search tree, $\loc$ and $\glo$. The function $\loc(\Pi, \vec{y}) \in \{\fathom, \explore\}$ takes as input an instance $\Pi$ and a partial solution $\vec{y}$ and determines whether or not to fathom the node containing the partial solution $\vec{y}$. Its output is only based on the local information contained in the partial solution $\vec{y}$, not the remainder of the search tree.
For example, in MIP, $\loc(\Pi, \vec{y}) = \fathom$ if the LP relaxation of the MIP given the partial solution $\vec{y}$ is integral or infeasible.
 The function $\glo(\Pi, \vec{y}, \tree) \in \{\fathom, \explore\}$ takes as input an instance $\Pi$, a partial solution $\vec{y}$, and a partial search tree $\tree$ and determines whether or not to fathom the node in $\tree$ containing the partial solution $\vec{y}$.
 For example, in MIP, the function $\glo$ covers the case where a node is fathomed because LP relaxation's objective value given the partial solution contained in that node is no better than the objective value evaluated on the best known integral solution. In a bit more detail, suppose there is a fathomed leaf node in $\tree$ containing a partial solution $\vec{y}^*$ such that the LP relaxation of the MIP given the partial solution $\vec{y}^*$ is integral, and let $c^*$ be the objective value. Let $c$ be the objective value of the LP relaxation of the MIP given the partial solution $\vec{y}$. We know that $\glo(\Pi, \vec{y}, \tree) = \fathom$ if $c \leq c^*$.

See Algorithm~\ref{alg:TS} for the tree search pseudo-code.

\subsection{Problem statement}\label{sec:TS_statement}
The problem statement for general tree search is nearly identical to that in Section~\ref{sec:statement}. We state it hear for clarity's sake.

Let $\dist$ be a distribution over problem instances $\Pi$. Let $\score_1, \dots, \score_d$ be a set of variable selection scoring rules, such as those in Section~\ref{sec:CSP_VSP}. Our goal is to learn a convex combination $\mu_1\score_1+ \cdots + \mu_d\score_d$ of the scoring rules that is nearly optimal in expectation over $\dist$. More formally, let $\cost$ be an abstract cost function that takes as input a problem instance $\Pi$ and a scoring rule $\score$ and returns some measure of the quality of tree search using $\score$ on input $\Pi$.
We say that an algorithm $(\epsilon, \delta)$-learns a convex combination of the
$d$ scoring rules $\score_1, \dots, \score_d$ if for any distribution $\dist$,
with probability at least $1-\delta$ over the draw of a sample
$\left\{\Pi_1, \dots, \Pi_m\right\} \sim \dist^m$, the algorithm returns
a convex combination $\score = \hat{\mu}_1\score_1+ \cdots +
\hat{\mu}_d\score_d$ such that $\E_{\Pi \sim \dist}\bigl[\cost(\Pi,
\score)\bigr] - \E_{\Pi \sim \dist}\bigl[\cost(\Pi, \score^*)\bigr] \leq
\epsilon$, where $\score^*$ is the convex combination of $\score_1, \dots,
\score_d$ with minimal expected cost. In this work, we prove that only a small
number of samples is sufficient to ensure $(\epsilon, \delta)$-learnability.

We assume that the problem instances in the support of $\dist$ are over $n$ $D$-ary variables, for some $n, D \in \N$.\footnote{A variable is $D$-ary if it can take on at most $D$ distinct values.}

Our results hold for cost functions that are \emph{tree-constant}, which means that for any problem instance $\Pi$, so long as the scoring rules $\score_1$ and $\score_2$ result in the same search tree, $\cost(\Pi, \score_1) = \cost(\Pi, \score_2)$.
For example, the size of the search tree is tree-constant.

\subsection{Path-wise scoring rules}
We now slightly tweak the definition of \emph{score-based variable selection policies} and \emph{path-wise scoring rules} so that they apply to tree search more generally. The only difference is that a scoring rule will now be defined in terms of a partial solution, rather than a MILP.

\begin{definition}[Score-based variable selection policy]
Let $\score$ be a function that takes as input a partial search tree $\tree$, a partial solution $\vec{y}$ contained in a leaf of $\tree$, and an index $i$ and returns a real value ($\score(\tree, Q, i) \in \R$). For a partial solution $\vec{y}$ contained in a leaf of a tree $\tree$, let $N_{\tree, \vec{y}}$ be the set of variables that have not yet been branched on along the path from the root of $\tree$ to the leaf. A score-based variable selection policy selects the variable $\argmax_{x_j \in N_{\tree, \vec{y}}} \{\score(\tree, \vec{y}, j)\}$ to branch on at the node $Q$.
\end{definition}

\begin{definition}[Path-wise scoring rule]
Suppose $\vec{y}$ is a partial solution contained in the node of a search tree $\tree$. We say that $\score(\tree, \vec{y}, i)$ is a path-wise scoring rule if the value of $\score(\tree, \vec{y}, i)$ depends only on the node $\node$, the variable $x_i$, and the path from the root of $\tree$ to the node containing $\vec{y}$, which we denote as $\tree_{\vec{y}}$. Specifically, $\score(\tree, \vec{y}, i) = \score(\tree_{\vec{y}}, \vec{y}, i)$.
\end{definition}

The following lemma parallels Lemma~\ref{lem:induction}.

\begin{lemma}\label{lem:TS_induction}
Let $\cost$ be a tree-constant cost function, let $\score_1$ and $\score_2$ be two path-wise scoring rules, and let $\Pi$ be an arbitrary problem instance over $n$ $D$-ary variables. There are $T \leq D^{n(n-1)/2}n^n$ intervals $I_1, \dots, I_T$ partitioning $[0,1]$ where for any interval $I_j$, across all $\mu \in I_j$, the scoring rule $\mu\score_1 + (1-\mu)\score_2$ results in the same search tree.
\end{lemma}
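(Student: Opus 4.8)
The plan is to mimic the proof of Lemma~\ref{lem:induction} essentially verbatim, tracking the only place where the binary assumption was used: the count of children produced at each node. I would first introduce the auxiliary algorithm $A'$ that behaves exactly like tree search (Algorithm~\ref{alg:TS}) except that it only fathoms a node when $\loc(\Pi, \vec{y}) = \fathom$ — i.e., it ignores the global fathoming test $\glo$. As in the binary case, the argument splits into three parts: (1) bound the number of distinct trees $A'$ can build over $\mu \in [0,1]$ and show the corresponding partition of $[0,1]$ has the claimed size; (2) show that the tree search builds a rooted subtree of what $A'$ builds; and (3) combine these to conclude that on each interval of the $A'$-partition, the actual tree search produces an invariant tree.

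For part (1), I would redo the induction of Claim~\ref{claim:simple_BnB_intervals} with depth parameter $i$, maintaining the hypothesis that $\tree[i] = \tree'[i]$ across parameters in a common interval. The only change is bookkeeping: at depth $i-1$ there are now at most $D^{i-1}$ nodes instead of $2^{i-1}$, since each branch splits into $|D_j| \le D$ children. At each such node the choice of branching variable is, exactly as before, the $\argmax$ over at most $n$ linear functions of $\mu$ (using that $\score_1,\score_2$ are path-wise, so the scores depend only on the root-to-node path, which is fixed on the interval by the inductive hypothesis), contributing a subpartition of size at most $n$. Merging over the $\le D^{i-1}$ nodes and over the $\le D^{(i-1)(i-2)/2}n^{i-1}$ intervals from the previous level gives at most $D^{(i-1)(i-2)/2}n^{i-1}\cdot D^{i-1}n = D^{i(i-1)/2}n^i$ intervals; taking $i = n$ yields the bound $T \le D^{n(n-1)/2}n^n$. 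Parts (2) and (3) carry over with no arithmetic changes at all: Claim~\ref{claim:simple_BnB_subtree} only uses that $A'$ fathoms a subset of the nodes tree search fathoms (which holds because $\loc$ is exactly the ``local'' part of the fathoming in Algorithm~\ref{alg:TS}) together with path-wiseness, and the final contradiction argument uses that the node selection policy and the fathoming decisions given a fixed partial tree do not depend on $\mu$.

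The main obstacle — really the only subtlety — is making sure the relationship between $A'$ and general tree search is as clean as the relationship between $A'$ and B\&B was: I need $\loc$ to capture precisely the ``contents-of-this-node-only'' fathoming (so $A'$ is well-defined and deterministic given $\mu$) and $\glo$ to be the only source of additional fathoming, so that any node $A'$ fathoms is also fathomed by tree search. The excerpt's description of Algorithm~\ref{alg:TS} and the examples given for $\loc$ and $\glo$ in MIP make this explicit, so the argument of Claim~\ref{claim:simple_BnB_subtree} transfers: if the tree-search path $\tree_Q$ fails to embed in $A'$'s tree, the first obstruction is either a $\loc$-fathoming (then tree search would also have fathomed, contradiction) or a different branching choice (impossible since both run the same path-wise scoring rule on the same root-to-node path). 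One should also note — as the paper does for B\&B via Theorem~\ref{thm:NSP}-style reasoning — that everything is stated for a fixed but arbitrary node selection policy, which is unaffected by $\mu$, so no extra care is needed there. I would then write: ``The proof is identical to that of Lemma~\ref{lem:induction}, replacing each factor of $2$ in the node count at depth $i-1$ by $D$,'' and give the revised induction for Claim~\ref{claim:simple_BnB_intervals} in full, since that is the only place the change materializes.
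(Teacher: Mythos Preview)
Your proposal is correct and matches the paper's own proof essentially line for line: the paper introduces the auxiliary algorithm $TS'$ (your $A'$) that skips the $\glo$ test, proves the $D$-ary analogue of Claim~\ref{claim:simple_BnB_intervals} by the same depth induction with $D^{i-1}$ nodes at level $i-1$, proves the subtree claim exactly as you describe, and then combines them via the same contradiction argument. Your identification of the one substantive change (the factor $2 \to D$ in the level-wise node count) and the one subtlety (that $\loc$ captures precisely the tree-independent fathoming so that $A'$ fathoms only a subset of what Algorithm~\ref{alg:TS} fathoms) is exactly right.
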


\begin{proof}
We prove this lemma first by considering the actions of an alternative algorithm $TS'$ which runs exactly like Algorithm~\ref{alg:TS} except it does not use the function $\glo$, but rather skips Steps \ref{step:if} and \ref{step:then} of Algorithm~\ref{alg:TS}. We then relate the behavior of $TS'$ to the behavior of Algorithm~\ref{alg:TS} to prove the lemma.

First, we prove the following bound on the number of search trees $TS'$ will build on a given instance over the entire range of parameters. Note that this bound matches that in the lemma statement.

\begin{restatable}{claim}{claimIntervals}\label{claim:simple_TS_intervals}
There are $T \leq D^{n(n-1)/2}n^n$ intervals $I_1, \dots, I_T$ partitioning $[0,1]$ where for any interval $I_j$, the search tree $TS'$ builds using the scoring rule $\mu\score_1 + (1-\mu)\score_2$ is invariant across all $\mu \in I_j$.
\end{restatable}

\begin{proof}We prove this claim by induction.

\bigskip
\noindent\textbf{Inductive hypothesis.} For $i \in \{1,\dots, n\}$, there are $T \leq D^{i(i-1)/2}n^i$ intervals $I_1, \dots, I_T$ partitioning $[0,1]$ where for any interval $I_j$ and any two parameters $\mu, \mu' \in I_j$, if $\tree$ and $\tree'$ are the trees $TS'$ builds using the scoring rules $\mu\score_1 + (1-\mu)\score_2$ and $\mu'\score_1 + (1-\mu')\score_2$, respectively, then $\tree[i] = \tree'[i]$.

\bigskip

\noindent\textbf{Base case.} Before branching on any variables, the seasrch tree $\tree_0$ consists of a single root node containing the empty partial solution $\vec{y} = (\top, \dots, \top)$. Given a parameter $\mu$, $TS'$ will branch on variable $x_k$ so long as \[k = \argmax_{\ell \in [n]} \left\{\mu\score_1(\tree_0, \vec{y}, \ell) + (1-\mu)\score_2(\tree_0, \vec{y}, \ell)\right\}.\]
Since $\mu\score_1(\tree_0, \vec{y}, \ell) + (1-\mu)\score_2(\tree_0, \vec{y}, \ell)$ is a linear function of $\mu$ for each $\ell \in [n]$, we know that for any $k \in [n]$, there is at most one interval $I$ of the parameter space $[0,1]$
where $k = \argmax_{\ell \in [n]} \left\{\mu\score_1 + (1-\mu)\score_2\right\}$. Thus, there are $T \leq n = D^{1\cdot(1-1)/2}n^1$ intervals $I_1, \dots, I_T$ partitioning $[0,1]$ where for any interval $I_j$, $TS'$ branches on the same variable at the root node using the scoring rule $\mu\score_1 + (1-\mu)\score_2$ across all $\mu \in I_j$.

\bigskip

\noindent\textbf{Inductive step.} Let $i \in \{2, \dots, n\}$ be arbitrary. From the inductive hypothesis, we know that there are $T \leq D^{(i-2)(i-1)/2}n^{i-1}$ intervals $I_1, \dots, I_T$ partitioning $[0,1]$ where for any interval $I_j$ and any two parameters $\mu, \mu' \in I_j$, if $\tree$ and $\tree'$ are the trees $TS'$ builds using the scoring rules $\mu\score_1 + (1-\mu)\score_2$ and $\mu'\score_1 + (1-\mu')\score_2$, respectively, then $\tree[i-1] = \tree'[i-1]$. Consider an arbitrary node containing a partial solution $\vec{y}$ in $\tree[i-1]$ (or equivalently, $\tree'[i-1]$) at depth $i-1$. If $\loc(\Pi, \vec{y}) = \fathom$, then it will be fathomed no matter which parameter $\mu\in I_j$ the algorithm $TS'$ uses. Otherwise, for all $\mu \in I_j$, let $\tree_{\mu}$ be the state of the search tree $TS'$ builds using the scoring rule $\mu\score_1 + (1-\mu)\score_2$ at the point when it branches on the node containing $\vec{y}$. By the inductive hypothesis, we know that across all $\mu \in I_j$, the path from the root to this node in $\tree_{\mu}$ is invariant, and we refer to this path as $\tree_{\vec{y}}$. Given a parameter $\mu \in I_j$, the variable $x_k$ will be branched on at this node so long as $k= \argmax_{\ell}\left\{\mu\score_1(\tree_{\mu},\vec{y},\ell) + (1-\mu)\score_2(\tree_{\mu},\vec{y},\ell)\right\},$ or equivalently, so long as $k= \argmax_{\ell}\left\{\mu\score_1(\tree_{\vec{y}},\vec{y},\ell) + (1-\mu)\score_2(\tree_{\vec{y}},\vec{y},\ell)\right\}$. In other words, the decision of which variable to branch on is determined by a convex combination of the constant values $\score_1(\tree_{\vec{y}},\vec{y},\ell)$ and $\score_2(\tree_{\vec{y}},\vec{y},\ell)$ no matter which parameter $\mu \in I_j$ the algorithm $TS'$ uses. Here, we critically use the fact that the scoring rule is path-wise.

Since $\mu\score_1(\tree_{\vec{y}},\vec{y},\ell) + (1-\mu)\score_2(\tree_{\vec{y}},\vec{y},\ell)$ is a linear function of $\mu$ for all $\ell$, there are at most $n$ intervals subdividing the interval $I_j$ such that the variable branched on at the node containing $\vec{y}$ is fixed. Moreover, there are at most $D^{i-1}$ nodes at depth $i-1$, and each node similarly contributes a subpartition of $I_j$ of size $n$. If we merge all $D^{i-1}$ partitions, we have $T' \leq D^{i-1}(n-1)+1$ intervals $I_1', \dots, I_{T'}'$ partitioning $I_j$ where for any interval $I_p'$ and any two parameters $\mu, \mu' \in I_p'$, if $\tree$ and $\tree'$ are the trees $TS'$ builds using the scoring rules $\mu\score_1 + (1-\mu)\score_2$ and $\mu'\score_1 + (1-\mu')\score_2$, respectively, then $\tree[i] = \tree'[i]$. We can similarly subdivide each interval $I_1, \dots, I_T$ for a total of \[\bar{T} \leq D^{(i-1)(i-2)/2}n^{i-1}\left(D^{i-1}(n-1)+1\right) \leq D^{(i-1)(i-2)/2}n^{i-1}\left(D^{i-1}n\right) = D^{i(i-1)/2}n^{i}\] intervals $\bar{I}_1, \dots, \bar{I}_{\bar{T}}$ partitioning $[0,1]$ such that for any interval $\bar{I}_t$, across all $\mu \in \bar{I}_t$ and any two parameters $\mu, \mu' \in \bar{I}_t$, if $\tree$ and $\tree'$ are the trees $TS'$ builds using the scoring rules $\mu\score_1 + (1-\mu)\score_2$ and $\mu'\score_1 + (1-\mu')\score_2$, respectively, then $\tree[i] = \tree'[i]$.
\end{proof}

Next, we explicitly relate the behavior of Algorithm~\ref{alg:TS} to $TS'$, proving that the search tree Algorithm~\ref{alg:TS} builds is a rooted subtree of the search tree $TS'$ builds.

\begin{claim}\label{claim:simple_TS_subtree}
Given a parameter $\mu \in [0,1]$, let $\tree$ and $\tree'$ be the trees Algorithm~\ref{alg:TS} and $TS'$ build, respectively, using the scoring rule $\mu\score_1 + (1-\mu)\score_2$. For an arbitrary node of $\tree$, let $\vec{y}$ be the partial solution contained in that node and let $\tree_{\vec{y}}$ be the path from the root of $\tree$ to the node. Then $\tree_{\vec{y}}$ is a rooted subtree of $\tree'$.
\end{claim}

\begin{proof}[Proof of Claim~\ref{claim:simple_TS_subtree}]
Note that the path $\tree_{\vec{y}}$ can be labeled by a sequence of indices from $\{1, \dots, D\}$ and a sequence of variables from $\{x_1, \dots, x_n\}$ describing which variable is branched on and which value it takes on along the path $\tree_{\vec{y}}$. Let $((j_1, x_{i_1}), \dots, (j_t, x_{i_t}))$ be this sequence of labels, where $t$ is the number of edges in $\tree_{\vec{y}}$. We can similarly label every edge in $\tree'$. We claim that there exists a path beginning at the root of $\tree'$ with the labels $((j_1, x_{i_1}), \dots, (j_t, x_{i_t}))$.

For a contradiction, suppose no such path exists. Let $(j_{\tau}, x_{i_{\tau}})$ be the earliest label in the sequence $((j_1, x_{i_1}), \dots, (j_t, x_{i_t}))$ where there is a path beginning at the root of $\tree'$ with the labels $((j_1, x_{i_1}), \dots, (j_{\tau-1}, x_{i_{\tau - 1}}))$, but there is no way to continue the path using an edge labeled $(j_{\tau}, x_{i_{\tau}})$. There are exactly two reasons why this could be the case:
\begin{enumerate}
\item The node at the end of the path with labels $((j_1, x_{i_1}), \dots, (j_{\tau-1}, x_{i_{\tau - 1}}))$ was fathomed by $TS'$.
\item The algorithm $TS'$ branched on a variable other than $x_{i_{\tau}}$ at the end of the path labeled $((j_1, x_{i_1}), \dots, (j_{\tau-1}, x_{i_{\tau - 1}}))$.
\end{enumerate}

Let $\vec{y}'$ be the partial solution contained in the node at the end of the path with labels \[((j_1, x_{i_1}), \dots, (j_{\tau-1}, x_{i_{\tau - 1}})).\] We refer to this path as $\tree_{\vec{y}'}$.
In the first case, since $TS'$ only fathoms the node containing the partial solution $\vec{y}'$ if $\loc(\Pi, \vec{y}') = \fathom$, we know that Algorithm~\ref{alg:TS} will also fathom this node. However, this is not the case since Algorithm~\ref{alg:TS} next branches on the variable $x_{i_\tau}$.

The second case is also not possible since the scoring rules are both path-wise. Let $\bar{\tree}$ (respectively, $\bar{\tree}'$) be the state of the search tree Algorithm~\ref{alg:TS} (respectively, $A$') has built at the point it branches on the node containing $\vec{y}'$. We know that $\tree_{\vec{y}'}$ is the path from the root this node in both of the trees $\bar{\tree}$ and $\bar{\tree}'$. Therefore, for all variables $x_k$, $\mu \score_1(\bar{\tree}, \vec{y}', k) + (1-\mu)\score_2(\bar{\tree}, \vec{y}', k) = \mu \score_1(\tree_{\vec{y}'}, \vec{y}', k) + (1-\mu)\score_2(\tree_{\vec{y}'}, \vec{y}', k) = \mu \score_1(\bar{\tree}', \vec{y}', k) + (1-\mu)\score_2(\bar{\tree}', \vec{y}', k)$. This means that Algorithm~\ref{alg:TS} and $TS'$ will choose the same variable to branch on at the node containing the partial solution $\vec{y}'$.

 Therefore, we have reached a contradiction, so the claim holds.
\end{proof}

Next, we use Claims~\ref{claim:simple_TS_intervals} and \ref{claim:simple_TS_subtree} to prove Lemma~\ref{lem:TS_induction}. Let $I_1, \dots, I_T$ be one of the interval guaranteed to exist by Claim~\ref{claim:simple_TS_intervals} and let $I_t$ be an arbitrary one of the intervals. Let $\mu'$ and $\mu''$ be two arbitrary parameters from $I_t$. We will prove that the scoring rules $\mu'\score_1 + (1-\mu')\score_2$ and $\mu''\score_1 + (1-\mu'')\score_2$ result in the same search tree. For a contradiction, suppose that this is not the case. 
Consider the first iteration where of Algorithm~\ref{alg:TS} using the scoring rule $\mu'\score_1 + (1-\mu')\score_2$ differs from Algorithm~\ref{alg:TS} using the scoring rule $\mu''\score_1 + (1-\mu'')\score_2$, where
  an iteration corresponds to lines~\ref{step:begin_while_TS} through \ref{step:end_while_TS} of Algorithm~\ref{alg:TS}. Up until this iteration, Algorithm~\ref{alg:TS} has built the same partial search tree $\tree$. Since the node selection policy does not depend on $\mu'$ or $\mu''$, Algorithm~\ref{alg:TS} will choose the same leaf of the search tree to branch on no matter which scoring rule it uses. Let $\vec{y}$ be the partial solution contained in this leaf.

Suppose Algorithm~\ref{alg:TS} chooses different variables to branch on depending on the scoring rule. Let $\tree_{\vec{y}}$ be the path from the root of $\tree$ to the node containing the partial solution $\vec{y}$. By Claim~\ref{claim:simple_TS_intervals}, we know that the algorithm $TS'$ builds the same search tree using the two scoring rules. Let $\bar{\tree}'$ (respectively, $\bar{\tree}''$) be the state of the search tree $TS'$ has built using the scoring rule $\mu'\score_1 + (1-\mu')\score_2$ (respectively, $\mu''\score_1 + (1-\mu'')\score_2$) by the time it branches on the node containing the partial solution $\vec{y}$. By Claims~\ref{claim:simple_TS_intervals} and \ref{claim:simple_TS_subtree}, we know that $\tree_{\vec{y}}$ is the path of from the root to the node containing the partial solution $\vec{y}$ of both $\bar{\tree}'$ and $\bar{\tree}''$. By Claim~\ref{claim:simple_TS_intervals},
we know that $TS'$ will branch on the same variable $x_i$ at this node in both the trees $\bar{\tree}'$ and $\bar{\tree}''$, so $i = \argmax_{j} \left\{\mu'\score_1(\bar{\tree}', \vec{y}, j)  + (1-\mu')\score_2(\bar{\tree}', \vec{y}, j)\right\}$, or equivalently, \begin{equation}i = \argmax_{j} \left\{\mu'\score_1(\tree_{\vec{y}}, \vec{y}, j)  + (1-\mu')\score_2(\tree_{\vec{y}}, \vec{y}, j)\right\},\label{eq:one_prime_TS}\end{equation} and $i = \argmax_{j} \left\{\mu''\score_1(\bar{\tree}'', \vec{y}, j)  + (1-\mu'')\score_2(\bar{\tree}'', \vec{y}, j)\right\}$, or equivalently, \begin{equation}i = \argmax_{j} \left\{\mu''\score_1(\tree_{\vec{y}}, \vec{y}, j)  + (1-\mu'')\score_2(\tree_{\vec{y}}, \vec{y}, j)\right\}.\label{eq:two_primes_TS}\end{equation} Returning to the search tree $\tree$ that Algorithm~\ref{alg:TS} is building, Equation~\eqref{eq:one_prime_TS} implies that \[i = \argmax_{j} \left\{\mu'\score_1(\tree, \vec{y}, j)  + (1-\mu')\score_2(\tree, \vec{y}, j)\right\}\] and Equation~\eqref{eq:two_primes_TS} implies that $i = \argmax_{j} \left\{\mu''\score_1(\tree, \vec{y}, j)  + (1-\mu'')\score_2(\tree, \vec{y}, j)\right\}$.
Therefore, Algorithm~\ref{alg:TS} will branch on $x_i$ at the node containing the partial solution $\vec{y}$ no matter which scoring rule it uses.

Finally, since $\loc$ and $\glo$ do not depend on the parameter $\mu$, whether or not Algorithm~\ref{alg:TS} fathoms any of the nodes in Steps~\ref{step:fathom_loop} through \ref{step:then} does not depend on $\mu'$ or $\mu''$.

We have reached a contradiction by showing that the two iterations of Algorithm~\ref{alg:TS} are identical. Therefore, the lemma holds.
\end{proof}

\subsubsection{General scoring rules}\label{app:general_TS}

In this section, we provide generalization guarantees that apply to learning convex combinations of any set of scoring rules, as in Section~\ref{sec:general_scoring}. The following lemma corresponds to Lemma~\ref{lem:TS_induction} for this setting.

\begin{lemma}\label{lem:induction_general_TS}
Let $\cost$ be a tree-constant cost function, let $\score_1, \dots, \score_d$ be $d$ arbitrary scoring rules, and let $\Pi$ be an arbitrary problem instance over $n$ $D$-ary variables. Suppose we limit Algorithm~\ref{alg:TS} to producing search trees of
  size $\bar{\kappa}$. There is a set $\mathcal{H}$ of at most $n^{2(\bar{\kappa} + 1)}$ hyperplanes such that for any connected component $R$ of $[0,1]^d \setminus \mathcal{H}$, the search tree Algorithm~\ref{alg:TS} builds using the scoring rule $\mu_1\score_1 + \cdots + \mu_d\score_d$ is invariant across all $(\mu_1, \dots, \mu_d) \in R$.
\end{lemma}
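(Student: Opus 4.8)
The plan is to follow the two-step structure of the proof sketch of Lemma~\ref{lem:induction_general}, instantiated for Algorithm~\ref{alg:TS}; the only feature of the setting we use is that the node selection policy, $\loc$, and $\glo$ are all independent of the mixing vector $(\mu_1,\dots,\mu_d)$. The guiding observation is therefore that the only $\mu$-dependent choice Algorithm~\ref{alg:TS} ever makes is which variable to branch on, so the entire execution of Algorithm~\ref{alg:TS} on $\Pi$ — in particular the search tree it outputs — is completely determined by the sequence of branching-variable choices, taken in the fixed order in which the node selection policy visits the nodes. The first step is to bound the number of distinct output trees by $n^{\bar{\kappa}}$: since the tree size is capped at $\bar{\kappa}$, each run branches on at most $\bar{\kappa}$ nodes, at each such node the branching variable is one of at most $n$, and whether each child is fathomed is forced by $\loc$ and $\glo$ applied to the partial tree built so far, which is $\mu$-independent. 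An induction on the node-selection order shows the partial tree after the $t$-th branching step is a function of the first $t$ branching variables, so the output tree is a function of a string of length at most $\bar{\kappa}$ over an alphabet of size at most $n$; hence at most $n^{\bar{\kappa}}$ distinct trees arise as $(\mu_1,\dots,\mu_d)$ ranges over $[0,1]^d$. (Note $D$ never enters: the branching variable, not the number of children, is the only $\mu$-sensitive quantity.)

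The second step is, for a fixed achievable tree $\mathcal{T}$, to carve out the set of parameters producing it using few hyperplanes. List the branched-on nodes $Q_1,\dots,Q_T$ of $\mathcal{T}$ ($T\le\bar{\kappa}$) in node-selection order. Because $\mathcal{T}$ is fixed, the partial tree $\bar{\tree}_t$ present when the algorithm is about to branch on $Q_t$, the partial solution $\vec{y}_t$ at $Q_t$, and the variable $x_{i_t}$ that $\mathcal{T}$ branches on at $Q_t$ are all determined; hence each $\score_k(\bar{\tree}_t,\vec{y}_t,\ell)$ is a fixed real number. This is the one place where the general-versus-path-wise distinction matters — for arbitrary scoring rules the score at a node depends on the whole current tree, but conditioning on $\mathcal{T}$ makes that tree, and therefore all the scores, constants, so no auxiliary algorithm as in Lemma~\ref{lem:TS_induction} is needed. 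Algorithm~\ref{alg:TS} outputs $\mathcal{T}$ if and only if $x_{i_t}$ wins the combined score at every $Q_t$, i.e. if and only if
\[
\sum_{k=1}^{d}\mu_k\Bigl(\score_k(\bar{\tree}_t,\vec{y}_t,i_t)-\score_k(\bar{\tree}_t,\vec{y}_t,\ell)\Bigr)\ \ge\ 0
\]
for every $t\le T$ and every candidate variable $x_\ell$ that is unbranched on the root-to-$Q_t$ path — the ``only if'' is immediate and the ``if'' follows by induction on $t$, using that fathoming is $\mu$-independent. Each such constraint is a halfspace bounded by a hyperplane through the origin, and there are at most $\bar{\kappa}$ choices of $t$ and at most $n$ of $\ell$, hence at most $\bar{\kappa} n\le n^{\bar{\kappa}+2}$ bounding hyperplanes.

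To conclude, let $\mathcal{H}$ be the union over all at most $n^{\bar{\kappa}}$ achievable trees of the at most $n^{\bar{\kappa}+2}$ hyperplanes from the second step, so that $|\mathcal{H}|\le n^{\bar{\kappa}}\cdot n^{\bar{\kappa}+2}=n^{2(\bar{\kappa}+1)}$. On any connected component $R$ of $[0,1]^d\setminus\mathcal{H}$ no point lies on any hyperplane of $\mathcal{H}$, so every score difference appearing above has constant sign across $R$; thus at each node of each achievable tree the $\argmax$ selecting the branching variable is unique and constant on $R$. By the opening observation the sequence of branching choices, and hence the output tree, is the same for all $(\mu_1,\dots,\mu_d)\in R$, which is the lemma.

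I expect the delicate points — though both are routine inductions — to be (i) verifying in the first step that none of $\loc$, $\glo$, or the node selection policy secretly depends on $\mu$, so that the partial tree genuinely is a deterministic function of the branching choices made so far; and (ii) the ``if'' direction of the second step, where one must check that satisfying every halfspace inequality forces Algorithm~\ref{alg:TS} down the exact execution trace producing $\mathcal{T}$ — not merely that it would make the right local choice at each node, but that it actually reaches those nodes in the stated partial-tree states. These are where a careless argument would slip.
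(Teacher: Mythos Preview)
Your proposal is correct and follows essentially the same two-step approach as the paper: first bound the number of achievable trees by $n^{\bar\kappa}$ via the sequence of branching choices (your Step~1 is exactly Claim~\ref{claim:sequences_TS}), then for each fixed tree carve out its parameter region with linear inequalities (your Step~2 is Claim~\ref{claim:hyperplanes_TS}), and take the union. Your per-tree hyperplane count of $\bar\kappa n$ is slightly tighter than the paper's $\bar\kappa n^2$, but both fit under $n^{\bar\kappa+2}$ and lead to the same final bound.
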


\begin{proof}
  The proof has two steps. In Claim~\ref{claim:sequences_TS}, we show that there are at most $n^{\bar{\kappa}}$
  different search trees that Algorithm~\ref{alg:TS} might produce for the instance $\Pi$ as we
  vary the mixing parameter vector $(\mu_1, \dots, \mu_d)$. In Claim~\ref{claim:hyperplanes_TS}, for each of the possible search trees
  $\mathcal{T}$ that might be produced, we show that the set of parameter values
  $(\mu_1, \dots, \mu_d)$ which give rise to that tree lie in the intersection of $n^{\bar{\kappa} + 2}$ halfspaces. These facts together prove
  the lemma.
  
  \begin{claim}\label{claim:sequences_TS}
  There are only $n^{\bar{\kappa}}$ different search trees that can
  be achieved by varying the parameter vector $(\mu_1, \dots, \mu_d)$.
  \end{claim}

\begin{proof}[Proof of Claim~\ref{claim:sequences_TS}]
Fix any $d$ mixing parameters $(\mu_1, \dots, \mu_d)$ and
  let $v_1, \dots, v_{\bar{\kappa}} \in [n]$ be the sequence of branching variables
  chosen by Algorithm~\ref{alg:TS} run with scoring rule $\mu_1\score_1 + \cdots + \mu_d\score_d$, ignoring which node of
  the tree each variable was chosen for. That is, $v_1$ is the variable branched
  on at the root, $v_2$ is the variable branched on at the next unfathomed node
  chosen by the node selection policy, and so on. If Algorithm~\ref{alg:TS} with scoring rule
  $\mu_1\score_1 + \cdots + \mu_d\score_d$ produces a tree of size $k < \bar{\kappa}$, then define $v_t = 1$ for
  all $t \geq k$ (we are just padding the sequence $v_1, v_2, \dots$ so that it
  has length $\bar{\kappa}$). We will show that whenever two sets of mixing parameters $(\mu_1, \dots, \mu_d)$
  and $(\mu_1', \dots, \mu_d')$ give rise to the same sequence of branching variable selections,
  they in fact produce identical search trees. This will imply that the number
  of distinct trees that can be produced by Algorithm~\ref{alg:TS} with scoring rules of the
  form $\mu_1\score_1 + \cdots + \mu_d\score_d$ is at most $n^{\bar{\kappa}}$, since there are only $n^{\bar{\kappa}}$
  distinct sequences of $\bar{\kappa}$ variables $v_1, \dots, v_{\bar{\kappa}} \in [n]$.

  Let $(\mu_1, \dots, \mu_d)$
  and $(\mu_1', \dots, \mu_d')$ be two sets of mixing parameters, and suppose running Algorithm~\ref{alg:TS} with
  $\mu_1\score_1 + \cdots + \mu_d\score_d$ and $\mu_1'\score_1 + \cdots + \mu_d'\score_d$ both results in the sequence of branching
  variable decisions being $v_1$, \dots, $v_{\bar{\kappa}}$. We prove that the resulting
  search trees are identical by induction on the iterations of the algorithm, where
  an iteration corresponds to lines~\ref{step:begin_while_TS} through \ref{step:end_while_TS} of Algorithm~\ref{alg:TS}. Our base case is before the first iteration when the two trees are trivially
  equal, since they both contain just the root node. Now suppose that up until
  the beginning of iteration $t$ the two trees were identical. Since the two trees
  are identical, the node selection policy will choose the same node to branch
  on in both cases. In both trees, the
  algorithm will choose the same variable to branch on, since the sequence of
  branching variable choices $v_1, \dots, v_{\bar{\kappa}}$ is shared. Finally, if any of the children are fathomed, they will be
  fathomed in both trees, since the they are identical. It follows that
  all steps of Algorithm~\ref{alg:TS} maintain equality between the two trees, and the claim
  follows. Also, whenever the sequence of branching variables differ,
  then the search tree produced will not be the same. In particular, on the
  first iteration where the two sequences disagree, the tree built so far will be
  identical up to that point, but the next variable branched on will be
  different, leading to different trees.
\end{proof}

  Next, we argue that for any given search tree $\mathcal{T}$ produced by
  Algorithm~\ref{alg:TS}, the set of mixing parameters $(\mu_1, \dots, \mu_d)$ giving rise to $\mathcal{T}$ is defined by the intersection of $n^{\bar{\kappa} + 2}$ halfspaces.

\begin{claim}\label{claim:hyperplanes_TS}
For a given search tree $\tree$, there are at most $n^{\bar{\kappa} + 2}$ halfspaces such that Algorithm~\ref{alg:TS} using the scoring rule $\mu_1\score_1 + \cdots + \mu_d\score_d$ builds the tree $\tree$ if and only if $(\mu_1, \dots, \mu_d)$ lies in the intersection of those halfspaces.
\end{claim}  
  
  \begin{proof}[Proof of Claim~\ref{claim:hyperplanes_TS}]
  Let $v_1, \dots, v_{\bar{\kappa}}$ be the sequence of branching variable
  choices that gives rise to tree $\mathcal{T}$. We will prove the claim by
  induction on iterations completed by Algorithm~\ref{alg:TS}. Let $\tree_t$ be the state of Algorithm~\ref{alg:TS} after $t$ iterations.
  
  \medskip
\textbf{Induction hypothesis.} For a given index $t \in [\bar{\kappa}]$, there are at most $tn^2$ halfspaces such that Algorithm~\ref{alg:TS} using the scoring rule $\mu_1\score_1 + \cdots + \mu_d\score_d$ builds the partial tree $\tree_t$ after $t$ iterations if and only if $(\mu_1, \dots, \mu_d)$ lies in the intersection of those halfspaces.

\medskip
\textbf{Base case.}
    In the base case, before
  the first iteration, the set of parameters that will produce the partial search
  tree consisting of just the root is the entire set of parameters, which vacuously is the intersection of zero hyperplanes.
  
  \medskip\textbf{Inductive step.}
  For the
  inductive step, let $t < \bar{\kappa}$ be an arbitrary tree size. By the inductive
  hypothesis, we know that there exists a set $\mathcal{B}$ of at most $tn^2$ halfspaces such that Algorithm~\ref{alg:TS} using the scoring rule $\mu_1\score_1 + \cdots + \mu_d\score_d$ builds the partial tree $\tree_t$ after $t$ iterations if and only if $(\mu_1, \dots, \mu_d)$ lies in the intersection of those halfspaces. Let $\vec{y}$ be the partial solution contained in the next node that Algorithm~\ref{alg:TS} will
  branch on given $\tree_t$. We know that Algorithm~\ref{alg:TS} will choose to branch on
  variable $v_{t+1}$ at node $\node$ if and only if
  \begin{align*}
  &\mu_1\score_1(\tree, \vec{y},v_{t+1}) + \cdots + \mu_d\score_d(\tree, \vec{y},v_{t+1})\\
  > &\max_{v' \neq v_{t+1}}\left\{\mu_1\score_1(\tree, \vec{y},v') + \cdots + \mu_d\score_d(\tree, \vec{y},v')\right\}
  .\end{align*}
  Since these functions are linear in $(\mu_1, \dots, \mu_d)$, there are at most $n^2$ halfspaces defining the region where $v_{t + 1} = \argmax\left\{\mu_1\score_1(\tree, \vec{y},v') + \cdots + \mu_d\score_d(\tree, \vec{y},v')\right\}$. Let $\mathcal{B}'$ be this set of halfspaces. Algorithm~\ref{alg:TS} using the scoring rule $\mu_1\score_1 + \cdots + \mu_d\score_d$ builds the partial tree $\tree_t$ after $t$ iterations if and only if $(\mu_1, \dots, \mu_d)$ lies in the intersection of the $(t+1)n^2$ halfspaces in the set $\mathcal{B} \cup \mathcal{B}'$.
  \end{proof}
\end{proof}

\begin{theorem}\label{thm:MILP_WCpdim_arbitrary}
Let $\cost$ be a tree-constant cost function and let $\score_1, \dots, \score_d$ be $d$ arbitrary scoring rules. Suppose we limit Algorithm~\ref{alg:TS} to producing search trees of
  size $\bar{\kappa}$. Let $\mathcal{C}$ be the set of functions $\left\{\cost\left(\cdot, \mu\score_1 + \cdots + \mu_d\score_d\right) : (\mu_1, \dots, \mu_d) \in [0,1]^d\right\}$. Then $\pdim(\mathcal{C}) = O\left(d\bar{\kappa}\log n + d\log d\right).$
\end{theorem}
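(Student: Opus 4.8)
The plan is to mirror the proof of Theorem~\ref{thm:MILP_WCpdim}, using Lemma~\ref{lem:induction_general_TS} in place of Lemma~\ref{lem:induction} and replacing the ``count the intervals'' step with a standard bound on the number of regions in an arrangement of hyperplanes. Suppose $\pdim(\mathcal{C}) = m$ and let $\sample = \{\Pi_1, \dots, \Pi_m\}$ be a shatterable set, witnessed by targets $r_1, \dots, r_m \in \reals$. For each $\sample' \subseteq \sample$ there is a parameter vector $\vec{\mu}_{\sample'} \in [0,1]^d$ whose scoring rule $\mu_{\sample',1}\score_1 + \cdots + \mu_{\sample',d}\score_d$ places $\cost(\Pi_i, \cdot) \le r_i$ exactly for $\Pi_i \in \sample'$; hence, as $\sample'$ ranges over all $2^m$ subsets, the induced cost vectors $\bigl(\cost(\Pi_1, \vec{\mu}_{\sample'}), \dots, \cost(\Pi_m, \vec{\mu}_{\sample'})\bigr)$ are pairwise distinct. (I write $\cost(\Pi_i, \vec{\mu})$ as shorthand for $\cost$ evaluated at the scoring rule with mixture weights $\vec{\mu}$.)

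Next I would apply Lemma~\ref{lem:induction_general_TS} to each instance $\Pi_i$ individually, obtaining a set $\mathcal{H}_i$ of at most $n^{2(\bar{\kappa}+1)}$ hyperplanes such that the tree Algorithm~\ref{alg:TS} builds on $\Pi_i$ is fixed on each connected component of $[0,1]^d \setminus \mathcal{H}_i$; since $\cost$ is tree-constant, $\cost(\Pi_i, \cdot)$ is then constant on each such component. Let $\mathcal{H} = \bigcup_{i=1}^m \mathcal{H}_i$, an arrangement of at most $N := m\, n^{2(\bar{\kappa}+1)}$ hyperplanes. Any connected component $R$ of $[0,1]^d \setminus \mathcal{H}$ is connected and avoids every hyperplane of each $\mathcal{H}_i$, so $R$ is contained in a single connected component of $[0,1]^d \setminus \mathcal{H}_i$ for each $i$, and hence the entire cost vector $\bigl(\cost(\Pi_1, \cdot), \dots, \cost(\Pi_m, \cdot)\bigr)$ is constant on $R$. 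Thus at most one of the $2^m$ distinct cost vectors can be realized by parameters lying in any one region of $[0,1]^d \setminus \mathcal{H}$.

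It remains to bound the number of regions. Each region of $\reals^d \setminus \mathcal{H}$ is an intersection of open halfspaces, hence convex, so its intersection with the cube $[0,1]^d$ is connected; the number of connected components of $[0,1]^d \setminus \mathcal{H}$ is therefore at most the number of regions of an arrangement of $N$ hyperplanes in $\reals^d$, which is at most $\sum_{k=0}^{d} \binom{N}{k} \le (N+1)^d$. Combining the two bounds gives $2^m \le (N+1)^d = \bigl(m\, n^{2(\bar{\kappa}+1)} + 1\bigr)^d$, so (taking $\log_2$ and using $m\,n^{2(\bar{\kappa}+1)} \ge 1$) we get $m \le d\log_2\!\bigl(2m\, n^{2(\bar{\kappa}+1)}\bigr) = d\log_2 m + d + 2d(\bar{\kappa}+1)\log_2 n$. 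Applying Lemma~\ref{lem:log_ineq} with $a = d$ and $b = d + 2d(\bar{\kappa}+1)\log_2 n$ yields $m = O\!\bigl(d\log d + d\bar{\kappa}\log n\bigr)$, which is exactly the claimed bound $\pdim(\mathcal{C}) = O\bigl(d\bar{\kappa}\log n + d\log d\bigr)$.

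The heavy lifting is done by Lemma~\ref{lem:induction_general_TS}, which is already established; the remaining care-points are bookkeeping. The first is checking that passing to the union arrangement $\mathcal{H}$ keeps every individual instance's tree fixed on a component, which is immediate from connectedness as above. The second, and the closest thing to an ``obstacle,'' is correctly converting the transcendental inequality $2^m \le (N+1)^d$ into a clean closed form; Lemma~\ref{lem:log_ineq} is designed precisely for this, and the only subtlety is choosing the constants $a,b$ so that a $d\log d$ term emerges rather than a spurious $d\log n$ or $d\log\bar{\kappa}$ term. No combinatorial insight beyond that used for Theorem~\ref{thm:MILP_WCpdim} is required.
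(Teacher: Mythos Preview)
Your proposal is correct and follows essentially the same route as the paper's proof: both apply Lemma~\ref{lem:induction_general_TS} to each instance, take the union of the resulting hyperplane arrangements, bound the number of regions (you use $(N+1)^d$, the paper uses $O(dN^d)$, both fine), and finish via Lemma~\ref{lem:log_ineq}. Your treatment is in fact slightly more careful on two minor points---phrasing the counting in terms of distinct cost vectors rather than $|M|=2^m$, and explicitly noting why intersecting with the cube preserves the region count---but there is no substantive difference in approach.
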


\begin{proof} Suppose that $\pdim(\mathcal{C}) = m$ and let $\sample = \left\{\Pi_1, \dots, \Pi_m\right\}$ be a shatterable set of problem instances. We know there exists a set of targets $r_1, \dots, r_m \in \R$ that witness the shattering of $\sample$ by $\mathcal{C}$. This means that for every $\sample' \subseteq \sample$, there exists a parameter vector $\left(\mu_{1,\sample'}, \dots, \mu_{d, \sample'}\right)$ such that if $\Pi_i \in \sample'$, then $\cost\left(\Pi_i, \mu_{1,\sample'}\score_1 + \cdots + \mu_{d, \sample'}\score_d\right) \leq r_i$. Otherwise \[\cost\left(\Pi_i, \mu_{1,\sample'}\score_1 + \cdots + \mu_{d, \sample'}\score_d\right) > r_i.\] Let $M = \left\{\left(\mu_{1,\sample'}, \dots, \mu_{d, \sample'}\right) : \sample' \subseteq \sample\right\}$. We will prove that  $|M| = O(dm^dn^{2d(\bar{\kappa} + 1)})$, and since $2^m = |M|$, this means that $\pdim(\mathcal{C}) = m = O\left(d\bar{\kappa}\log n + d\log d\right)$ (see Lemma~\ref{lem:log_ineq} in Appendix~\ref{app:theory}).

To prove that $|M| \leq mn^{\bar{\kappa}} + 1$, we rely on Lemma~\ref{lem:induction_general_TS}, which tells us that for any problem instance $\Pi$, there is a set $\mathcal{H}$ of at most $T \leq n^{2(\bar{\kappa} + 1)}$ hyperplanes such that for any connected component $R$ of $[0,1]^d \setminus \mathcal{H}$, the search tree Algorithm~\ref{alg:TS} builds using the scoring rule $\mu_1\score_1 + \cdots + \mu_d\score_d$ is invariant across all $(\mu_1, \dots, \mu_d) \in R$. If we merge all $T$ hyperplanes for all samples in $\sample$, we are left with a set $\mathcal{H}'$ of $T' \leq mn^{2(\bar{\kappa} + 1)}$ hyperplanes where for any connected component $R$ of $[0,1]^d \setminus \mathcal{H}'$ and any $\Pi_i \in \sample$, the search tree Algorithm~\ref{alg:TS} builds using the scoring rule $\mu_1\score_1 + \cdots + \mu_d\score_d$ given as input $\Pi_i$ is invariant across all $(\mu_1, \dots, \mu_d) \in R$. Therefore, at most one element of $M$ can come from each connected component, of which there are $O(d|\mathcal{H}'|^d) = O(dm^dn^{2d(\bar{\kappa} + 1)})$. Therefore, $|M| = O(dm^dn^{2d(\bar{\kappa} + 1)})$, as claimed.
\end{proof}

Naturally, if the function $\cost$ measures the size of the search tree Algorithm~\ref{alg:TS} returns capped at some value $\kappa$ as described in Section~\ref{sec:statement}, we obtain the following corollary.

\begin{cor}
Let $\cost$ be a tree-constant cost function, let $\score_1, \dots, \score_d$ be $d$ arbitrary scoring rules, and let $\kappa \in \N$ be an arbitrary tree size bound. Suppose that for any problem instance $\Pi$, $\cost(\Pi, \mu\score_1 + \cdots + \mu_d\score_d)$ equals the minimum of the following two values: 1) The number of nodes in the search tree Algorithm~\ref{alg:TS} builds using the scoring rule $\mu\score_1 + \cdots + \mu_d\score_d$ on input $\Pi$; and 2) $\kappa$. For any distribution $\dist$ over problem instances $\Pi$ with at most $n$ variables, with probability at least $1-\delta$ over the drawn $\{\Pi^{(1)}, \dots, \Pi^{(m)}\} \sim \dist^m,$ for any $\mu \in [0,1]$, \begin{align*}&\left|\E_{\Pi \sim \dist} [\cost(\Pi, \mu\score_1 + \cdots + \mu_d\score_d)] - \frac{1}{m} \sum_{i = 1}^m \cost(\Pi^{(i)}, \mu\score_1 + \cdots + \mu_d\score_d)\right|\\
= \text{ }&O\left(\sqrt{\frac{d\kappa^2(\kappa\log n + \log d)}{m}} + \kappa \sqrt{\frac{\ln(1/\delta)}{m}}\right).\end{align*}
\end{cor}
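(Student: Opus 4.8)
The plan is to combine the pseudo-dimension bound of Theorem~\ref{thm:MILP_WCpdim_arbitrary} with the generic pseudo-dimension-based generalization guarantee of Theorem~\ref{thm:gen_guarantee}. Before doing so I would record two bookkeeping facts. First, the capped cost function is eligible for both results: since $\cost(\Pi,\cdot)$ is the minimum of the search-tree size and the constant $\kappa$, its range lies in $[0,\kappa]\subseteq[-\kappa,\kappa]$, and it is tree-constant because tree size is tree-constant and truncating a tree-constant function preserves tree-constancy (two scoring rules that produce the same search tree give the same uncapped value, hence the same capped value). Second, capping the tree size at $\kappa$ is precisely the restriction ``limit Algorithm~\ref{alg:TS} to producing search trees of size $\bar{\kappa}$'' with $\bar{\kappa}=\kappa$.

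With these observations, Theorem~\ref{thm:MILP_WCpdim_arbitrary} applies directly and gives $\pdim(\mathcal{C}) = O(d\kappa\log n + d\log d)$ for the class $\mathcal{C} = \{\cost(\cdot,\mu_1\score_1 + \cdots + \mu_d\score_d) : (\mu_1,\dots,\mu_d)\in[0,1]^d\}$. I would then invoke Theorem~\ref{thm:gen_guarantee} with this class, abstract domain equal to the space of problem instances, sample $\{\Pi^{(1)},\dots,\Pi^{(m)}\}\sim\dist^m$, and range parameter $\kappa$. That theorem yields, with probability at least $1-\delta$, that every $f\in\mathcal{C}$ satisfies
\[
\left|\E_{\Pi\sim\dist}[f(\Pi)] - \frac{1}{m}\sum_{i=1}^m f(\Pi^{(i)})\right| = O\!\left(\kappa\sqrt{\frac{\pdim(\mathcal{C})}{m}} + \kappa\sqrt{\frac{\ln(1/\delta)}{m}}\right).
\]
Substituting the pseudo-dimension bound and pulling the leading factor of $\kappa$ inside the square root turns the first term into $\sqrt{d\kappa^2(\kappa\log n + \log d)/m}$, which is exactly the claimed expression. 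Since each function $\cost(\cdot,\mu\score_1 + \cdots + \mu_d\score_d)$ is a member of $\mathcal{C}$, the uniform statement over $\mathcal{C}$ specializes to the ``for any $\mu$'' phrasing of the corollary.

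I do not expect a genuine technical obstacle here, as the corollary is a direct specialization of two already-established results; the only care needed is in the two verification steps above—confirming the truncated cost is tree-constant with range contained in $[-\kappa,\kappa]$ so that the hypotheses of Theorems~\ref{thm:MILP_WCpdim_arbitrary} and~\ref{thm:gen_guarantee} are met, and correctly identifying $\bar{\kappa}$ with $\kappa$ and then carrying the $\kappa$ factor through the algebra so the final big-$O$ matches the stated form. I would present these as short lemmas-in-passing and then finish with the one-line combination of the two theorems.
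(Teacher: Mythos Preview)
Your proposal is correct and follows exactly the approach the paper intends: the corollary is stated without proof as a direct consequence of Theorem~\ref{thm:MILP_WCpdim_arbitrary} (with $\bar{\kappa}=\kappa$) combined with the generic generalization guarantee of Theorem~\ref{thm:gen_guarantee}. Your additional bookkeeping checks (tree-constancy of the capped cost and the range bound) are the right hypothesis verifications and are implicit in the paper's presentation.
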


\end{document}